\documentclass{article}

\newif\ifpreprintmode
\preprintmodetrue
\PassOptionsToPackage{numbers, sort&compress}{natbib}
\ifpreprintmode
  \usepackage[preprint]{neurips_2026}
  \makeatletter
  \renewcommand{\@noticestring}{}
  \makeatother
\else
  \usepackage{neurips_2026}

\fi

\ifpreprintmode
  \newcommand{\repomain}{https://github.com/orineo1/conditional-matching-paper}
  \newcommand{\reposim}{https://github.com/orineo1/conditional-matching-paper/tree/main/simulations}
  \newcommand{\repomnist}{https://github.com/orineo1/conditional-matching-paper/tree/main/MNIST}
  \newcommand{\reposd}{https://github.com/orineo1/conditional-matching-paper/tree/main/SD_cond_SD_controlnet}
  \newcommand{\repoeps}{https://github.com/orineo1/conditional-matching-paper/blob/main/SD_cond_SD_controlnet/notebooks/eps_g_experiment.ipynb}
  \newcommand{\hfmnist}{https://huggingface.co/Orineo/Inverse-Design-Conditional-Distribution-Matching/tree/main/MNIST}
  \newcommand{\hfsim}{https://huggingface.co/Orineo/Inverse-Design-Conditional-Distribution-Matching/tree/main/simulations/checkpoints}
\else
  \newcommand{\repomain}{https://anonymous.4open.science/r/conditional-matching-paper-AE20/README.md}
  \newcommand{\reposim}{https://anonymous.4open.science/r/conditional-matching-paper-AE20/simulations/README.md}
  \newcommand{\repomnist}{https://anonymous.4open.science/r/conditional-matching-paper-AE20/MNIST/README.md}
  \newcommand{\reposd}{https://anonymous.4open.science/r/conditional-matching-paper-AE20/SD_cond_SD_controlnet/README.md}
  \newcommand{\repoeps}{https://anonymous.4open.science/r/conditional-matching-paper-AE20/SD_cond_SD_controlnet/notebooks/eps_g_experiment.ipynb}
  \newcommand{\hfmnist}{https://huggingface.co/anon-submission-cdm/cdm-inverse-design/tree/main/MNIST}
  \newcommand{\hfsim}{https://huggingface.co/anon-submission-cdm/cdm-inverse-design/tree/main/simulations/checkpoints}
\fi

\newcommand{\methodname}{MLGD-F}

\usepackage[utf8]{inputenc} 
\usepackage[T1]{fontenc}    
\usepackage{hyperref}       
\usepackage{url}            
\usepackage{booktabs}       
\usepackage{amsfonts}       
\usepackage{nicefrac}       
\usepackage{microtype}      
\usepackage{xcolor}         

\usepackage{amsmath}
\usepackage{amsthm}
\usepackage{amssymb}
\newtheorem{problem}{Problem}
\newtheorem{proposition}{Proposition}
\newtheorem{corollary}{Corollary}
\theoremstyle{definition}
\newtheorem{assumption}{Assumption}
\newtheorem{remark}{Remark}
\theoremstyle{plain}
\usepackage{algorithm}
\usepackage{algpseudocode}
\usepackage{subcaption}
\usepackage{graphicx}
\usepackage{multirow}
\usepackage{threeparttable} 
\usepackage[normalem]{ulem}

\usepackage[colorinlistoftodos]{todonotes}
\usepackage{placeins}

\title{Inverse Design for Conditional Distribution Matching}

\author{%
  Ori Meidler \quad Shaul Tolkovsky \quad Or Zuk \\
  Department of Statistics and Data Science \\
  Hebrew University of Jerusalem \\
  \texttt{\{ori.meidler, shaul.tolkovsky, or.zuk\}@mail.huji.ac.il}
}

\begin{document}

\maketitle

\begin{abstract}
Generative models are powerful tools for sampling from a learned distribution $\mathcal{P}(Y \mid X)$, and inverse-design methods invert this map to find an input $x$ that produces a desired \emph{point} output $y^*$. However, many design goals are naturally distributional rather than pointwise, incorporating the inherent uncertainty of $Y$ and targeting a specific form for it, a task not addressed by standard inverse design. 
To address this issue we introduce \textbf{Conditional Distribution Matching
(CDM)}\footnote{The phrase "conditional distribution matching" is also used in the domain-adaptation literature~\citep{TachetDesCombes2020GLS}
with a different meaning (aligning source and target conditionals
across datasets); our use is an inverse-design problem over a fixed generative process. See Section~\ref{sec:background} for details.},
a new inverse-design problem class in generative modeling: given a joint distribution $\mathcal{P}(X, Y)$ and a target distribution $\mathcal{G}(Y)$,
find an input $x^*$ whose induced conditional
distribution $\mathcal{P}(Y \mid X = x^*)$ matches $\mathcal{G}$. We formally define two variants: \emph{Conditional Distribution Matching Sampling} (CDMS) and 
\emph{Conditional Distribution Matching Optimization} (CDMO). To solve these problems, we propose \textbf{\methodname{}} (\emph{Matching-Loss Guided Diffusion
with a Fast inner sampler}), a plug-and-play inference-time algorithm that combines a pretrained score-based diffusion model with a pretrained fast conditional sampler, requiring no additional training or fine-tuning. By leveraging single-step conditional sampling, \methodname{} enables tractable gradient computation, making the estimation of $\mathcal{P}(Y \mid X)$
both memory-efficient and computationally lightweight. We validate \methodname{} on synthetic benchmarks, structured image transformations, and generative editing optimization, demonstrating reliable recovery of inputs whose conditional distributions match diverse user-specified targets, including discrete mixtures and continuous low-rank supports.
\end{abstract}

\section{Introduction}
\label{sec:introduction}

Generative models have emerged as a cornerstone of modern machine learning,
achieving remarkable results in image synthesis, voice generation, and
time-series modeling~\cite{DiffusionModelforImageGenerationSurvey,
zhang2023surveyaudiodiffusionmodels, su2025diffusionmodelstimeseries}.
The paradigm has evolved from \emph{unconditional generation}, learning
to sample from an unknown data distribution $\mathcal{P}(X)$~\cite{DDPMsohldickstein,DDPMsong,DDIMSong}, toward
increasingly controlled generation. \emph{Conditional generation} allows
sampling $x_0 \sim \mathcal{P}(X \mid Y = y)$ for a user-specified label
$y$~\cite{classifierGuidance, classifierfree}, \emph{loss-guided
methods}~\cite{DiffusionPosteriorSampling, lossGuidedDiffusion} generate
samples that minimize a given objective $\mathcal{L}: \mathcal{X} \to
\mathbb{R}$ while remaining faithful to the data distribution. A natural instantiation of loss-guided generation is \emph{inverse design}: 
find an input $x$ such that a forward learned model $f(x)$ is close to a desired 
\emph{point} output $y^*$~\cite{inversedesing}, successfully applied in 
protein engineering~\cite{inverseProtienEng} and drug 
discovery~\cite{inverseDrug}. However, this paradigm is inherently limited 
to point targets: it cannot express goals where the desired outcome is a 
\emph{distribution} over outputs rather than a single value.

In many modern settings the mapping from input to output is mediated
by a complex generative process $\mathcal{P}(Y\mid X)$ that we cannot
modify: a pretrained diffusion model with billions of parameters, a
physical process such as protein folding, or a black-box simulator.
The only lever the user controls is the input $x$. Classical inverse
design uses this lever to match a specific target \emph{output}
$y^*$: find $x$ such that $f(x) \approx y^*$. In many
applications, however, the desired outcome is not a single value but
a \emph{distribution} over outcomes, for instance, an amino acid 
sequence $x$ such that $\mathcal{P}(\text{structure} \mid X = x) 
\approx \frac{1}{2}\delta_{A} + \frac{1}{2}\delta_{B}$ for two target 
folds $A$ and $B$~\cite{Ha2012ProteinSwitches}, generative pipelines
whose outputs are balanced across demographic groups, or design tasks
that require diverse but constrained outputs. We address the more general \emph{distributional}
inverse-design problem: accept any frozen $\mathcal{P}(Y\mid X)$, 
optimise the controllable input $x$, and target a user-specified 
\emph{distribution} $\mathcal{G}$ over $Y$ rather than a point $y^*$, 
without relying on domain-specific scoring functions.

\paragraph{Our contributions.}
\begin{enumerate}
    \item \textbf{Problem formulation.} We define two
    problems, \emph{Conditional Distribution Matching Sampling}
    (Problem~\ref{prob:sampling}) and \emph{Conditional Distribution
    Matching Optimization} (Problem~\ref{prob:opt}), which move beyond
    characterizing samples by their own properties, instead seeking $x^*$
    whose \emph{induced conditional distribution} $\mathcal{P}(Y \mid X =
    x^*)$ matches a user-specified target $\mathcal{G}(Y)$.

    \item \textbf{Algorithm.} \methodname{} combines a pretrained score-based
    diffusion model with a pretrained fast conditional sampler to
    enable tractable gradient-based optimisation through single-step
    conditional sampling, operating entirely at inference time with no
    fine-tuning.

    \item \textbf{Empirical validation.} We validate \methodname{} on synthetic 
    Mixture of Gaussians, an MNIST rotation task, and a 
    large-scale Stable Diffusion image editing pipeline, demonstrating 
    recovery of inputs whose induced conditional distributions 
    match user-specified targets.
\end{enumerate}

\section{Problem Formulation}
\label{sec:problem}

We consider a joint distribution $\mathcal{P}(X, Y)$ over paired random variables,
where $X$ denotes a variable of interest and $Y$ an associated variable. Let $\|\cdot\|$
denote a distance metric between probability distributions (e.g., the Maximum
Mean Discrepancy, MMD, or the Sliced Wasserstein Distance, SWD).
 \begin{problem}[Conditional Distribution Matching Sampling]
\label{prob:sampling}
\mbox{}\\
Given a target distribution $\mathcal{G}(Y)$, the \emph{ideal} target is the
tilted distribution
\[
  \mathcal{Q}_\beta(x) \propto \mathcal{P}(x)\,e^{-\beta\,\mathcal{L}(x)},
  \quad \text{where} \quad \mathcal{L}(x) = \|\mathcal{P}(Y \mid X = x) - \mathcal{G}(Y)\|,
\]
with $\beta\!\ge\!0$ an inverse temperature interpolating between the prior
($\beta\!=\!0$) and the optimum of $\mathcal{L}$ ($\beta\!\to\!\infty$). Exact
sampling from $\mathcal{Q}_\beta$ requires an intractable noisy-time guidance
score; our algorithm (Section~\ref{sec:methods}) provides an LGD-style
approximate sampler, analysed in Remark~\ref{rem:lgd-approx} of
Appendix~\ref{app:theory}.

\end{problem}

\begin{problem}[Conditional Distribution Matching Optimization]
\label{prob:opt}
\mbox{}\\
Find input $x^*$ that minimizes the distance between the induced
conditional distribution and the target:
\[
  x^* \;\equiv\; \arg\min_{x}\;\|\mathcal{P}(Y \mid X = x) - \mathcal{G}(Y)\|.
\]
\end{problem}
Problem~\ref{prob:opt} corresponds to Problem~\ref{prob:sampling} in the
limit $\beta \to \infty$, where $\mathcal{Q}(x)$ concentrates at the mode of the loss.
Both problems are instances of loss-guided
sampling~\cite{lossGuidedDiffusion} specialised to the distributional loss
$\mathcal{L}(x) = \|\mathcal{P}(Y \mid X = x) - \mathcal{G}(Y)\|$.

\section{Background and Related Work}
\label{sec:background}

\paragraph{Diffusion models.}
Denoising Diffusion Probabilistic Models (DDPM)~\cite{DDPMsohldickstein,DDPMsong} define a
forward process that gradually adds Gaussian noise to data $x_0$ over $T$ steps,
producing a sequence $x_0, x_1, \ldots, x_T$, and learn a reverse process
that denoises $x_T \sim \mathcal{N}(0,I)$ back to a sample $x_0$ from the data
distribution. Throughout we use this notation: $x_t$ is the noisy state at
timestep $t$; the forward marginal has the closed form
$q(x_t \mid x_0) = \sqrt{\bar{\alpha}_t}\,x_0 + \sqrt{1-\bar{\alpha}_t}\,\epsilon$
with $\epsilon \sim \mathcal{N}(0, I)$ and noise schedule
$\{\bar{\alpha}_t\}_{t=1}^{T}$; $s_\theta(x_t, t)$ denotes the learned score network of the unconditional model for $\mathcal{P}(X)$; and
$\hat{x}_0 \!=\! (x_t + (1-\bar{\alpha}_t) s_\theta(x_t, t)) / \sqrt{\bar{\alpha}_t}$
is Tweedie's estimate of the clean sample given $x_t$. Score-based
models~\cite{scoreSDE} provide a unifying SDE framework for this family. A
deterministic alternative to ancestral sampling is the probability flow ODE
(used via the DDIM sampler~\cite{DDIMSong}), which traces a deterministic
trajectory through the same marginals and enables faster sampling with fewer
steps.

\paragraph{Few-step diffusion models.}
Standard diffusion models require $30$--$1000$ sequential denoising steps
at inference time~\cite{DDPMsong,DDIMSong}, motivating research into faster
inference strategies. One direction is consistency models~\cite{ConsistencyModelts1,
ConsistencyModelts2}, which enforce self-consistency along the probability-flow
ODE trajectory without requiring a pretrained teacher. Another direction is
\emph{diffusion distillation}, which compresses a pretrained diffusion model
into a student that generates samples in one or very few
steps~\cite{salimans2022progressivedistillationfastsampling,
sauer2023adversarialdiffusiondistillation}. We exploit the single-step property
common to both families to make inner-loop conditional sampling tractable at
every denoising step of the outer diffusion process.

\paragraph{Loss-guided sampling.}
Diffusion Posterior Sampling (DPS)~\cite{DiffusionPosteriorSampling} and
Loss-Guided Diffusion (LGD)~\cite{lossGuidedDiffusion} modify the reverse
diffusion process by incorporating gradient information from a differentiable
loss function $\mathcal{L}(x_0)$. At each denoising step, the score is
augmented by $\nabla_{x_t} \mathcal{L}(\hat{x}_0)$, where $\hat{x}_0$ is
an estimate of the clean sample obtained via Tweedie's formula. LGD further
stabilises the gradient by averaging over multiple Monte Carlo Tweedie
estimates. This framework enables flexible, zero-shot guidance for a wide range of objectives
without retraining the generative prior.

\paragraph{Distribution matching.}
A large family of generative methods trains a parametric generator so that
its output distribution matches a target $\mathcal{G}$, using losses based
on Maximum Mean Discrepancy~\cite{li2015generativemomentmatchingnetworks, li2017mmdgandeeperunderstanding},
Optimal Transport / Sinkhorn divergences~\cite{genevay2017learninggenerativemodelssinkhorn},  These approaches require differentiating
through the generator and updating its weights, making them expensive and
target-specific: a new $\mathcal{G}$ requires retraining.
Our setting is orthogonal: the generative pipeline (both the diffusion prior
and the conditional model $f_\phi$) is \emph{frozen}, and we instead optimise
a single input $x^*$ such that $\mathcal{P}(Y \mid X = x^*)$ matches
$\mathcal{G}$.
No generator weights are updated, no target-specific training is required,
and the method operates entirely at inference time.

\paragraph{Terminological note.}
The phrase "conditional distribution matching" has prior unrelated
uses. In the domain-adaptation
literature~\citep{TachetDesCombes2020GLS} it refers to aligning the
source and target conditionals $\mathcal{P}_{\mathrm{src}}(X \mid Y)$
and $\mathcal{P}_{\mathrm{tgt}}(X \mid Y)$ across two datasets so that a
classifier learned on one transfers to the other. In concurrent work on
discrete-diffusion distillation~\citep{Gao2025DiscreteCDM} it refers to
a training objective in which a student model is trained to match a
pretrained teacher's reverse conditional transitions. Our use is
unrelated to either: we study the inverse-design problem of finding an
input $x^*$ whose induced conditional $\mathcal{P}(Y \mid X = x^*)$
matches a user-specified target $\mathcal{G}(Y)$, with a single fixed
generative process. The three problems share only a phrase, not a
setup or method.
\section{Methodology}
\label{sec:methods}
MLGD (Matching-Loss Guided Diffusion) specialises Loss-Guided
Diffusion~\citep{lossGuidedDiffusion} to the conditional distribution
matching setting (Problems~\ref{prob:sampling}--\ref{prob:opt}) by
combining the pretrained score network $s_\theta$ for $\mathcal{P}(X)$
with a conditional sampler $f_\phi$ for
$\mathcal{P}(Y \mid X)$. When $f_\phi$ is a few-step sampler, we refer to the instantiation as
\methodname{}; MLGD is the broader framework that also covers
slower choices of $f_\phi$. The method has
two components: an \emph{outer loop} that performs loss-guided reverse
diffusion with a black-box loss $\mathcal{L}(x)$ (the LGD
skeleton), and an \emph{inner estimator} that evaluates and differentiates
the distribution-matching loss
$\mathcal{L}(x) = \|\mathcal{P}(Y \mid X = x) - \mathcal{G}(Y)\|$ at each
outer step. The inner estimator is the paper's algorithmic contribution.

\paragraph{The two components at a glance.}
Algorithm~\ref{alg:mlgd-outer} is the standard LGD skeleton with an
abstract black-box loss $\mathcal{L}(x)$: at each reverse-diffusion
step we form Tweedie's estimate $\hat{x}_0$, query the loss, and add
its gradient (scaled by step size $\zeta_t$) to the denoising update.
Algorithm~\ref{alg:inner-estimator} specifies what we plug into that
black box (the paper's algorithmic contribution) via a fast
differentiable conditional sampler $f_\phi$. For any query $x$ we draw
$n_{\mathrm{cond}}$ samples from $f_\phi(x, \cdot)$, compare them to a fixed set of target samples $\mathcal{S}_{\mathcal{G}} \sim \mathcal{G}$ using a distributional distance (e.g.\ MMD or SWD), and propagate the gradient through $f_\phi$.
The full detailed version, including Monte-Carlo perturbations around
$\hat{x}_0$ used to stabilise the gradient estimate, is given as
Algorithm~\ref{alg:mlgd-d-full} in Appendix~\ref{sec:app:fullalg}.
We write $\textsc{DDIM\_step}(x_t, s_\theta, t)$ for one deterministic 
reverse-DDIM update~\citep{DDIMSong}; the explicit form is given in 
Algorithm~\ref{alg:mlgd-d-full}.

\begin{algorithm}[t]
\caption{MLGD outer loop (black-box loss)}
\label{alg:mlgd-outer}
\raggedright
\textbf{Input:} Pretrained score network $s_\theta$ for $\mathcal{P}(X)$;
black-box loss $\mathcal{L}: \mathcal{X} \to \mathbb{R}$;
noise schedule $\{\bar{\alpha}_t\}_{t=1}^T$;
guidance strengths $\{\zeta_t\}_{t=1}^T$.\\
\textbf{Output:} Sample $x_0$ guided toward $\mathcal{G}$.\\[3pt]
\begin{algorithmic}[1]
\State $x_T \sim \mathcal{N}(0, I)$
\For{$t = T, T{-}1, \ldots, 1$}
  \State $\hat{x}_0 \leftarrow \tfrac{1}{\sqrt{\bar{\alpha}_t}}\bigl(x_t + (1-\bar{\alpha}_t)\,s_\theta(x_t,t)\bigr)$
    \Comment{Tweedie estimate}
  \State $x_{t-1} \leftarrow \textsc{DDIM\_step}(x_t, s_\theta, t)\;-\;\zeta_t\,\nabla_{x_t}\mathcal{L}(\hat{x}_0)$
    \Comment{Denoising step $-$ inner-estimator guidance (Alg.~\ref{alg:inner-estimator})}
\EndFor
\State \Return $x_0$
\end{algorithmic}
\end{algorithm}

\begin{algorithm}[t]
\caption{Inner estimator for $\mathcal{L}(x)$ (our contribution)}
\label{alg:inner-estimator}
\raggedright
\textbf{Input:} Query conditioning $x$; pretrained differentiable sampler $f_\phi$ for $\mathcal{P}(Y{\mid}X)$;
target sample $\mathcal{S}_{\mathcal{G}} \sim \mathcal{G}$; number of conditional samples $n_{\mathrm{cond}}$; distributional distance $\mathcal{L}$ (e.g.\ MMD, SWD).\\
\textbf{Output:} Estimate $\widehat{\mathcal{L}}(x)$ and its gradient $\nabla_x \widehat{\mathcal{L}}(x)$.\\[3pt]
\begin{algorithmic}[1]
\State Sample internal noise $\eta_1, \ldots, \eta_{n_{\mathrm{cond}}} \stackrel{\text{iid}}{\sim} \pi$
\State $\mathcal{S}_{\mathrm{cond}} \leftarrow \{f_\phi(x, \eta_i)\}_{i=1}^{n_{\mathrm{cond}}}$
  \Comment{Fast conditional samples; differentiable in $x$}
\State $\widehat{\mathcal{L}}(x) \leftarrow \mathcal{L}\bigl(\mathcal{S}_{\mathrm{cond}},\, \mathcal{S}_{\mathcal{G}}\bigr)$
  \Comment{Plug-in distributional-loss estimator}
\State $\nabla_x \widehat{\mathcal{L}}(x) \leftarrow \textsc{autograd}(\widehat{\mathcal{L}}(x),\, x)$
  \Comment{Backprop through $f_\phi$}
\State \Return $\widehat{\mathcal{L}}(x),\; \nabla_x \widehat{\mathcal{L}}(x)$
\end{algorithmic}
\end{algorithm}

\subsection{Inner estimator: discussion}
\label{sec:methods:inner}

We refer to any fast, few-step, differentiable conditional sampler used
as $f_\phi$ generically; in the synthetic and MNIST experiments this
is a consistency model, while in the Stable Diffusion pipeline we use
SDXL-Turbo~\citep{sauer2023adversarialdiffusiondistillation}, which
shares the same key property of single-step sampling. The loss
$\mathcal{L}(x) = \|\mathcal{P}(Y \mid X = x) - \mathcal{G}(Y)\|$ cannot
be evaluated analytically because $\mathcal{P}(Y \mid X = x)$ is
unknown: neither its density nor a way to enumerate it is available;
we have only a pretrained sampler. Algorithm~\ref{alg:inner-estimator}
bypasses this by plug-in estimation, then relies on autograd through
$f_\phi$ for the gradient.

\paragraph{Why a fast sampler.}
The choice of a fast model for $f_\phi$ is deliberate. Because the
inner estimator is invoked at every one of the $T$ outer denoising steps,
a standard $K_\star$-step diffusion model would multiply the sampling cost by
$T \cdot K_\star \cdot n_{\mathrm{cond}}$ and, more critically, would require
backpropagating through a $K_\star$-step unrolled chain at each outer step,
rendering gradient computation memory-intensive and effectively intractable.
A single-step $f_\phi$ ($K_s{=}1$) keeps both cost and activation
memory lightweight; we measure the resulting memory gap in
Section~\ref{sec:ablation}.

\paragraph{Sampling vs.\ optimisation, and the role of $\beta$.}
Algorithm~\ref{alg:mlgd-outer} interpolates between two regimes
controlled by the inverse temperature $\beta$ of the target
$\mathcal{Q}_\beta(x) \propto \mathcal{P}(x)\,e^{-\beta \mathcal{L}(x)}$
(Problem~\ref{prob:sampling}). At $\beta = 0$ the guidance vanishes
and the algorithm draws approximate samples from the prior
$\mathcal{P}$; as $\beta \to \infty$ it concentrates at
$\arg\min_x \mathcal{L}(x)$, recovering the optimisation variant
(Problem~\ref{prob:opt}). In practice, $\beta$ can be absorbed into
the step-size schedule by setting $\zeta_t \leftarrow \beta\,\zeta_t$,
so the two are not separately identified.
Following LGD, the trajectory targets $\mathcal{Q}_\beta$ only
approximately (the exact noisy-time guidance score is intractable);
Remark~\ref{rem:lgd-approx} in Appendix~\ref{app:theory} states the
approximation explicitly.
In practice we report CDMO solutions
by running at large $\beta$ with $R$ independent restarts
$x_T^{(r)} \sim \mathcal{N}(0,I)$ and returning
$x^* = \arg\min_{r=1,\ldots,R}\, \widehat{\mathcal{L}}(x_0^{(r)})$.

\subsection{Theoretical analysis}
\label{sec:method:theory}

\methodname{} uses $f_\phi$ as an \emph{autograd oracle}: at each reverse-diffusion step, $\nabla_x \widehat{\mathcal{L}}_\phi$ is obtained by backpropagating through $f_\phi$. Appendix~\ref{app:theory} analyses the squared-MMD estimator under three idealised assumptions: a kernel-smoothness condition; output fidelity $\mathbb{E}_\eta\|f_\phi(x,\eta)-f_{\mathrm{true}}(x,\eta)\|^2 \le \varepsilon_s^2$; and Jacobian fidelity $\varepsilon_g := (\mathbb{E}_\eta\|\partial_x f_\phi-\partial_x f_{\mathrm{true}}\|_F^2)^{1/2}$. Under these: (i)~$|\widehat{\mathcal{L}}_\phi - \mathcal{L}| = O(\varepsilon_s + n^{-1/2})$ (Prop.~\ref{prop:loss}); (ii)~the gradient has bias $O(\varepsilon_g + \varepsilon_s)$ and variance $O(n^{-1})$ (Prop.~\ref{prop:grad}); (iii)~replacing a $K_\star$-step teacher with a $K_s\!<\!K_\star$ step student reduces reverse-mode memory by $\Theta(K_\star/K_s)$ (Prop.~\ref{prop:memory}), formalising the $43\,$GB vs.\ $375\,$GB gap of Section~\ref{sec:ablation}, with gradient discrepancy controlled by distillation errors $\varepsilon_{\mathrm{dist}}, \varepsilon_{g,\mathrm{dist}}$ (the student-vs-teacher analogues of $\varepsilon_s, \varepsilon_g$).

Neither $\varepsilon_s$ nor $\varepsilon_g$ follows from existing diffusion or
consistency-model convergence results~\citep{chen2023sampling, benton2024nearly,
debortoli2022convergence, ConsistencyModelts1,
salimans2022progressivedistillationfastsampling}, and $\varepsilon_g$ is unbounded
by any current training-time analysis. We empirically verify the student-teacher
component of both on SDXL-Base/Lightning~\citep{podell2024sdxl,
lin2024sdxllightningprogressiveadversarialdiffusion}
(Remark~\ref{rem:eps-g-training},
Appendix~\ref{app:MLGD-ESTIMATOR:teacher-student-connection}), flagging the
teacher-vs-truth gap as an open problem. The diagnostic shows comparable Jacobian gaps, with naturalistic prompts showing the closest agreement and stylised prompts exhibiting
larger but comparable degradation. We treat this as regime identification rather than numerical confirmation.

\section{Experiments}
\label{sec:experiments}

Our experiments address two core properties of \methodname{}: its capacity to
optimize over high-dimensional input manifolds $X$, and its scalability
to semantically complex output spaces $Y$. We evaluate across three
settings of increasing scale: synthetic Mixture of Gaussians
(Section~\ref{sec:exp:sim}), where low-dimensional $X$ and $Y$ allow
verification against known ground-truth optima; an MNIST rotation task
(Section~\ref{sec:exp:mnist}), which scales $X$ to image space
($\mathbb{R}^{784}$) while keeping $Y$ small; and a Stable
Diffusion image optimization (Section~\ref{sec:exp:sd}), which
scales both $X$ and $Y$ to high-dimensional spaces.

 \subsection{Synthetic Simulations}
\label{sec:exp:sim}
We compare \textbf{\methodname{}} against MLGD with a slow diffusion sampler
on Mixture of Gaussians across multiple dimensionalities. 
We run 25 optimisation instances and select the top-10 by
final loss (Problem~\ref{prob:opt}). Reporting top-$k$ reflects how
multistart optimisation is used in practice: a user running $R$ restarts returns the best result by the same objective, so
top-$k$ captures achievable quality under finite restart budgets. We
also report all-25 statistics to characterise variance across seeds. Full models and
training details are provided in Appendix~\ref{sec:app:sim}.

As shown in Figure~\ref{fig:sim_2d}, \methodname{} recovers near-optimal
solutions with an $11\times$ speedup over MLGD ($4.03$s vs.\ $45.58$s
for the representative run shown; full per-dimensionality timing
across all 25 runs is reported in Appendix~\ref{sec:app:sim}).
Notably, ECDF analysis (Figure~\ref{fig:ecdf_l2gmm}) reveals that
\methodname{}'s relative performance improves with dimensionality. We attribute
this to the trade-off between \textbf{sampler fidelity} and
\textbf{gradient variance}: while MLGD with a slow sampler uses a more
accurate conditional model, backpropagating through its unrolled
$K_\star$-step chain accumulates variance that scales poorly with dimension.
\methodname{}'s single-step model provides a cleaner gradient signal, which
becomes the decisive factor in high-dimensional settings
(Appendix~\ref{app:simu:discussion}). The $\beta$-sweep in Figure~\ref{fig:sim_2d} (bottom) empirically
illustrates the sampling-to-optimisation interpolation of
Problem~\ref{prob:sampling}: as $\beta$ increases from $0$ to large
values, \methodname{} transitions from sampling the prior $\mathcal{P}$ to
concentrating near $\arg\min_x \mathcal{L}(x)$. 

\begin{figure}[t]
  \centering
  \includegraphics[width=\textwidth]{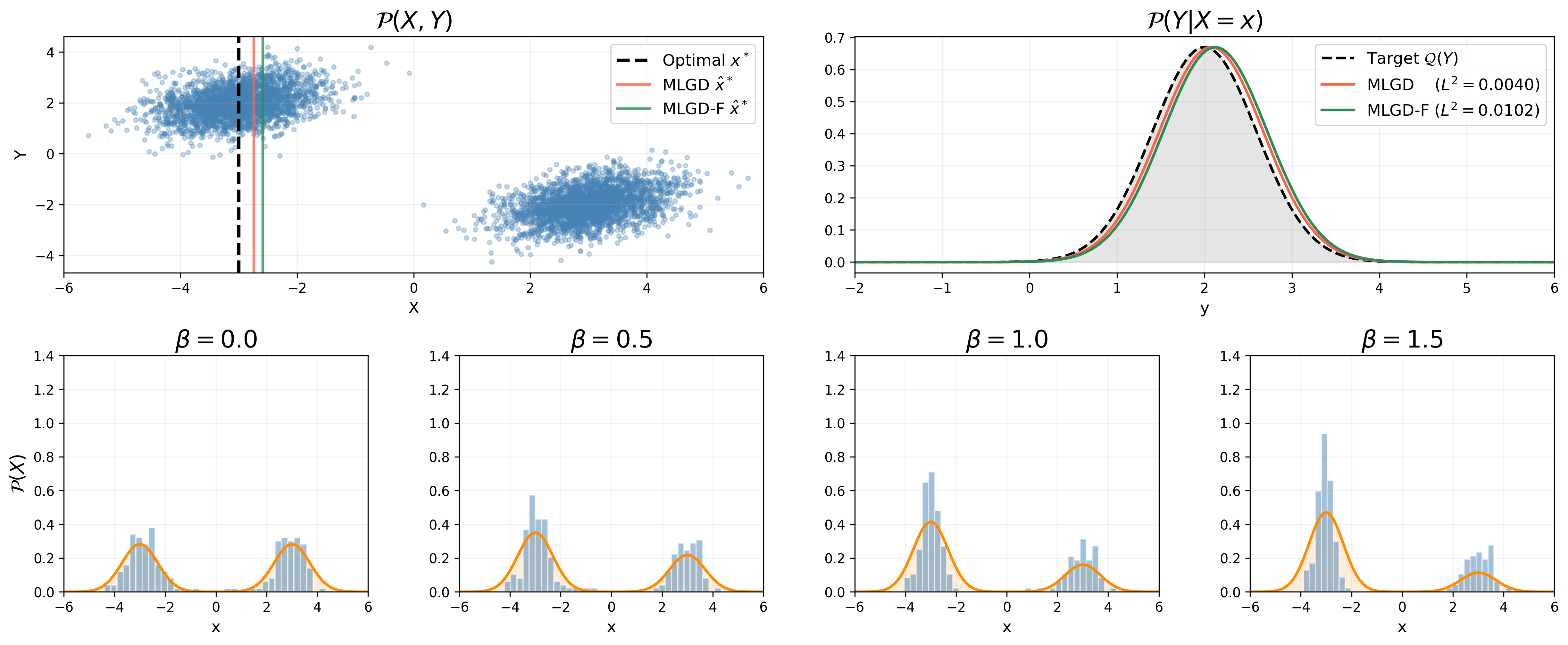}
\caption{\textbf{Synthetic 2D simulation} 
\textbf{(Top):} Joint distribution and conditional density comparison. \methodname{} (green) recovers an input whose induced conditional matches the target $\mathcal{G}(Y)$ with high fidelity ($L^2 = 0.0076$).
\textbf{(Bottom):} Approximate CDMS $\beta$-sweep: orange is analytical $\mathcal{Q}_{\beta}$, blue is empirical samples. As $\beta$ increases, samples transition from the prior $\mathcal{P}(X)$ to a distribution closely aligned with $\mathcal{Q}_{\beta}$, eventually concentrating at the optimum $x^* = -5$.}  \label{fig:sim_2d}
\end{figure}

\subsection{MNIST: Rotated Digits}
\label{sec:exp:mnist}
We apply \methodname{} to an image-space problem using the MNIST
dataset~\cite{lecun1998gradient}, where $X \in \mathbb{R}^{784}$
is a digit image and $Y \in \mathbb{R}^2$ is its rotation angle encoded
as $(\cos\theta, \sin\theta)$; a standard diffusion model serves as
$\mathcal{P}(X)$ and a consistency model as $f_\phi$ for
$\mathcal{P}(Y\mid X)$. The goal is to find $x^*$ such that
$\mathcal{P}(Y \mid X = x^*) \approx \mathcal{G}$ for various
user-specified target distributions $\mathcal{G}$.

Since the unconditional model can generate any digit at any
orientation, the optimization searches over the full joint
space of digit identity and rotation.
We present the top-5 results by final loss across 15 runs (Figure~\ref{fig:mnist},
and Table~\ref{tab:mnist}); full qualitative results are provided
in Appendix~\ref{app:mnist:fullresults}.
These runs are approximate samples from the $\beta$-tempered
posterior $\mathcal{Q}_\beta(x) \propto \mathcal{P}(x)
e^{-\beta \mathcal{L}(x)}$ (Problem~\ref{prob:sampling}), and the
digit-class distribution in Table~\ref{tab:mnist} is the empirical
realisation of this posterior.
While the objective is distributional and agnostic to digit
identity, \methodname{} recovers semantically coherent digits
that satisfy the orientation constraint $\mathcal{G}$: symmetric
digits dominate under the uniform target (digit~0) and bimodal target
(narrow 0, 1, 8, 6, 9), while oriented digits (2, 3, 7) emerge under the
unimodal target.

\begin{table}[t]
\centering
\caption{%
MNIST rotated-digits task (15 seeds).
SWD\textsubscript{top-5} is the mean loss of the top-5 runs.
Digit percentages are over classified seeds only
(up to 3 seeds per scenario fell below the classifier threshold,
consistent with the non-convex nature of high-dimensional optimization). Timing varies with $T$ and $n_{\text{MC}}$ (Alg.~\ref{alg:mlgd-d-full}).}
\label{tab:mnist}
\vspace{2mm}

\setlength{\tabcolsep}{4.2pt}

\begin{tabular}{@{}l ccc cccccccccc@{}}
\toprule
& & & & \multicolumn{10}{c}{Recovered digit (\%)} \\
\cmidrule(l){5-14}
Target $\mathcal{G}$
  & SWD\textsubscript{all}
  & SWD\textsubscript{top5}
  & Time (s)
  & \textbf{0} & \textbf{1} & \textbf{2} & \textbf{3} & \textbf{4} & \textbf{5} & \textbf{6} & \textbf{7} & \textbf{8} & \textbf{9} \\
\midrule
$\mathcal{G}_{\mathrm{unif.}}$
  & $0.052 \pm .03$
  & $0.033 \pm .01$
  & $19.2 \pm 0.1$
  & 79 & -- & -- & -- & -- & -- & -- & 7  & -- & 14 \\
$\mathcal{G}_{\mathrm{bim.}}$
  & $0.081 \pm .07$
  & $0.031 \pm .01$
  & $22.4 \pm 3.5$
  & 15 & 15 & -- & -- & -- & 8  & 15 & -- & 38 & 8 \\
$\mathcal{G}_{\mathrm{unim.}}$
  & $0.049 \pm .02$
  & $0.033 \pm .01$
  & $9.3 \pm 0.2$
  & -- & -- & 38 & 31 & -- & -- & -- & 23 & 8  & -- \\
\bottomrule
\end{tabular}
\end{table}

\begin{figure}[t]
    \includegraphics[width=\textwidth]{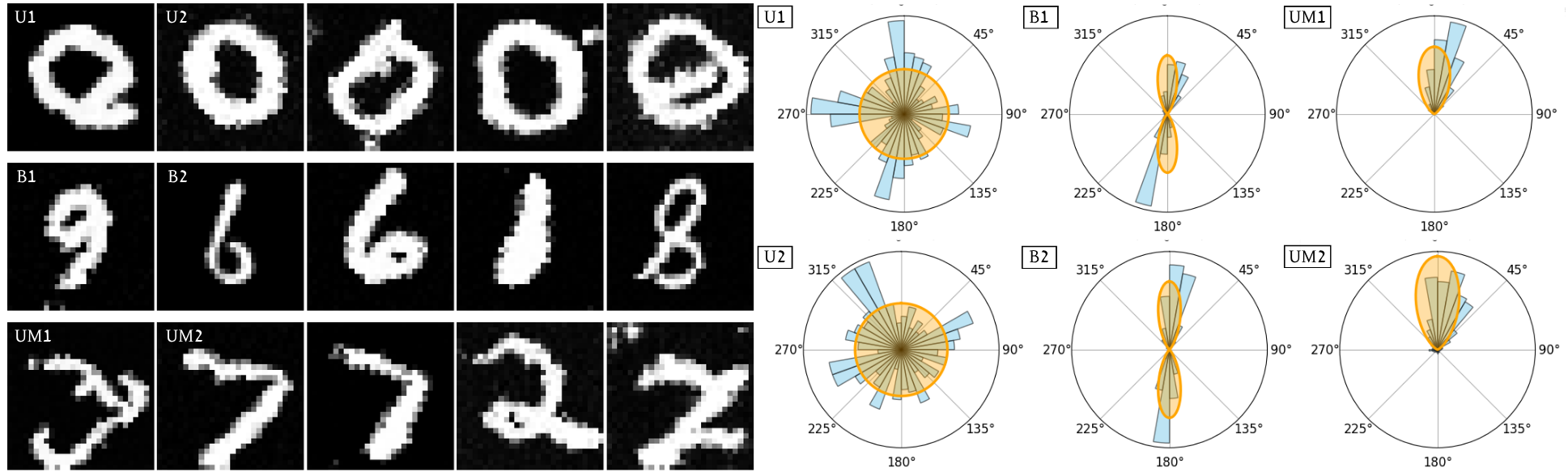}
\caption{
    \textbf{MNIST rotation optimization.}
    \textbf{Left:} Optimized digits $x^*$ for uniform (row~1),
    bimodal (row~2), and unimodal (row~3) targets.
    Without digit-identity supervision, \methodname{} recovers semantically
    meaningful classes: circular ``0'' (uniform), narrow ``0'',``1'',``8'',``6/9''
    (bimodal, $0^\circ$/$180^\circ$), and oriented ``2'',``3'',``7''
    (unimodal, $0^\circ$).
    \textbf{Right:} Polar histograms of $\mathcal{P}(Y \mid X{=}x^*)$
    (blue) vs.\ target $\mathcal{G}$ (orange), where each panel is
    labelled to match its corresponding digit on the left
    (U1--U2: uniform-2; B1--B2: bimodal;
    UM1--UM2: unimodal), closely matching the target in all cases.
}
  \label{fig:mnist}
\end{figure}

\subsection{Stable Diffusion: Distribution-Guided Image Generation}
\label{sec:exp:sd}
To demonstrate scalability to large pretrained models, we apply
\methodname{} to image editing with SDXL~\citep{podell2024sdxl}, which
serves as $\mathcal{P}(X)$; SDXL-Turbo~\citep{sauer2023adversarialdiffusiondistillation}
serves as the fast conditional sampler $f_\phi$ for $\mathcal{P}(Y\mid X)$ . All targets are defined in CLIP~\citep{clip} embedding space. We define four distribution matching tasks chosen to cover qualitatively distinct target shapes ranging from discrete mixtures to continuous low-rank support:
\begin{itemize}
    \item[\textbf{(i)}]   \textbf{Balanced} ($\mathcal{G}_{bal}$): $0.5\cdot\delta_{\text{male}} + 0.5\cdot\delta_{\text{female}}$, an equal-weight two-mode mixture.
    \item[\textbf{(ii)}]  \textbf{Skewed} ($\mathcal{G}_{skew}$): $0.25\cdot\delta_{\text{male}} + 0.75\cdot\delta_{\text{female}}$, an unequal-weight two-mode mixture.
    \item[\textbf{(iii)}] \textbf{Gender interpolation} ($\mathcal{G}_{interpGender}$): supported on a one-dimensional feminine-to-masculine continuum, approximated by four anchor classes.
    \item[\textbf{(iv)}] \textbf{Age interpolation} ($\mathcal{G}_{interpAge}$): supported on a one-dimensional age continuum over male portraits, via prompts \textit{``a portrait of a \{age\}-year-old man''} with uniform age distribution.
\end{itemize}
Full construction details are provided in Appendix~\ref{app:sd}.
Together, these
targets demonstrate the method's target-shape generality: discrete mixtures
and continuous low-rank submanifolds of output space are handled by the same
algorithm without modification.
The objective is to ensure the edited scribble $x^*$ satisfies $\mathcal{P}(\text{gender} \mid \text{scribble} = x^*) \approx \mathcal{G}$. 

\paragraph{Setup.}
Starting from a male portrait scribble (e.g. Figure~\ref{fig:sd_main}), 
we employ SDEdit~\citep{sdedit} to partially noise the image and 
denoise it, using \methodname{} as the guidance signal during the reverse diffusion
trajectory. Concretely, line~1 of Algorithm~\ref{alg:mlgd-outer} is replaced:
rather than initialising from pure Gaussian noise $x_T \sim \mathcal{N}(0, I)$, we start from a partially noised version of
the source scribble; all subsequent steps of
Algorithm~\ref{alg:mlgd-outer} are unchanged. At each denoising step, the predicted clean latent $\hat{x}_0$ 
is decoded into a scribble and passed to 
ControlNet-Scribble~\citep{controlnet} for conditional image generation; 
 example outputs are shown in Figure~\ref{fig:sd_main}. The 
resulting images are embedded via CLIP ViT-L/14~\citep{clip} and compared against the 
target distribution $\mathcal{G}$. \methodname{} requires 171--241 minutes of 
wall-clock time depending on the hyperparameters. We compare \methodname{} against two baselines:
(\textbf{i}) \textbf{Average scribble}: pixel-wise weighted mean of
HED maps~\citep{controlnet} from generated male and female portraits
(e.g., $0.25{\cdot}\text{HED}_{\text{male}} + 0.75{\cdot}\text{HED}_{\text{female}}$);
(\textbf{ii}) \textbf{SDEdit Best}: unguided SDEdit applied repeatedly
to the source scribble; the candidate minimising MMD against
$\mathcal{G}$ is selected, with the number of candidates chosen so that
SDEdit's total runtime matches \methodname{}'s wall-clock time.

\paragraph{Results.}
Quantitative results are summarized in Table~\ref{tab:sd_results}.
Across all four scenarios, \methodname{} achieves the lowest MMD, with
improvements of $22$--$33\%$ over the source scribble baseline.
The average scribble baseline fails by construction: pixel-wise blending
of HED maps produces ghosted facial structures that lie outside the data
manifold (Figures~\ref{fig:app_bimodal_qual}--\ref{fig:app_interp_qual}).
SDEdit Best, despite having an equal time budget, consistently
underperforms. \methodname{} optimizes directly for $\mathcal{G}$, the resulting
$p(\text{male})$ values closely track the target proportions across both
discrete scenarios ($47.4\%$ vs.\ $50\%$ target; $26.4\%$ vs.\ $25\%$
target). The smooth transition observed in the gender interpolation grids
confirms that the method handles continuous low-rank submanifold targets
while remaining on the data manifold (Figure~\ref{fig:sd_main}). Extended
qualitative comparisons, including age interpolation grids
(Figure~\ref{fig:app_age_grid}), are provided in
Appendix~\ref{app:sd:qualitative}.

\begin{table}[h]
  \centering
\caption{Quantitative results for distributional targets ($N=2{,}000$ images). Bold indicates the best result per target scenario. $\Delta$MMD (\%) is the relative MMD improvement over the source scribble baseline, computed as $\Delta\text{MMD}(\%) = (\text{MMD}_\text{source} - \text{MMD}_\text{method}) / \text{MMD}_\text{source} \times 100$.}\label{tab:sd_results}
  \vspace{2mm}
  
  \setlength{\tabcolsep}{2.8pt} 
  
  \begin{tabular}{@{}l cc c @{\hskip 10pt} l cc@{}} 
    \toprule
    & \multicolumn{3}{c}{\textbf{Discrete Targets}} & & \multicolumn{2}{c}{\textbf{Continuum Targets}} \\
    \cmidrule(r){2-4} \cmidrule(l){6-7}
    \textbf{Method} & \textbf{MMD} $\downarrow$ & \textbf{$\Delta$MMD} & \shortstack{\textbf{\% male} \\ \textbf{(95\% CI)}} & \textbf{Method} & \textbf{MMD} $\downarrow$ & \textbf{$\Delta$MMD} \\
    \midrule
    \multicolumn{4}{l}{\textit{Balanced ($50\%$ male)}} & \multicolumn{3}{l}{\textit{Gender Interpolation}} \\
    Avg scribble                  & $0.453$ & $+9.9\%$           & $40.9\ [38.8, 43.0]$ & Avg scribble & $0.499$ & $-4.2\%$ \\
    SDEdit Best                   & $0.427$ & $+15.1\%$          & $70.3\ [68.3, 72.3]$ & SDEdit Best  & $0.411$ & $+14.3\%$ \\
    \textbf{\methodname{} (ours)}        & $\mathbf{0.339}$ & $\mathbf{+32.5\%}$ & $\mathbf{47.4\ [45.3, 49.5]}$ & \textbf{\methodname{}} & $\mathbf{0.373}$ & $\mathbf{+22.1\%}$ \\
    \midrule
    \multicolumn{4}{l}{\textit{Skewed ($25\%$ male)}} & \multicolumn{3}{l}{\textit{Age Interpolation}} \\
    Avg scribble                  & $0.421$ & $+12.9\%$          & $27.4\ [25.5, 29.3]$ & Avg scribble & $0.514$ & $+0.9\%$ \\
    SDEdit Best                   & $0.452$ & $+6.4\%$           & $38.7\ [36.6, 40.8]$ & SDEdit Best  & $0.484$ & $+6.8\%$ \\
    \textbf{\methodname{} (ours)}        & $\mathbf{0.352}$ & $\mathbf{+27.1\%}$ & $\mathbf{26.4\ [24.5, 28.3]}$ & \textbf{\methodname{}} & $\mathbf{0.371}$ & $\mathbf{+28.5\%}$ \\
    \bottomrule
  \end{tabular}
\end{table}

\begin{figure}[h]
    \centering
    \begin{subfigure}[b]{0.19\textwidth}
        \centering
        \includegraphics[width=\textwidth]{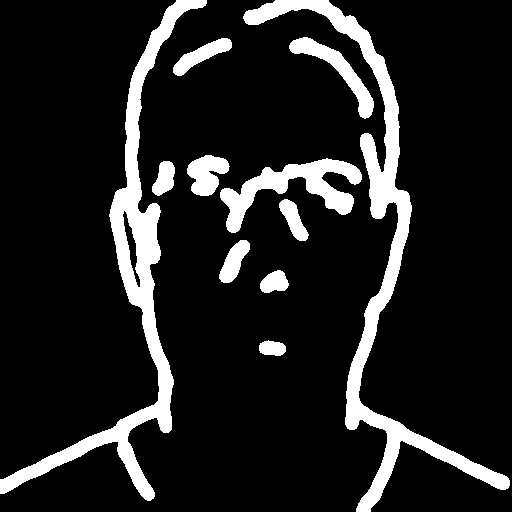}
    \end{subfigure}
    \hfill
    \begin{subfigure}[b]{0.19\textwidth}
        \centering
        \includegraphics[width=\textwidth]{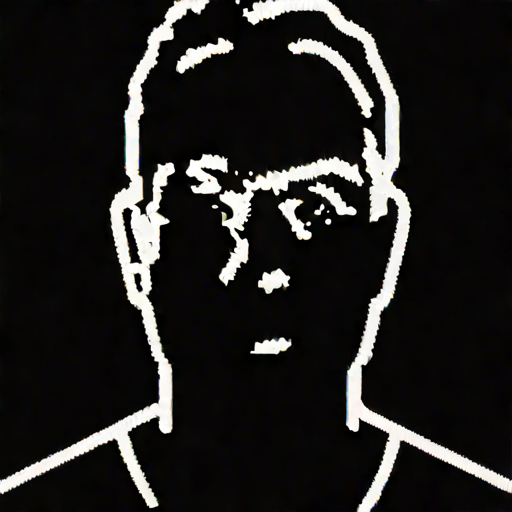}
    \end{subfigure}
    \hfill
    \begin{subfigure}[b]{0.58\textwidth}
        \centering
        \includegraphics[width=\textwidth]{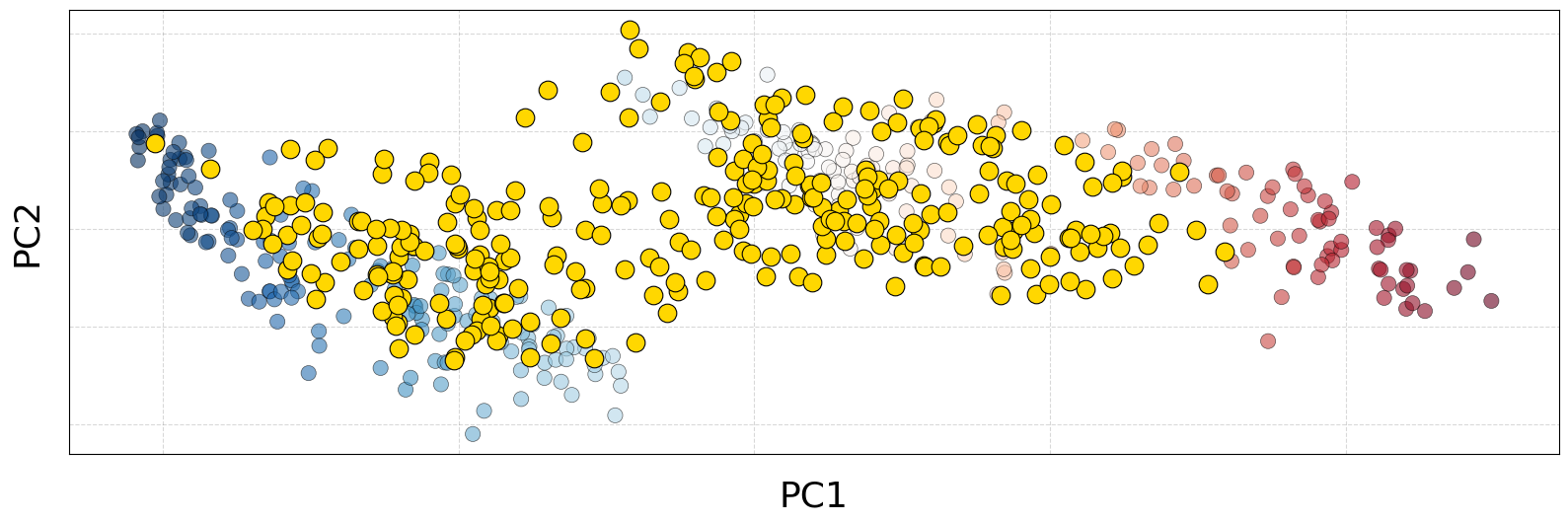}
    \end{subfigure}
    
    \vspace{0.5em}
    
    \includegraphics[width=\textwidth]{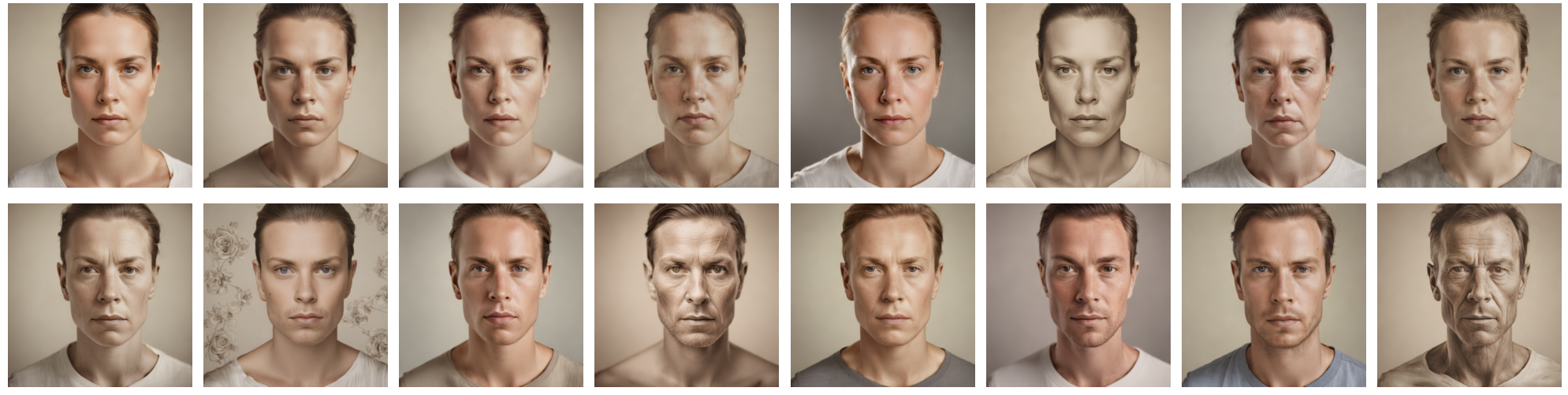}
    
    \caption{
        \textit{Top-left:} Source male portrait scribble.
        \textit{Top-middle:} \methodname{} optimized scribble after distribution-guided editing.
        \textit{Top-right:} CLIP PCA projection fitted on binary gender poles (male, woman).
        Background clusters represent reference distributions (Woman, Man, and mixed-feature 
        sets) colored by their position along the primary semantic axis (PC1). The \methodname{} 
        evaluation outputs (gold markers) cluster in the intermediate region, demonstrating 
        successful semantic interpolation in the CLIP embedding manifold.
        \textit{Bottom:} \textbf{Continuum (low-rank) target.} A subset of
        images generated from the \methodname{} optimized scribble (every ${\sim}5$th image out of 
        75 total), ordered by their projection onto the CLIP PC1 from most 
        feminine (top-left) to most masculine (bottom-right).
    }
    \label{fig:sd_main}
\end{figure}

\subsection{Ablation Study: Necessity of distilled Models}
\label{sec:ablation}

We validate that few-step models (e.g., SDXL Turbo~\citep{sauer2023adversarialdiffusiondistillation}) are essential for tractable optimization by comparing our approach against standard diffusion-based inner-loop sampling. The memory gap is formalised by Proposition~\ref{prop:memory} in Appendix~\ref{app:theory}: the few-step student reduces reverse-mode graph depth by the step-count ratio $K_\star/K_s$, with the resulting student-teacher gradient discrepancy controlled by the additional distillation errors $O(\varepsilon_{g,\mathrm{dist}}+\varepsilon_{\mathrm{dist}})$.

\paragraph{Architectural Parity and Scaling.}
To isolate the impact of the denoising step count, we observe
that our setup utilizes an \textbf{identical architectural backbone}
across both configurations: the UNet ($\approx$2.6B parameters),
ControlNet~\citep{controlnet}, and CLIP text encoders~\citep{clip}  remain constant. Notably,
SDXL-Turbo is obtained by distilling SDXL-Base with the architecture
unchanged - the two models differ only in their weights, not their
structure~\citep{sauer2023adversarialdiffusiondistillation}. Consequently, the computational cost of a single forward
pass is equivalent across both, and all observed differences in
memory and speed are a direct consequence of the denoising step
count alone.

The observed differences in memory and speed are thus a direct consequence of the computation graph depth. Since SDXL Base requires $K_\star=30$ steps for high-quality reconstruction compared to $K_s=2$ for SDXL Turbo, the resulting gradient graph is fundamentally $15\times$ deeper. To manage memory, one can employ \textbf{gradient checkpointing}; however, while this technique avoids storing intermediate activations, the overhead of maintaining the computation graph nodes for each of the $K$ denoising steps still scales linearly with the number of iterations.

\paragraph{Memory Tractability.}
At each outer denoising step, our method generates $n_{\text{cond}}$ 
conditional samples whose gradient graph must remain resident in GPU memory until 
\texttt{autograd.grad} is invoked. The peak memory cost decomposes as:
\begin{equation}
    M_{\text{peak}} = M_{\text{models}} + M_{\text{fixed}} + 
    n_{\text{cond}} \cdot M_{\text{var}}(K)
\end{equation}
where $M_{\text{models}}$ is the static model weight footprint, $M_{\text{fixed}}$ 
is the unconditional forward pass cost, and $M_{\text{var}}(K)$ is the per-sample 
graph cost for a sampler with $K$ inner denoising steps. Since the autograd engine 
must track all unrolled iterations, total memory is $\Theta(n_{\text{cond}} \cdot K)$.

For the Stable Diffusion experiment (Section~\ref{sec:exp:sd}), we measured
peak memory empirically on an NVIDIA A100 (80\,GB variant) using
$n_{\text{cond}}=100$ conditional samples per step (the main optimization
runs reported in Appendix~\ref{app:sd:compute} were executed on an
NVIDIA L40S; the A100 was used for the memory ablation and final
validation on $2{,}000$ samples):
\begin{itemize}
    \item \textbf{SDXL Turbo ($K_s=2$):} The full pipeline consumes \textbf{43\,GB} peak VRAM.
    \item \textbf{SDXL Base ($K_\star=30$):} Replacing the sampler with the base model, we project a peak memory requirement of approximately \textbf{375\,GB} (\textbf{exceeding available hardware capacity by 4.4$\times$}), making the method entirely infeasible.
\end{itemize}

\paragraph{Temporal Efficiency.}
Beyond memory, the distilled model provides a critical speedup. For a run of 125 steps with $n_{cond}=100$ variations, the Turbo-based pipeline requires approximately \textbf{4\,hr}, compared to projected \textbf{52\,hr} for the Base-based pipeline. This order-of-magnitude reduction transforms \methodname{} into a tractable tool for iterative research and deployment.

\section{Conclusion}
\label{sec:conclusion}

We introduced \textbf{Conditional Distribution Matching (CDM)}, a new problem class that generalizes inverse design to the distributional setting. Rather than seeking an input $x$ that produces a specific output $y^*$, CDM seeks an input $x^*$ whose entire induced conditional distribution $\mathcal{P}(Y \mid X = x^*)$ matches a user-specified target $\mathcal{G}(Y)$. To our knowledge, this problem class has not been explicitly studied before. We formalized two variants, CDMS and CDMO, and introduced \methodname{} to solve them at inference time without any training or fine-tuning.

A central advantage of the plug-and-play design is that \methodname{} directly inherits quality gains from improvements in the underlying pretrained models: as stronger diffusion priors and few-step conditional samplers become available, optimization quality improves automatically. The few-step model is essential for tractability: as demonstrated in Section~\ref{sec:ablation}, replacing a single-step sampler with a standard multi-step diffusion model increases peak memory by nearly an order of magnitude and runtime from hours to days, rendering the method infeasible at scale.

Empirically, we validated \methodname{} across settings of increasing scale and complexity, from synthetic Mixture of Gaussians, through an MNIST rotation task, to a large-scale Stable Diffusion image editing pipeline, demonstrating reliable recovery of inputs whose induced conditional distributions match diverse user-specified targets, including discrete mixtures and continuous low-rank submanifolds.

\paragraph{Limitations.}  Optimization 
quality is bounded by the fidelity of the pretrained distilled model 
— if the conditional sampler $f_\phi$ poorly approximates 
$\mathcal{P}(Y \mid X)$, the guidance gradient will be biased 
regardless of the number of runs. The stochastic nature of the 
inner-loop loss estimation means multiple independent runs are often 
needed to find a good solution, which multiplies the compute cost. 
Runtime at large scale may be prohibitive for latency-sensitive 
applications. Finally, the method currently requires the conditional 
model to be differentiable end-to-end with respect to the input $x$; 
non-differentiable or black-box conditional models would require 
gradient-free extensions.

\paragraph{Future Work.} CDM opens a research direction of optimizing over induced conditional distributions, one we hope will inspire dedicated solvers,
tighter theoretical guarantees, and new application domains. This work focuses on continuous input spaces; an important direction is extending CDM to discrete spaces. A complementary direction is replacing the LGD-style approximate sampler (Remark~\ref{rem:lgd-approx}) by an asymptotically exact corrector for the 
\emph{outer} loop, twisted SMC, MCMC, or Feynman--Kac reweighting, to obtain 
$\mathcal{Q}_\beta$ samples with quantifiable bias at the cost of extra compute. Finally, an intriguing  question is whether few-step samplers offer advantages beyond speed: their single-step structure may yield cleaner gradient signals compared to unrolled multi-step chains, as suggested by Appendix~\ref{app:simu:discussion}.

\paragraph{Broader Impact.} CDM opens both beneficial and potentially harmful 
applications. Distributional control can produce diverse and targeted outputs 
on complex models: for instance, the Stable Diffusion experiment 
(Section~\ref{sec:exp:sd}) demonstrates how \methodname{} can enforce demographic 
balance in generated portraits, a direct application to bias reduction. However, 
since $\mathcal{G}$ is defined with great freedom, the same mechanism could be 
misused to deliberately steer frozen pipelines toward harmful distributions. 
Since \methodname{} operates on frozen models, existing safety mechanisms remain 
fully active.


\newpage
\clearpage

\bibliographystyle{plainnat}
\bibliography{references}


\newpage
\appendix

\newpage
\section{Synthetic Simulations: Experimental Details}
\label{sec:app:sim}

This appendix provides the full experimental setup for the synthetic
simulations described in Section~\ref{sec:exp:sim}.
\subsection{MMD Loss: Population and Sample Forms}

All synthetic experiments instantiate the abstract loss $\mathcal{L}(x) =
\|\mathcal{P}(Y \mid X=x) - \mathcal{G}(Y)\|$ with the squared Maximum
Mean Discrepancy~\citep{gretton2008kernelmethodtwosampleproblem}. The
\emph{population} loss is
\[
\mathcal{L}(x) = \operatorname{MMD}^2\!\bigl(\mathcal{P}(Y\mid X=x),\,\mathcal{G}\bigr)
= \mathbb{E}[k(Y,Y')] - 2\,\mathbb{E}[k(Y,\tilde{Y})] + \mathbb{E}[k(\tilde{Y},\tilde{Y}')],
\]
where $Y,Y' \sim \mathcal{P}(Y\mid X=x)$ and
$\tilde{Y},\tilde{Y}' \sim \mathcal{G}$ are independent pairs, and $k$
is a characteristic positive-definite kernel.

In practice we replace this with the biased $V$-statistic plug-in estimator.
Given conditional samples $\{y_i\}_{i=1}^{n}$ and target samples
$\{\tilde{y}_j\}_{j=1}^{m}$, the estimator is:
\[
\widehat{\operatorname{MMD}}^2\!\bigl(\{y_i\},\{\tilde{y}_j\}\bigr)
= \frac{1}{n^2}\sum_{i,i'} k(y_i,y_{i'})
- \frac{2}{nm}\sum_{i,j} k(y_i,\tilde{y}_j)
+ \frac{1}{m^2}\sum_{j,j'} k(\tilde{y}_j,\tilde{y}_{j'}).
\]
The kernel $k$ is the multi-bandwidth RBF
\begin{equation}
k(a,b) = \sum_{\ell=1}^{5} \exp\!\Bigl(-\frac{\|a-b\|^2}{\sigma \cdot 2^{\ell-3}}\Bigr),
\label{eq:mmd-rbf}
\end{equation}
where $\sigma$ is the mean pairwise squared distance of the merged
sample. 
\subsection{\texorpdfstring{$L^2$}{L2} Distance Between GMMs}

Because both $\mathcal{P}(Y\mid X=\hat{x}^*)$ and $\mathcal{G}(Y)$ are
Gaussian Mixture Models in the synthetic setting, we evaluate solution
quality with the exact $L^2$ distance:
\[
\|p - q\|_{L^2}^2 = \int (p(y) - q(y))^2\, dy
= \|p\|_{L^2}^2 - 2\langle p, q \rangle_{L^2} + \|q\|_{L^2}^2,
\]
where each inner product $\langle p_1, p_2\rangle_{L^2}$ between two
GMMs decomposes into a sum of Gaussian--Gaussian inner products.
Each such inner product admits the closed-form evaluation
\citep{petersen2012matrix}:
\[
\langle \mathcal{N}(\mu_1,\Sigma_1),\, \mathcal{N}(\mu_2,\Sigma_2)\rangle_{L^2}
= \mathcal{N}(\mu_1;\,\mu_2,\,\Sigma_1+\Sigma_2).
\]
This gives an exact evaluation of solution quality,
used as our primary metric for comparing \methodname{} against LGD in all
synthetic settings. We use $L^2$ GMM only for \emph{evaluation}.

\subsection{Joint Distribution Parameters}
For all settings, the joint distribution $\mathcal{P}(X,Y)$ is a
Mixture of Gaussians with uniform weights. Full parameter files are
available at \url{\reposim}.

\paragraph{2D setting} ($\dim(X) = 1$, $\dim(Y) = 1$).
A fixed Mixture of 11 Gaussians with shared diagonal covariance
$\Sigma = 0.5^2 \cdot I_2$. The ground-truth optimum is $x^* = -5$,
and $\mathcal{G}(Y) = \mathcal{P}(Y \mid X=-5)$ is bimodal, obtained
by retaining only components whose conditional weight exceeds $0.01$
given $x = -5$ and renormalizing.

\paragraph{5D setting} ($\dim(X) = 4$, $\dim(Y) = 1$).
A Mixture of 4 Gaussians in 5 dimensions. The conditional target
$\mathcal{G}(Y) = \mathcal{P}(Y \mid X = x^*)$ has 2 modes after
filtering components with conditional weight below $0.001$ and
renormalizing the remaining weights.

\paragraph{10D setting} ($\dim(X) = 9$, $\dim(Y) = 1$).
A Mixture of 4 Gaussians in 10 dimensions. The conditional target
$\mathcal{G}(Y) = \mathcal{P}(Y \mid X = x^*)$ has 2 modes after
filtering components with conditional weight below $0.001$ and
renormalizing the remaining weights.

\subsection{Model Architectures and Training}
\label{sec:app:sim:arch}
All simulation settings utilize a common architecture: a fully connected MLP with sinusoidal time embeddings, optimized via AdamW \citep{loshchilov2019decoupledweightdecayregularization} ($\text{lr} = 10^{-4}$, weight decay $= 10^{-4}$) with a cosine annealing learning rate schedule. Detailed hyperparameters are provided in Tables~\ref{tab:sim_2d_arch}--\ref{tab:sim_10d_arch}.

\paragraph{Unconditional diffusion model.}
A DDPM \citep{DDPMsong} implementation trained on marginal samples $x \sim \mathcal{P}(X)$. We employ a cosine noise schedule ($s = 0.008$) with $T = 100$ steps. Sampling using DDIM \citep{DDIMSong} with $\eta=0$.

\paragraph{Conditional diffusion model.}
A conditional DDPM trained to model the density $\mathcal{P}(Y \mid X=x)$. Conditioning is implemented via a linear projection of $x$ which is element-wise added to the hidden states of the MLP alongside the time embeddings. During training, we apply classifier-free guidance \citep{classifierGuidance} with a dropout probability of $0.2$. Sampling using DDIM \citep{DDIMSong} with $\eta=0$.

\paragraph{Conditional consistency model.}
An improved Consistency Training (iCT) model \citep{ConsistencyModelts2} is trained to approximate the conditional distribution $\mathcal{P}(Y \mid X=x)$. We utilize the iCT discretization curriculum 
\begin{equation}
    N(k) = \min(s_0 \cdot 2^{\lfloor k/K'\rfloor}, s_1) + 1,
\end{equation}
with $s_0=10$ and $s_1=1280$. The model is optimized using the Pseudo-Huber loss with time-difference weighting $\lambda(t) = 1/(t_{n+1} - t_n)$. Conditioning on $x$ is achieved by adding a linear projection of the features to the hidden states at each forward pass. Sampling follows the multistep iCT procedure with noise levels $[150, 50, 20, 10, 5, 1]$.

\begin{table}[H]
\centering
\caption{Model architectures and training hyperparameters for the 2D setting.}
\label{tab:sim_2d_arch}
\begin{tabular}{lcccc}
\toprule
\textbf{Model} & \textbf{Blocks} & \textbf{Units} & \textbf{Epochs} & \textbf{Batch size} \\
\midrule
Diffusion (unconditional) & 3 & 128 & 20{,}000 & 1{,}024 \\
Diffusion (conditional)   & 3 & 128 & 20{,}000 & 1{,}024 \\
Consistency Model (CM)    & 3 & 128 & 20{,}000 & 1{,}024 \\
\bottomrule
\end{tabular}
\end{table}

\begin{table}[H]
\centering
\caption{Model architectures and training hyperparameters for the 5D setting.}
\label{tab:sim_5d_arch}
\begin{tabular}{lcccc}
\toprule
\textbf{Model} & \textbf{Blocks} & \textbf{Units} & \textbf{Epochs} & \textbf{Batch size} \\
\midrule
Diffusion (unconditional) & 6 & 512 & 40{,}000 & 4{,}096 \\
Diffusion (conditional)   & 6 & 512 & 40{,}000 & 4{,}096 \\
Consistency Model (CM)    & 6 & 512 & 40{,}000 & 4{,}096 \\
\bottomrule
\end{tabular}
\end{table}

\begin{table}[H]
\centering
\caption{Model architectures and training hyperparameters for the 10D setting.}
\label{tab:sim_10d_arch}
\begin{tabular}{lcccc}
\toprule
\textbf{Model} & \textbf{Blocks} & \textbf{Units} & \textbf{Epochs} & \textbf{Batch size} \\
\midrule
Diffusion (unconditional) & 8 & 512 & 20{,}000 & 4{,}096 \\
Diffusion (conditional)   & 8 & 512 & 20{,}000 & 4{,}096 \\
Consistency Model (CM)    & 8 & 512 & 40{,}000 & 4{,}096 \\
\bottomrule
\end{tabular}
\end{table}

\subsection{Conditional Model Quality}

To verify that each conditional model faithfully approximates
$\mathcal{P}(Y \mid X = x)$, we run a held-out sanity check
\emph{before} the optimization stage. For $N_{\text{eval}} = 500$ conditioning
points $x$ drawn i.i.d.\ from the analytic joint distribution
$\mathcal{P}(X, Y)$, we draw $n = 500$ samples from
each trained model given that $x$, and compare them against
$500$ analytic samples from the true conditional via MMD.
Results are summarized in Table~\ref{tab:model_quality}.

\begin{table}[H]
\centering
\caption{Conditional model quality measured by MMD against the
analytic conditional $\mathcal{P}(Y \mid X = x)$, averaged over
$N_{\text{eval}} = 500$ held-out conditioning points. Lower is better ($\downarrow$).}
\label{tab:model_quality}
\begin{tabular}{llcc}
\toprule
\textbf{Setting} & \textbf{Model} & \textbf{MMD mean $\downarrow$} & \textbf{MMD std} \\
\midrule
\multirow{2}{*}{2D}
  & Diffusion (conditional) & $\mathbf{0.011}$ & $\mathbf{0.014}$ \\
  & Consistency Model (CM)  & $0.163$ & $0.214$ \\
  
\multirow{2}{*}{5D}
  & Diffusion (conditional) & $\mathbf{0.010}$ & $\mathbf{0.014}$ \\
  & Consistency Model (CM)  & $0.182$ & $0.299$ \\
\midrule
\multirow{2}{*}{10D}
  & Diffusion (conditional) & $\mathbf{0.023}$ & $\mathbf{0.059}$ \\
  & Consistency Model (CM)  & $0.145$ & $0.197$ \\
\bottomrule
\end{tabular}
\end{table}

The conditional diffusion model achieves substantially lower MMD
than the consistency model in both settings.
This gap is the primary error source discussed in
Appendix~\ref{app:theory} and corresponds to the
\emph{distillation fidelity error} that acts as a fixed floor on
\methodname{}'s solution quality.

\subsection{Optimization Hyperparameters}

\begin{table}[H]
\centering
\caption{Optimization hyperparameters for the 2D setting.}
\label{tab:sim_2d_opt}
\begin{tabular}{lc}
\toprule
\textbf{Hyperparameter} & \textbf{Value} \\
\midrule
Distributional loss                        & MMD \\
Diffusion steps $T$                        & 100 \\
Independent runs                           & 25 \\
Conditional samples $n_{\text{cond}}$      & 250 \\
MC perturbations $n_{\text{MC}}$ (LGD)     & 3 \\
MC perturbations $n_{\text{MC}}$ (\methodname{})  & 3 \\
\bottomrule
\end{tabular}
\end{table}

\begin{table}[H]
\centering
\caption{Optimization hyperparameters for the 5D setting.}
\label{tab:sim_5d_opt}
\begin{tabular}{lc}
\toprule
\textbf{Hyperparameter} & \textbf{Value} \\
\midrule
Distributional loss                        & MMD \\
Diffusion steps $T$                        & 100 \\
Independent runs                           & 25 \\
Conditional samples $n_{\text{cond}}$      & 250 \\
MC perturbations $n_{\text{MC}}$ (LGD)     & 5 \\
MC perturbations $n_{\text{MC}}$ (\methodname{})  & 5 \\
\bottomrule
\end{tabular}
\end{table}

\begin{table}[H]
\centering
\caption{Optimization hyperparameters for the 10D setting.}
\label{tab:sim_10d_opt}
\begin{tabular}{lc}
\toprule
\textbf{Hyperparameter} & \textbf{Value} \\
\midrule
Distributional loss                        & MMD \\
Diffusion steps $T$                        & 100 \\
Independent runs                           & 25 \\
Conditional samples $n_{\text{cond}}$      & 250 \\
MC perturbations $n_{\text{MC}}$ (LGD)     & 5 \\
MC perturbations $n_{\text{MC}}$ (\methodname{})  & 5 \\
\bottomrule
\end{tabular}
\end{table}

\subsection{Results}

\begin{table}[H]
\centering
\caption{Results on the 2D synthetic simulation (all 25 runs and
top-10 by final loss). Values reported as mean $\pm$ std.}
\label{tab:sim_2d_results}
\begin{tabular}{llccc}
\toprule
\textbf{Subset} & \textbf{Method} & \textbf{$L^2$ GMM} & \textbf{$L^2$ to $x^*$} & \textbf{Time (s)} \\
\midrule
\multirow{2}{*}{All 25}
  & LGD    & $0.2106 \pm 0.3350$ & $4.439 \pm 10.311$  & $126.93 \pm 1.34$ \\
  & \methodname{} & $0.0484 \pm 0.0977$ & $0.440 \pm 0.725$   & $14.17 \pm 0.47$  \\
\midrule
\multirow{2}{*}{Top-10}
  & LGD    & $0.0079 \pm 0.0105$ & $0.146 \pm 0.107$ & $126.58 \pm 1.49$ \\
  & \methodname{} & $0.0123 \pm 0.0102$ & $0.197 \pm 0.109$ & $14.15 \pm 0.47$  \\
\bottomrule
\end{tabular}
\end{table}

\methodname{} achieves a $9\times$ speedup over LGD ($14.15$\,s vs.\
$126.58$\,s). Both methods recover near-optimal solutions in the
top-10 setting with comparable $L^2$ GMM and $L^2$ to $x^*$. Over
all 25 runs \methodname{} achieves a lower mean $L^2$ GMM ($0.048$
vs.\ $0.211$).

\begin{table}[H]
\centering
\caption{Results on the 5D synthetic simulation (all 25 runs and
top-10 by final loss). Values reported as mean $\pm$ std.}
\label{tab:sim_5d_results}
\begin{tabular}{llccc}
\toprule
\textbf{Subset} & \textbf{Method} & \textbf{$L^2$ GMM} & \textbf{$L^2$ to $x^*$} & \textbf{Time (s)} \\
\midrule
\multirow{2}{*}{All 25}
  & LGD    & $0.4097 \pm 0.2771$ & $146.176 \pm 296.219$ & $330.57 \pm 1.18$ \\
  & \methodname{} & $0.4258 \pm 0.4430$ & $26.497 \pm 10.723$   & $22.98 \pm 0.13$  \\
\midrule
\multirow{2}{*}{Top-10}
  & LGD    & $0.3688 \pm 0.1745$ & $29.910 \pm 5.706$ & $330.46 \pm 0.82$ \\
  & \methodname{} & $0.4649 \pm 0.4687$ & $23.362 \pm 7.111$ & $23.01 \pm 0.13$  \\
\bottomrule
\end{tabular}
\end{table}

\methodname{} achieves a $14.4\times$ speedup over LGD ($23.01$\,s vs.\
$330.46$\,s) and a lower $L^2$ to $x^*$ in the top-10 setting
($23.36$ vs.\ $29.91$), though LGD achieves a lower $L^2$ GMM in
both subsets. The ECDF of the $L^2$ GMM distance (Figure~\ref{fig:ecdf_l2gmm}, center panel)
shows the two distributions largely overlapping, with LGD's curve
shifted slightly to the left. Notably, the top-10 $L^2$ GMM for
\methodname{} is higher than its own all-25 average, which we attribute to
the selection criterion: runs are ranked by final MMD loss, and the
consistency model's loss landscape occasionally assigns low loss to
samples that are nonetheless poor in $L^2$ GMM terms.

\begin{table}[H]
\centering
\caption{Results on the 10D synthetic simulation (all 25 runs and
top-10 by final loss). Values reported as mean $\pm$ std.}
\label{tab:sim_10d_results}
\begin{tabular}{llccc}
\toprule
\textbf{Subset} & \textbf{Method} & \textbf{$L^2$ GMM} & \textbf{$L^2$ to $x^*$} & \textbf{Time (s)} \\
\midrule
\multirow{2}{*}{All 25}
  & LGD    & $0.5981 \pm 0.2601$ & $14.232 \pm 11.419$ & $421.62 \pm 2.17$ \\
  & \methodname{} & $0.4185 \pm 0.2785$ & $11.921 \pm 11.494$ & $28.61 \pm 0.25$  \\
\midrule
\multirow{2}{*}{Top-10}
  & LGD    & $0.4385 \pm 0.2516$ & $14.427 \pm 11.889$ & $422.81 \pm 2.43$ \\
  & \methodname{} & $0.2701 \pm 0.1800$ & $6.073 \pm 4.036$   & $28.62 \pm 0.25$  \\
\bottomrule
\end{tabular}
\end{table}
\methodname{} achieves a $14.8\times$ speedup over LGD ($28.62$\,s vs.\
$422.81$\,s) and outperforms it on both metrics in the top-10
setting: $L^2$ GMM of $0.270$ vs.\ $0.439$, and $L^2$ to $x^*$ of
$6.07$ vs.\ $14.43$. The ECDF of the $L^2$ GMM distance (Figure~\ref{fig:ecdf_l2gmm} , right panel) shows \methodname{}'s distribution shifted to the left of LGD's
across most of the range, consistent with better results.

\begin{figure}[h!]
    \centering
    \includegraphics[width=\textwidth]{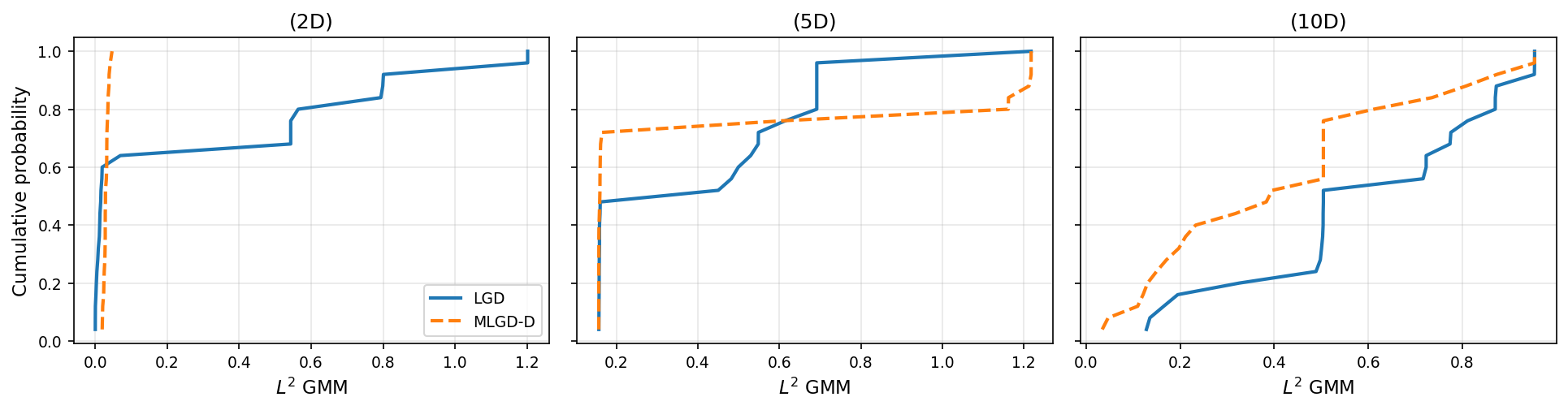}
    \caption{
        ECDF of the $L^2$ GMM distance across 25 independent optimization
        runs for LGD and \methodname{} in the 2D, 5D, and 10D settings. A curve
        shifted to the left indicates better (lower) $L^2$ GMM.
        In 2D, \methodname{}'s distribution is shifted clearly to the left of
        LGD's; in 5D the two largely overlap with LGD holding a slight
        edge; in 10D \methodname{}'s distribution is again shifted to the left,
        consistent with better average-case recovery at higher
        dimensionality.
    }
    \label{fig:ecdf_l2gmm}
\end{figure}

\subsection{Discussion}
\label{app:simu:discussion}

The ECDF of $L^2$ GMM across all 25 runs
(Figure~\ref{fig:ecdf_l2gmm}) summarises the dimension-dependent
pattern: in 2D, \methodname{} achieves a lower mean $L^2$ GMM while LGD
shows higher spread (a mixture of better runs and worse runs); the
two methods are comparable in 5D with LGD holding a slight edge on
$L^2$ GMM, and \methodname{} dominates in 10D. We conjecture this reflects two
opposing error sources that scale differently with dimensionality,
both identified formally in Appendix~\ref{app:theory}. The
\emph{distillation fidelity error} $\varepsilon_s$ --- the gap
between the distilled sampler $f_\phi$ and the true conditional ---
is largely dimension-independent and acts as a fixed floor on
\methodname{}'s solution quality. The \emph{gradient variance} introduced
by backpropagating through the $K_\star$-step unrolled diffusion chain worsens with depth and dimension~\citep{suh2022differentiablesimulatorsbetterpolicy}, and the
single-step consistency model avoids this accumulation entirely. Notably, despite the consistency model's lower conditional fidelity
(Table~\ref{tab:model_quality}), \methodname{} still matches or outperforms
LGD in 10D, underscoring that gradient quality rather than sampler
fidelity is the binding constraint at higher dimensionality --- a
point we return to in Appendix~\ref{app:theory}.
In low-to-intermediate dimensions the distillation error dominates,
as seen in 5D, while in 10D the gradient variance term takes over
and \methodname{}'s cleaner gradient graph becomes the decisive factor.
The 5D setting is the approximate crossover between these two regimes.

\subsection{Toy Example: Optimization Result and CDMS $\beta$-Sweep}
\label{sec:app:toy}

\paragraph{Setup.}
We consider a 2-component GMM with means
$\mu_1=(-3,2)$, $\mu_2=(3,-2)$, shared covariance
$\Sigma=\bigl[\begin{smallmatrix}0.5&0.15\\0.15&0.4\end{smallmatrix}\bigr]$,
and equal weights. The ground-truth optimum is $x^*=-3$, yielding a
bimodal target $\mathcal{G}(Y)=\mathcal{P}(Y\mid X=-3)$.

\paragraph{Models.}
Three models are trained using the MLP architecture described in
Section~\ref{sec:app:sim:arch} (3 blocks, 128 units):
(i)~an \emph{unconditional} DDPM on the marginal $\mathcal{P}(X)$;
(ii)~a \emph{conditional} DDPM modeling $\mathcal{P}(Y\mid X=x)$, used by MLGD;
and (iii)~a \emph{conditional consistency model} (iCT) used by \methodname{}.
All models are trained for $3{,}000$ epochs with batch size $1{,}024$.

\paragraph{Optimization result.}
Figure~\ref{fig:sim_2d} (top) shows the recovered $\hat{x}^*$ from both methods
alongside $\mathcal{P}(Y\mid X=\hat{x}^*)$ overlaid against the target
$\mathcal{G}(Y)$. Both methods recover a solution near $x^*$ and match the structure of $\mathcal{G}(Y)$.

\paragraph{Effect of guidance scale $\beta$.}
\label{sec:app:cdms_beta}
Figure~\ref{fig:sim_2d} (bottom) illustrates the CDMS formulation by showing the
transition from prior sampling to target optimization as $\beta$ increases.
The unconditional DDPM (3 blocks, 128 units, linear schedule $T=999$, trained
for $20{,}000$ epochs) is guided via the DPS formulation with step size $\beta$.
We reduce the number of DDIM steps to 10, both for computational efficiency
and following the practice of~\citet{lossGuidedDiffusion} in Gaussian settings,
using the DPS formulation with step size~$\beta$. Rather than backpropagating through a learned conditional model, we use the
analytical $L^2$-GMM loss as the guidance signal, enabling a clean comparison
against the tempered posterior
$Q(x;\beta)\propto\mathcal{P}(X)\exp(-\beta\,\mathcal{L}(x))$.
At $\beta=0$ the sampler recovers $\mathcal{P}(X)$; increasing $\beta$
concentrates mass toward $x^*$.
\subsection{Code and Reproducibility}

Full parameter configurations, training notebooks, and model
checkpoints for all synthetic experiments are available at
\url{\reposim}, including runnable notebooks to reproduce all results
and pretrained model checkpoints for the diffusion and consistency
models used in each setting. Model weights for all diffusion and consistency models used in the synthetic experiments are available at \url{\hfsim}.

\subsection{Compute}
All experiments were run on an NVIDIA L4 GPU using PyTorch 2.10.0
with CUDA 12.8. Training times per setting are reported in
Table~\ref{tab:training_times}.

\begin{table}[H]
\centering
\caption{Approximate training times on an NVIDIA L4 GPU.}
\label{tab:training_times}
\begin{tabular}{lccc}
\toprule
\textbf{Model} & \textbf{2D} & \textbf{5D} & \textbf{10D} \\
\midrule
Diffusion (unconditional) & ${\sim}11$\,min & ${\sim}3$\,h & ${\sim}4$\,h \\
Diffusion (conditional)   & ${\sim}12$\,min & ${\sim}3$\,h & ${\sim}4$\,h \\
Consistency Model (CM)    & ${\sim}3$\,min  & ${\sim}8$\,min & ${\sim}9$\,min \\
\bottomrule
\end{tabular}
\end{table}
 Per-seed optimization times are
reported in Tables~\ref{tab:sim_2d_results}--\ref{tab:sim_10d_results}.

\section{MNIST: Experimental Details}
\label{sec:app:mnist}
\label{app:mnist}

Full architectural and training details are given in the subsections
below. 

\subsection{Wasserstein and Sliced Wasserstein Distances: Population and Sample Forms}

\subsubsection{Wasserstein Distance (WD)}
The Wasserstein distance~\cite{bischoff2024practicalguidesamplebasedstatistical} measures the minimum cost of transforming one probability distribution into another. For two probability distributions $P$ and $Q$ on $\mathbb{R}^d$, the 1-Wasserstein distance $W_1$ is defined as:
\begin{equation}
W_1(P, Q) = \inf_{\gamma \in \Gamma(P, Q)} \mathbb{E}_{(x,y) \sim \gamma}[\|x - y\|_1]
\end{equation}
where $\Gamma(P, Q)$ denotes the set of all joint distributions with marginals $P$ and $Q$. 

We include the 1D Wasserstein distance formulation here because the Sliced Wasserstein Distance (SWD), described in the next section, reduces the high-dimensional problem to multiple 1D Wasserstein distance computations through random projections but in practice we will not use WD for higher dimensions in that work. 

For one-dimensional distributions $P$ and $Q$ on $\mathbb{R}$, the 1-Wasserstein distance has a closed-form expression in terms of the cumulative distribution functions (CDFs):
\begin{equation}
W_1(P, Q) = \int_{-\infty}^{\infty} |F_P(x) - F_Q(x)| \, dx
\end{equation}
where $F_P$ and $F_Q$ are the CDFs of $P$ and $Q$, respectively.

Given samples $S_1 = \{s_1, \ldots, s_m\}$ from $P$ and $S_2 = \{t_1, \ldots, t_n\}$ from $Q$ (assuming $m = n$ for simplicity), let $s_{(1)} \leq s_{(2)} \leq \cdots \leq s_{(m)}$ and $t_{(1)} \leq t_{(2)} \leq \cdots \leq t_{(n)}$ denote the sorted samples. The 1-Wasserstein distance can be estimated as:
\begin{equation}
W_1(S_1, S_2) = \frac{1}{m} \sum_{i=1}^{m} |s_{(i)} - t_{(i)}|.
\end{equation}
\subsubsection{Sliced Wasserstein Distance (SWD)}
We use the Sliced Wasserstein Distance (SWD)~\cite{bischoff2024practicalguidesamplebasedstatistical}, which provides a computationally efficient approximation to the Wasserstein distance~\citep{bischoff2024practicalguidesamplebasedstatistical} in high dimensions.

The SWD between distributions $P$ and $Q$ is defined as:
\begin{equation}
\text{SWD}(P, Q) = \int_{\mathbb{S}^{d-1}} W_1(\pi_\theta \# P, \pi_\theta \# Q) \, d\theta
\end{equation}
where $\mathbb{S}^{d-1}$ is the unit sphere in $\mathbb{R}^d$, $\pi_\theta$ denotes projection onto direction $\theta$, $\pi_\theta \# P$ and $\pi_\theta \# Q$ are the distributions of the projected points along that direction, and $W_1$ is the 1-Wasserstein distance.

In practice, we approximate this integral by averaging the Wasserstein distance over $L$ random one-dimensional projections:
\begin{equation}
\text{SWD}(P, Q) \approx \frac{1}{L} \sum_{\ell=1}^{L} W_1(\pi_{\theta_\ell} \# P, \pi_{\theta_\ell} \# Q)
\end{equation}
where $\theta_1, \ldots, \theta_L$ are uniformly sampled from $\mathbb{S}^{d-1}$. Notably, for one-dimensional data, SWD is equivalent to the 1-Wasserstein distance since the projection is the identity mapping, allowing us to use the same implementation for both metrics. 

in practice, when working with finite samples $\{x_i\}_{i=1}^n$ and $\{y_j\}_{j=1}^m$, we compute the 1-Wasserstein distance between the empirical 1D projections by sorting the projected points.

\subsection{Dataset Construction}

We construct an augmented MNIST dataset that captures rotational
ambiguity across digit classes.
\subsubsection{Conditional Model }
For the conditional training the pipeline proceeds in two stages.

\paragraph{Stage 1: Logical-angle entries.}
We start from the standard MNIST training split (60,000 images).
For each base image, we always retain a copy at logical angle $0°$.
We additionally include rotated copies at $90°$, $180°$, and $270°$
if a pretrained classifier assigns the rotated image to a digit class
with confidence $\geq 0.9999$. This threshold ensures that only
genuinely rotationally ambiguous digits (e.g., ``0'', ``1'', ``8'',
``6''/``9'') contribute additional logical-angle entries. The
classifier used is the dataset-construction CNN described below,
trained on the standard MNIST training split.

\paragraph{Stage 2: Uniform random rotation.}
Each logical-angle entry is then rotated by a uniform random angle
$\theta \sim \mathrm{Uniform}[0°, 360°)$. The stored label angle is
$({\rm logical\_angle} + \theta) \bmod 360°$. This produces a dataset
of 82,483 samples with continuous, uniformly distributed rotation
angles within each rotationally ambiguous class.

\subsubsection{Unconditional Model }
The unconditional diffusion model is trained on a separate dataset
consisting of all 70,000 MNIST images (train + test splits), each
independently rotated by a uniform random angle at every epoch
access, with no classifier filtering.

\subsection{Model Architectures}

\paragraph{Dataset-construction classifier.}
This classifier is used only during dataset creation to determine
which rotated copies of each digit are rotationally ambiguous.
It is a three-block CNN trained on the standard MNIST training split
for 15 epochs with batch size 128, Adam optimizer
(lr $= 10^{-3}$, weight decay $= 10^{-4}$), and learning-rate
reduction on plateau (factor $0.5$, patience $2$).
Training augmentation consists of random rotation ($\pm10°$),
random affine transforms (translation up to 10\%, scale $0.9$--$1.1$),
and standard MNIST normalisation ($\mu=0.1307$, $\sigma=0.3081$). Threshold for classification is 0.9999. 

The architecture is:
\begin{itemize}
    \item \textbf{Block 1:} Conv(1→32, 3×3) → BN → ReLU →
          Conv(32→32, 3×3) → BN → ReLU → MaxPool(2×2) →
          Dropout2D(0.25)
    \item \textbf{Block 2:} Conv(32→64, 3×3) → BN → ReLU →
          Conv(64→64, 3×3) → BN → ReLU → MaxPool(2×2) →
          Dropout2D(0.25)
    \item \textbf{Block 3:} Conv(64→128, 3×3) → BN → ReLU →
          MaxPool(2×2) → Dropout2D(0.25)
    \item \textbf{FC:} Linear(1152→256) → BN → ReLU → Dropout(0.5)
          → Linear(256→128) → BN → ReLU → Dropout(0.5)
          → Linear(128→10)
\end{itemize}

\paragraph{Digit classifier (final results table).}
Since the recovered images $x^*$ may be arbitrarily rotated and are
synthetically generated (and thus potentially lower in contrast than
real MNIST digits), a classifier trained solely on standard upright
MNIST would be unsuitable for evaluation.
We therefore train a more robust variant designed to handle arbitrary
orientations and mild image degradation. It shares the same three-block CNN backbone as the dataset-construction
classifier, with three modifications for noise robustness:
(i)~an \textbf{input} Dropout2D(0.1) applied before the first
convolution;
(ii)~a \textbf{mid-block} Dropout2D(0.1) inserted between the two
convolutions in each of Blocks~1 and~2;
and (iii)~\textbf{label smoothing} ($\epsilon=0.1$) in the
cross-entropy loss during training.
Training uses AdamW (lr~$=10^{-3}$, weight decay~$=10^{-4}$) for
10 epochs with batch size 128. Training augmentation consists of
random rotation ($\pm20°$), random affine transforms (translation
up to 10\%, scale $0.9$--$1.1$), Gaussian noise ($\sigma=0.15$),
and standard MNIST normalisation ($\mu=0.1307$, $\sigma=0.3081$).
Threshold for classification is 0.7. 

\paragraph{Unconditional diffusion model.}
We train an unconditional DDPM UNet on randomly rotated MNIST images.
The architecture is a standard UNet2D with the following
configuration:

\begin{table}[H]
\centering
\caption{Unconditional UNet architecture.}
\begin{tabular}{lc}
\toprule
\textbf{Parameter} & \textbf{Value} \\
\midrule
Input/output channels   & 1 \\
Sample size             & $28 \times 28$ \\
Layers per block        & 2 \\
Channel widths          & (32, 64, 128) \\
Down blocks & DownBlock2D, AttnDownBlock2D, AttnDownBlock2D \\
Up blocks   & AttnUpBlock2D, AttnUpBlock2D, UpBlock2D \\
Dropout                 & 0.1 \\
\bottomrule
\end{tabular}
\end{table}

\paragraph{Conditional consistency model.}
The conditional model is an improved Consistency Training (iCT) model
that predicts the rotation angle of an MNIST image as a circular
$(\cos\theta, \sin\theta)$ vector, conditioned on the image.
The image conditioning encoder is a two-block CNN that maps
$28\times28$ images to a $128$-dimensional embedding, which is added
to the noisy angle representation at each forward pass.
The full architecture is:
\begin{itemize}
    \item \textbf{Condition encoder:} Conv(1→32, 3×3) →
          GroupNorm(8) → SiLU → Dropout2D(0.1) →
          Conv(32→32, 3×3) → GroupNorm(8) → SiLU →
          MaxPool(2×2) → Conv(32→64, 3×3) → GroupNorm(8) → SiLU →
          Dropout2D(0.15) → Conv(64→64, 3×3) → GroupNorm(8) →
          SiLU → MaxPool(2×2) → Dropout2D(0.2) → Flatten →
          Dropout(0.3) → Linear(3136→128) → SiLU
    \item \textbf{Input projection:} Linear(2→128) → LayerNorm
    \item \textbf{Time embedding:} Sinusoidal embedding (dim=128)
    \item \textbf{MLP backbone:} 5 hidden layers of width 128 with
          ReLU activations
    \item \textbf{Output:} LayerNorm → Linear(128→2), followed by
          iCT skip connection and $\ell_2$ normalization
\end{itemize}
The output is normalized to the unit circle and decoded via
$\hat{\theta} = \mathrm{atan2}(\sin, \cos) \bmod 360°$.

\subsection{Training Details}

\begin{table}[H]
\centering
\caption{Training hyperparameters for the MNIST models.}
\label{tab:mnist_hparams}
\begin{tabular}{lcc}
\toprule
\textbf{Hyperparameter} & \textbf{Unconditional} & \textbf{Conditional (iCT)} \\
\midrule
Optimizer               & AdamW          & AdamW \\
Learning rate           & $10^{-3}$      & $10^{-4}$ \\
Weight decay            & $10^{-4}$      & $10^{-4}$ \\
LR schedule             & None           & Cosine annealing \\
Batch size              & 256            & 256 \\
Epochs                  & 100            & 500 \\
Noise schedule & squaredcos\_cap\_v2 & iCT curriculum ($s_0{=}10$, $s_1{=}1280$) \\
Training timesteps      & 1000           & --- \\
$\epsilon$ (iCT)        & ---            & $0.002$ \\
Loss                    & MSE (noise)    & Smooth Huber \\
\bottomrule
\end{tabular}
\end{table}

The iCT model uses a discretization curriculum
$N(k) = \min(s_0 \cdot 2^{\lfloor k/K' \rfloor}, s_1) + 1$
that progressively increases the number of consistency training
boundaries during training. Noise levels are sampled according to a
log-normal proposal distribution with $P_{\rm mean} = -1.1$ and
$P_{\rm std} = 2.0$. During training of the conditional model,
light augmentation is applied to the conditioning images:
Gaussian noise ($\sigma = 0.05$) and random pixel dropout ($p = 0.1$).

\subsection{\methodname{} Inference Hyperparameters}
We report the hyperparameters used for all MNIST rotation experiments 
in Table~\ref{tab:mnist_lgd_hparams}. After optimization, all images 
are clamped to $[0, 1]$.
\begin{table}[H]
\centering
\caption{\methodname{} hyperparameters for the MNIST rotation experiments.
Guidance step size is $\zeta_t = r_t/(1+r_t^2) + 5t/1000$;
the unimodal experiment uses $2\cdot\zeta_t$.}
\label{tab:mnist_lgd_hparams}
\begin{tabular}{lccc}
\toprule
\textbf{Hyperparameter} & \textbf{Unimodal} & \textbf{Bimodal} & \textbf{Uniform} \\
\midrule
Distributional loss                & \multicolumn{3}{c}{Sliced Wasserstein Distance (SWD)} \\
Number of projections (SWD)        & \multicolumn{3}{c}{50} \\
Independent runs                   & \multicolumn{3}{c}{15} \\
iCT sampling steps                 & \multicolumn{3}{c}{$[150, 50, 20, 10, 5, 1]$} \\
\midrule
Conditional samples $n_{\rm cond}$ & 1500 & 1500 & 600  \\
DDIM inference steps $T$           & 130  & 125  & 290  \\
MC perturbations $n_{\rm MC}$      & 3    & 10   & 3    \\
Guidance step size                 & $2\cdot\zeta_t$ & $\zeta_t$ & $\zeta_t$ \\
Target variance $\sigma^2$         & 515  & 252  & --   \\
\bottomrule
\end{tabular}
\end{table}

\subsection{Full Results Across All Seeds}
\label{app:mnist:fullresults}

Figures~\ref{fig:app_bimodal_full}--\ref{fig:app_unimodal_full} show
all 15 seeds for each target $\mathcal{G}$, ordered by seed index.
The top-5 seeds selected by final loss are shown in
Figure~\ref{fig:mnist} of the main text.

\begin{figure}[H]
    \centering
    \includegraphics[width=\textwidth]{%
      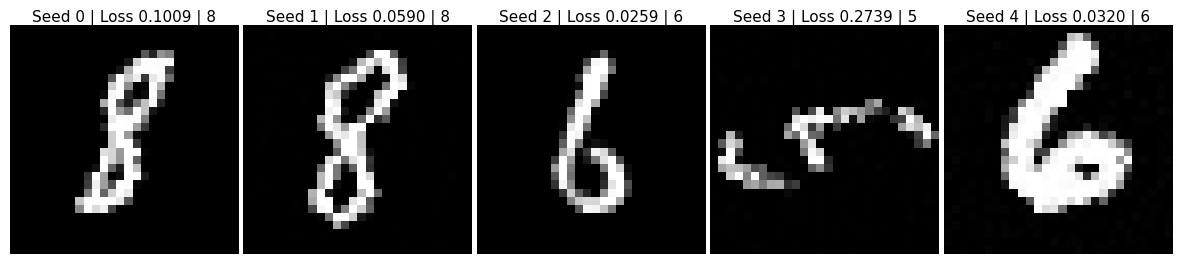}
    \vspace{0.5em}
    \includegraphics[width=\textwidth]{%
      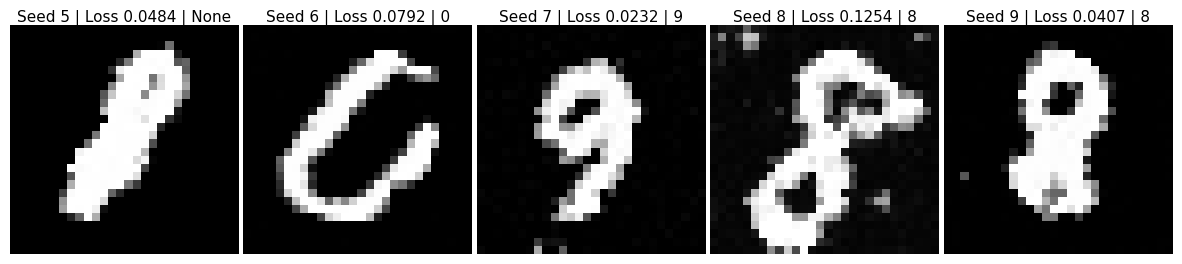}
    \vspace{0.5em}
    \includegraphics[width=\textwidth]{%
      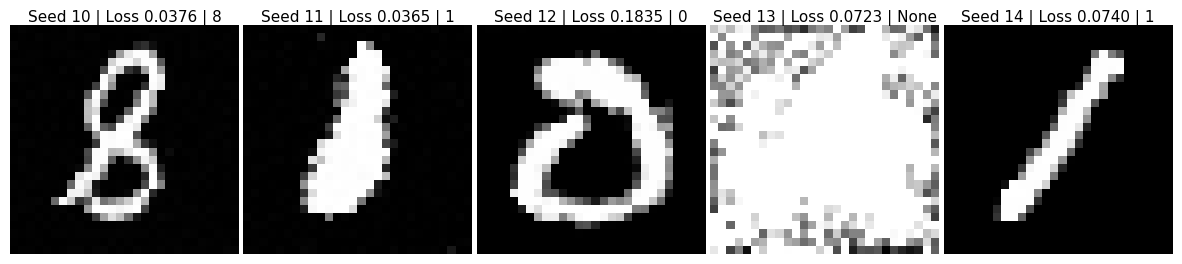}
    \caption{
        \textbf{Bimodal target $\mathcal{G}_{\mathrm{bimodal}}$ —
        all 15 seeds.}
        Each panel shows the optimized digit $x^*$ with its seed
        index, final SWD loss, and classifier label (``None''
        indicates no digit exceeded the confidence threshold).
        Seeds 0--4 (\emph{top row}), seeds 5--9 (\emph{middle row}),
        and seeds 10--14 (\emph{bottom row}).
        The optimizer consistently recovers rotationally symmetric
        digits such as ``1'',``8'', ``0'', ``6'', and ``9'', which remain
        visually valid under $180°$ rotation --- consistent with the
        bimodal target placing equal mass at $0°$ and $180°$.
    }
    \label{fig:app_bimodal_full}
\end{figure}

\begin{figure}[H]
    \centering
    \includegraphics[width=\textwidth]{%
      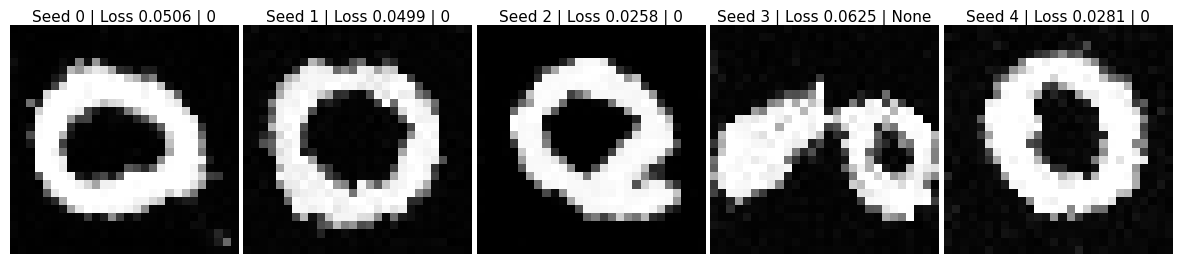}
    \vspace{0.5em}
    \includegraphics[width=\textwidth]{%
      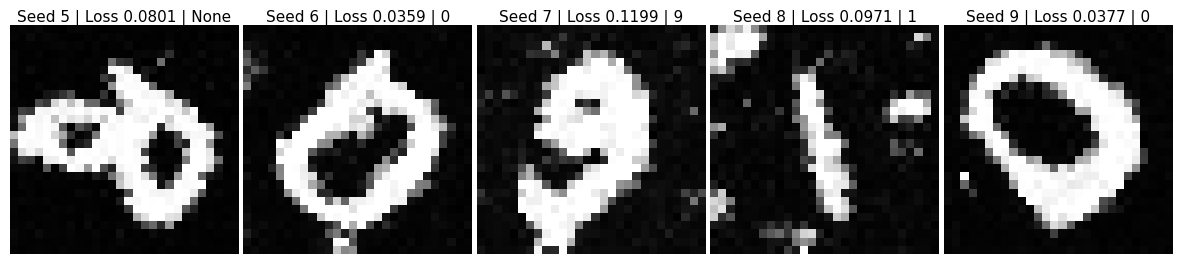}
    \vspace{0.5em}
    \includegraphics[width=\textwidth]{%
      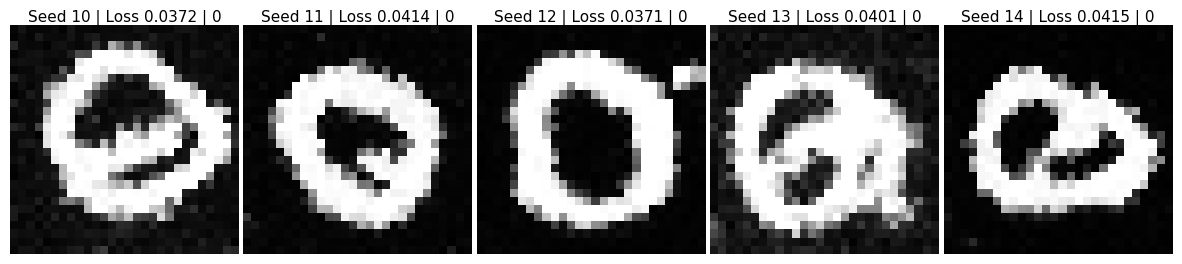}
    \caption{
        \textbf{Uniform target $\mathcal{G}_{\mathrm{uniform}}$ ---
        all 15 seeds.}
        Seeds 0--4 (\emph{top row}), seeds 5--9 (\emph{middle row}),
        and seeds 10--14 (\emph{bottom row}).
        The optimizer overwhelmingly recovers circular ``0'' digits
        across seeds, the only digit class that is plausibly valid
        at every rotation angle $\theta \in [0°, 360°)$.
    }
    \label{fig:app_uniform_full}
\end{figure}

\begin{figure}[H]
    \centering
    \includegraphics[width=\textwidth]{%
      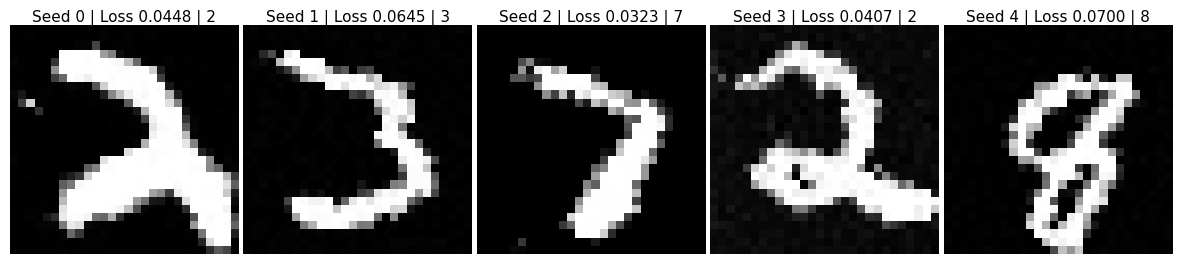}
    \vspace{0.5em}
    \includegraphics[width=\textwidth]{%
      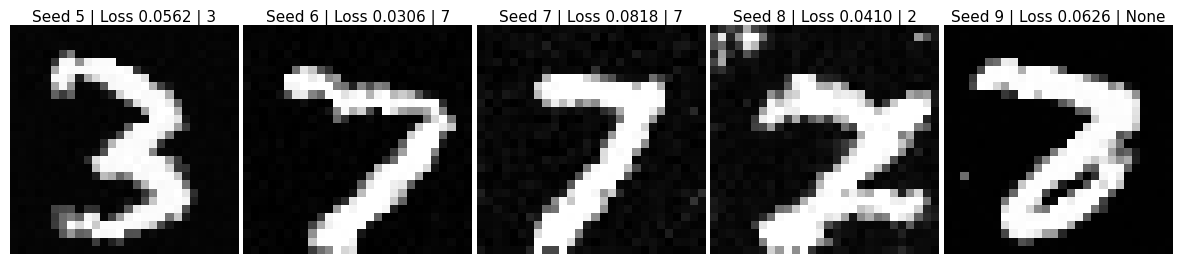}
    \vspace{0.5em}
    \includegraphics[width=\textwidth]{%
      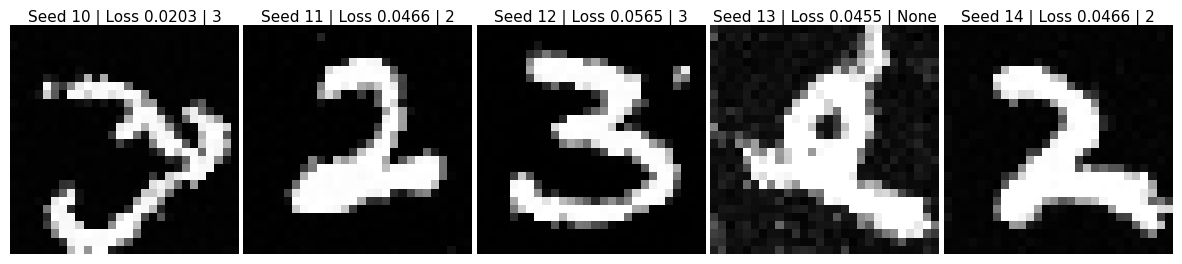}
    \caption{
        \textbf{Unimodal target $\mathcal{G}_{\mathrm{unimodal}}$ ---
        all 15 seeds.}
        Seeds 0--4 (\emph{top row}), seeds 5--9 (\emph{middle row}),
        and seeds 10--14 (\emph{bottom row}).
        The optimizer recovers canonical upright digits such as
        ``2'', ``3'', and ``7'', consistent with the unimodal target
        placing mass near $0°$ (upright orientation).
    }
    \label{fig:app_unimodal_full}
\end{figure}

\subsection{Code and Reproducibility}
Full experimental code, training scripts, and inference scripts
are available at \url{\repomnist}.
Pretrained model checkpoints (unconditional UNet and conditional
iCT model) are available on Hugging Face at \url{\hfmnist}.

\subsection{Compute}

All MNIST generative models were trained on an NVIDIA T4 GPU using
PyTorch with CUDA. Training the unconditional UNet takes
approximately 178 minutes (100 epochs); training the conditional
iCT model takes approximately 162 minutes (500 epochs).

\methodname{} optimization was run on an NVIDIA L40S GPU. Each seed takes
approximately 20--55 seconds depending on the target $\mathcal{G}$
(see Table~\ref{tab:mnist} for per-target times), and 15 seeds
complete in under 15 minutes per target.

\section{Stable Diffusion: Experimental Details and Analysis}
\label{app:sd}

\subsection{Loss Function}
\label{app:sd:loss}

We use the unbiased U-statistic MMD estimator applied to CLIP
ViT-L/14 embeddings ($\mathbb{R}^{768}$).
Given conditional samples $\{y_i\}_{i=1}^{n}$ and target samples
$\{\tilde{y}_j\}_{j=1}^{m}$, the estimator is:

\begin{multline}
\label{eq:mmd-ustat}
\widehat{\operatorname{MMD}}\bigl(\{y_i\},\{\tilde{y}_j\}\bigr)
= \bigl(\,\Bigl|
  \tfrac{1}{n(n-1)}\sum_{i \neq i'} k(y_i,y_{i'})
- \tfrac{2}{nm}\sum_{i,j} k(y_i,\tilde{y}_j) \\
+ \tfrac{1}{m(m-1)}\sum_{j \neq j'} k(\tilde{y}_j,\tilde{y}_{j'})
\Bigr| + \epsilon\,\bigr)^{0.5},
\end{multline}

where $\epsilon = 10^{-8}$ ensures numerical stability and the
absolute value handles the slightly-negative estimates that arise
when $n$ is small. The kernel $k$ is a generalized RBF:
\[
k(a,b) = \exp\!\Bigl(-\Bigl(\frac{\|a-b\|^2}{2\sigma^2}\Bigr)^{\!\alpha}\Bigr),
\]
where $\sigma$ is estimated via the median heuristic on the merged
sample,
\[
\sigma = \sqrt{\tfrac{1}{2}
  \operatorname{median}\bigl(\{\|x_i - y_j\|^2\}_{i,j}\bigr)},
\]
and $\alpha = 1.0$ in all experiments, recovering the standard
Gaussian RBF. Gradients flow only through the generated samples
$\{y_i\}$; target samples $\{\tilde{y}_j\}$ are fully detached
from the computation graph.

\subsection{Target Distribution Construction}
\label{app:sd:target}

All target images are generated using SDXL-Turbo + ControlNet-Scribble
with the source male portrait scribble as conditioning and
\texttt{controlnet\_conditioning\_scale = 0.5} with two inference generation steps.
CLIP ViT-L/14 embeddings are computed from the generated images to form
the empirical target distribution in $\mathbb{R}^{768}$.
For each scenario, $N_{\text{target}} = 2{,}000$ images are drawn at
evaluation time.

\paragraph{Balanced and skewed targets.}
Images are sampled from two gender-specific prompts at the target
proportions ($50\%/50\%$ or $25\%/75\%$):
\begin{itemize}
  \item \textit{Male}:\texttt{"a superrealistic portrait photograph of a man, studio lighting"}
  \item \textit{Female}:\texttt{"a superrealistic portrait photograph of a woman, studio lighting"}
\end{itemize}

\paragraph{Gender interpolation target.}
The target discretises a one-dimensional feminine-to-masculine axis
in CLIP space using four anchor prompts at equal weight ($25\%$ each),
spanning the full spectrum:
\begin{enumerate}
  \item \textit{Woman}: \texttt{"superrealistic portrait photograph
        of a woman, extremely feminine features, studio lighting"}
  \item \textit{Woman with masculine features}: \texttt{"a superrealistic
        portrait photograph of a woman with masculine features,
        heavy brow ridge, studio lighting"}
  \item \textit{Man with feminine features}: \texttt{"a superrealistic
        portrait photograph of a man with extremely feminine features,
        soft delicate face, high cheekbones, studio lighting"}
  \item \textit{Man}: \texttt{"a superrealistic portrait photograph
        of a man, extremely masculine features, studio lighting"}
\end{enumerate}
This four-anchor discretisation approximates a continuous 1-D target
on the gender-axis submanifold of CLIP space.

\paragraph{Age interpolation target.}
The target is a uniform distribution over male portrait ages
$\{40, 41, \ldots, 79\}$. For each integer age, images are generated
with the prompt:
\begin{quote}
\texttt{"a superrealistic portrait photograph of a \{age\}-year-old
man, studio lighting, sharp focus, photographic"}
\end{quote}
yielding $n_{\text{per age}} = 50$ images per age value and
$N_{\text{target}} = 2{,}000$ images in total.
\subsection{Optimization Hyperparameters}
\label{app:sd:hparams}

Table~\ref{tab:sd_hparams} summarises all hyperparameters.
\methodname{} is initialized via SDEdit~\citep{sdedit}: the source
male portrait scribble is VAE-encoded, noised to the starting
timestep $t_{\text{start}}$, and denoised with \methodname{} guidance
from that point onward. All subsequent steps of
Algorithm~\ref{alg:mlgd-outer} are unchanged. The DDIM scheduler
runs for $T$ total steps; \methodname{} executes $T - t_{\text{start}}$
guided denoising steps per run. At each step, $n_{\text{cond}}$
conditional samples are drawn via SDXL-Turbo + ControlNet-Scribble
to estimate the distributional loss gradient, and $n_{\text{target}}$
images are used to represent the target distribution $\mathcal{G}$.

\begin{table}[H]
\centering
\caption{\methodname{} hyperparameters for the Stable Diffusion experiments.}
\label{tab:sd_hparams}
\vspace{2mm}

\setlength{\tabcolsep}{3.5pt} 

\begin{tabular}{@{}lcccc@{}} 
\toprule
\textbf{Hyperparameter} & \textbf{Balanced} & \textbf{Skewed} 
    & \textbf{Gender Interp.} & \textbf{Age Interp.} \\
\midrule
Unconditional model $\hat{\mathcal{P}}(X)$
    & \multicolumn{4}{c}{SDXL-Base-1.0} \\
Conditional model $\hat{\mathcal{P}}(Y \mid X{=}x)$
    & \multicolumn{4}{c}{SDXL-Turbo + ControlNet-Scribble} \\
Scheduler
    & \multicolumn{4}{c}{DDIM} \\
Guidance scale (CFG)
    & \multicolumn{4}{c}{0.0} \\
ControlNet scale
    & \multicolumn{4}{c}{0.5} \\
Inference steps (cond.)
    & \multicolumn{4}{c}{2} \\
Distributional loss
    & \multicolumn{4}{c}{MMD} \\
MMD bandwidth scale
    & \multicolumn{4}{c}{1.0} \\
MMD kernel $\alpha$
    & \multicolumn{4}{c}{1.0} \\
Loss scale
    & \multicolumn{4}{c}{1.0} \\
\midrule
$T$ (total DDIM steps)
    & 250 & 250 & 250  & 250 \\
$t_{\text{start}}$ (SDEdit init step)
    & 125 & 125 & 125 & 175 \\
$n_{\text{cond}}$
    & 100 & 100 & 100 & 120 (3Xage) \\
$n_{\text{target}}$
    & 100 & 100 & 100 & 120 (3Xage) \\
Base strength $\zeta$
    & $4.0$ & $5.0$ & $5.0$ & $3.0$ \\
Target classes
    & 2 (M/F) & 2 (M/F) & 4 & 40 ages \\
Target proportions
    & 50/50\% & 25/75\% & 25\% ea. & uniform \\
Seed
    & 1 & 1 & 1 & 1 \\
\bottomrule
\end{tabular}
\end{table}

\subsection{Baseline Details}
\label{app:sd:baselines}

\paragraph{Source scribble.}
The unmodified male portrait scribble extracted from one of the
generated male portraits via \texttt{lllyasviel/Annotators}
HED detector in scribble mode. This serves as the lower-bound
reference for MMD improvement.

\paragraph{Average scribble.}
For each class we generated portraits per target class when condition on the source scribble. HED edge maps are extracted from one generated portrait per class
using \texttt{lllyasviel/Annotators}. The average scribble is
a pixel-wise weighted mean of the per-class HED maps, with weights
equal to the target proportions. For the balanced target:
$0.5\cdot\text{HED}_{\text{male}} + 0.5\cdot\text{HED}_{\text{female}}$.
For the skewed target:
$0.25\cdot\text{HED}_{\text{male}} + 0.75\cdot\text{HED}_{\text{female}}$.
For the interpolation targets, all four (or all age) HED maps
are averaged at equal weight.
This baseline fails by construction for multi-modal targets:
pixel-wise blending of HED maps produces ghosted facial structures
that lie outside the image manifold.

\paragraph{SDEdit Best.}
Unguided SDEdit is applied to the source scribble with the same
noise strength ($t_{\text{start}} = 125$, $T = 250$) and
$\text{CFG} = 0.0$. The number of candidates is chosen so that
the total computation time matches the wall-clock time of \methodname{},
ensuring a fair time-budget comparison. Concretely, the time per
candidate is measured on the first candidate, and
\[
  N_{\text{candidates}}
  = \left\lfloor
      \frac{T_{\text{\methodname{}}}}
           {t_{\text{candidate}}}
    \right\rfloor,
\]
where $T_{\text{\methodname{}}} \approx 177$ minutes for age interpolation and $\approx241$--$247$ minutes for the rest.
Each candidate is scored by proxy MMD on $n = 250$ images;
the best-scoring candidate is re-evaluated on $N = 2{,}000$ images.

\subsection{Evaluation Protocol}
\label{app:sd:eval}
For each method and scenario, $N = 2{,}000$ portrait images are
generated from the candidate scribble using SDXL-Turbo +
ControlNet-Scribble with the neutral prompt:
\begin{quote}
\texttt{"a superrealistic professional photograph of"}
\end{quote}
This neutral prompt is used deliberately so that the conditional
distribution $\mathcal{P}(Y \mid X = x^*)$ is driven by the
scribble geometry alone, not by an explicit gender or age prompt.

To ensure a fair comparison across methods, all $N = 2{,}000$
images for every method are generated using the \emph{same fixed
sequence of random seeds}: image $i$ is generated with seed
$42 + i$ for all methods alike. Consequently, the only source of
variation between the outputs of different methods is the
candidate scribble $x^*$ itself, not the stochastic sampling
trajectory of the diffusion model.

MMD is computed in CLIP ViT-L/14 embedding space against a
freshly drawn target sample of $N = 2{,}000$ images.
For the discrete attribute targets (balanced and skewed), gender
classification is additionally performed via two-way CLIP cosine
softmax against the male and female text prompts. The male
proportion $p(\text{male})$ is reported with a $95\%$
normal-approximation confidence interval:
\[
  \hat{p} \pm 1.96 \sqrt{\frac{\hat{p}(1-\hat{p})}{N}}.
\]

\subsection{Code and Reproducibility}
Full experimental code, optimization scripts, and baseline
implementations are available at \url{\reposd}.
All pretrained models used (SDXL-Base, SDXL-Turbo,
ControlNet-Scribble, CLIP ViT-L/14) are publicly available
via their respective HuggingFace repositories.

\subsection{Compute}
\label{app:sd:compute}

\methodname{} optimization runs were executed on a single NVIDIA L40S GPU
(48\,GB) on a Linux cluster (CPython 3.12, PyTorch with CUDA),
with each run taking approximately 177 minutes for the age interpolation and $241$--$247$ minutes for the rest.
The Average and SDEdit Best baseline was evaluated on the same hardware;
the time per candidate was measured empirically and the candidate
budget was set to match \methodname{}'s wall-clock time exactly
(Section~\ref{app:sd:baselines}).
The average scribble baseline requires no optimization and runs
in under one minute. Final validation on $2{,}000$ samples was performed on a single  NVIDIA A100 (80 GB) .

\subsection{What Does \methodname{} Actually Change in the Scribble?}
\label{app:sd:interp}

\paragraph{Where \methodname{} edits the scribble.}
Figure~\ref{fig:scribble_change_regions} contrasts the source scribble
with the \methodname{}-optimized scribble, along with their pixel-wise
$|\cdot|$ difference.
The edits are spatially concentrated rather than diffuse: the largest
modifications lie along the eye and eyebrow region and the lower hair
outline. Most of the scribble is essentially unchanged; a minority of
anatomically-meaningful contours account for the bulk of the edit
magnitude.

Yet even with the pixel-difference heatmap in hand, it remains unclear
\emph{why} these subtle edits suffice to shift the generated
distribution toward a balanced male--female output. From a human
perceptual standpoint, the changes are too small and too localized to
constitute a legible gender signal. We hypothesize that \methodname{} has
identified a structural sensitivity in the ControlNet--SDXL pipeline:
a region of the conditioning space where small perturbations
disproportionately influence the generative process, not because they
are semantically meaningful to a human observer, but because the
diffusion model's internal representations are unexpectedly responsive
to them. This points to a \textbf{potential vulnerability in the underlying
generative model} --- one that \methodname{} successfully exploits to achieve
distributional control, but that also raises questions about the
robustness and interpretability of ControlNet-based conditioning more
broadly.

\begin{figure}[htbp]
    \centering
    \includegraphics[width=\textwidth]{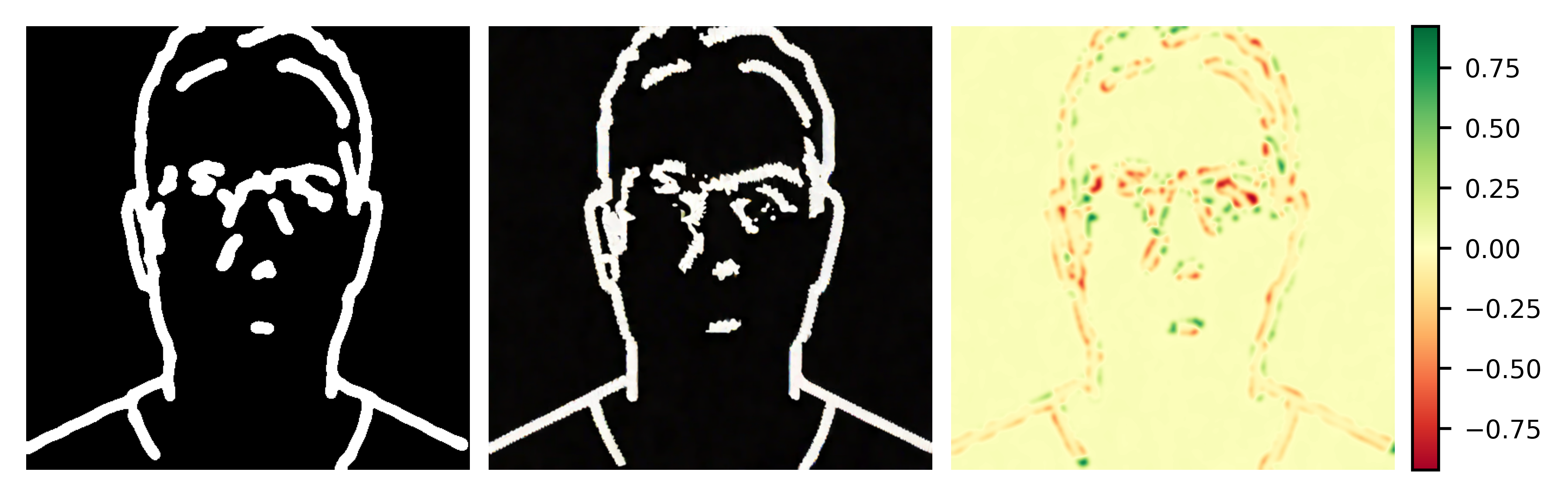}
        \caption{
            \textbf{Where \methodname{} edits the scribble (balanced target).}
            \emph{Left:} Source scribble used to initialise the SDEdit
            trajectory. \emph{Middle:} \methodname{}-optimized scribble targeting
            $\mathcal{G}_{bal}$. \emph{Right:} Signed difference
            (green $=$ added, red $=$ removed). The edits are not diffuse
            noise but concentrate on the hairline, eye and eyebrow
            region and hair outline.
           Quantitatively, $91.3\%$ of pixels are unchanged, while $2.8\%$
            are added and $5.9\%$ are removed, where a pixel is considered changed if its Gaussian-smoothed signed difference ($\sigma = 2.5$) exceeds a threshold of $\tau = 0.1$.
        }
    \label{fig:scribble_change_regions}
\end{figure}

\paragraph{Why those regions: gradient saliency.}
To test whether these edits are mechanistically meaningful rather than
incidental, we ask which scribble pixels the CLIP gender signal is
\emph{gradient-sensitive} to, independent of what \methodname{} actually did.
Define the per-pixel saliency as $|\partial s / \partial x|$, where
$s(x) = \cos(\mathrm{CLIP}(I(x)),\, t_{\text{man}}) - \cos(\mathrm{CLIP}(I(x)),\, t_{\text{woman}})$
is the CLIP gender score of the image $I(x)$ generated from scribble
$x$, and $t_{\text{man}}, t_{\text{woman}}$ are the text embeddings of
``a photo of a man''/``a photo of a woman''. The gradient is obtained
by backpropagating $s$ from the generated image of the source scribble
through the full differentiable pipeline (scribble $\to$
ControlNet-Scribble $\to$ VAE decoder $\to$ CLIP image encoder) and
averaged over $20$ random seeds to marginalise over sampler noise.
Large saliency values indicate pixels whose local perturbation would
most change the downstream gender score.
We restrict the analysis to the line pixels of the source scribble,
i.e.\ the non-background pixels of the input ($n = 33{,}828$), since
our question is whether, among the strokes that were actually drawn,
the ones more predictive of the target attribute receive proportionally
larger edits. Figure~\ref{fig:saliency_and_correlation}  shows
the saliency map on the source scribble's line support; the
highest-saliency pixels concentrate on the hairline and the eye
and eyebrow region, We quantify the alignment by
binning line pixels by saliency magnitude and plotting the mean \methodname{}
modification per bin (Figure~\ref{fig:saliency_and_correlation}). The relationship is monotone increasing with mild saturation
at high saliency and Spearman $\rho = 0.56$: pixels of the source
stroke that the loss is more sensitive to receive proportionally
larger edits.

To investigate whether the concentration of edits on existing strokes
is a structural artefact of the latent geometry rather than a purely
semantic signal, we encoded both the source and \methodname{}-optimized
scribbles through the VAE and computed, at each spatial position of
the $64{\times}64$ latent grid, the source latent norm (L2 norm across
the four channels) and the edit magnitude (L2 norm of the latent
difference). The two maps are positively correlated (Pearson $r =
0.55$), suggesting that a substantial portion of the edit
concentration is explained by the fact that stroke pixels produce
non-zero latent activations while background pixels do not --- edits
are suppressed in the background simply because there is little latent
signal to perturb there. This does not invalidate the saliency
finding, but it does suggest that the spatial concentration of \methodname{}
edits is at least partly a consequence of the latent geometry: the
optimiser can only meaningfully move in directions where the latent is
already active.
\begin{figure}[htbp]
    \centering
    \includegraphics[width=\textwidth]{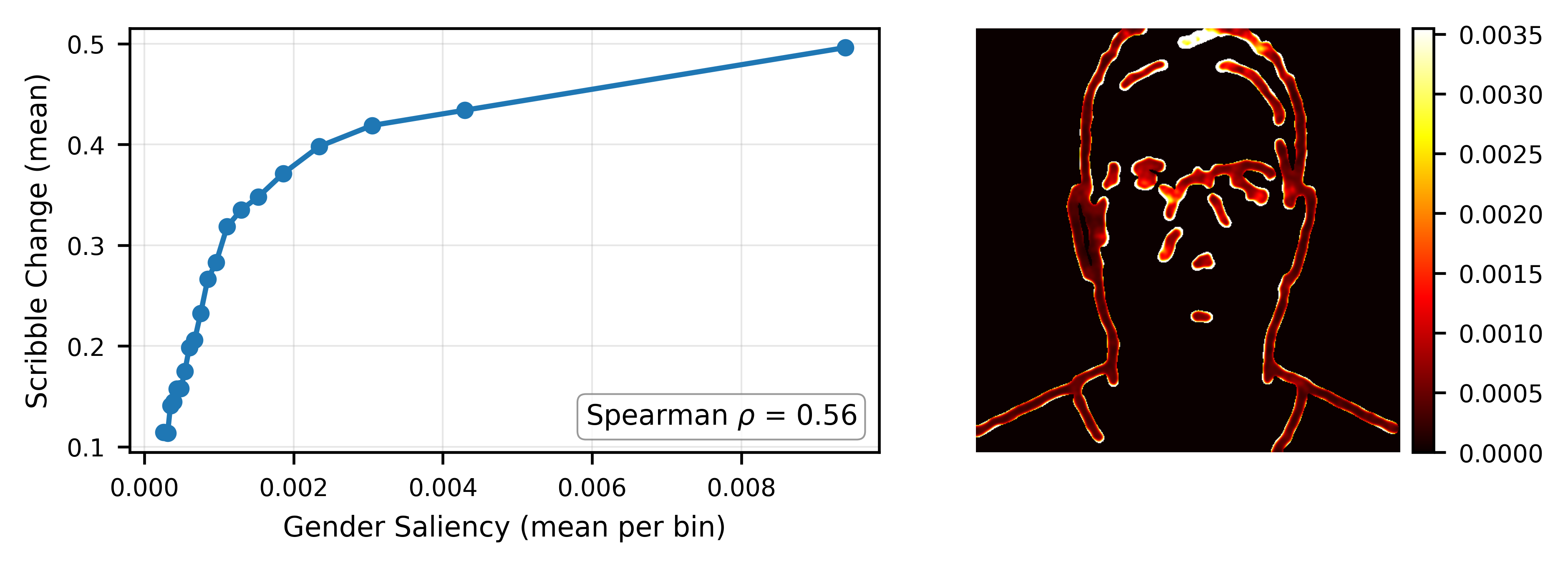}
    \caption{
        \textbf{Why those regions: gradient saliency and its alignment
        with \methodname{} edits.} \emph{Left:} Binned mean \methodname{} scribble change
        versus binned mean saliency, on line pixels only
        ($n = 33{,}828$). The monotone relationship (Spearman
        $\rho = 0.56$) shows that \methodname{} preferentially edits the
        source-scribble strokes that are most gradient-sensitive for
        the downstream gender score. \emph{Right:} Gender saliency
        $|\partial s/\partial x|$ on the source scribble's line
        pixels, averaged over $20$ seeds; $s$ is the CLIP gender
        score backpropagated through the full scribble $\to$
        ControlNet $\to$ VAE $\to$ CLIP pipeline. High-saliency
        regions concentrate on the hairline and eye
and eyebrow region, 
    }
    \label{fig:saliency_and_correlation}
\end{figure}

\paragraph{Takeaway.}
The two analyses together resolve the apparent paradox of small pixel
edits producing a large distributional shift. Among the strokes of
the source scribble, \methodname{} does not drift diffusely: it injects an
\emph{aligned} gradient signal that preferentially modifies the
subset of strokes the downstream CLIP score is structurally sensitive
to. This validates the inner estimator of
Algorithm~\ref{alg:inner-estimator} as a \emph{mechanism} rather than
a black box: despite the long differentiable chain from scribble
through ControlNet, VAE, and CLIP, the pulled-back gradient at each
denoising step retains enough semantic structure to localise onto the
anatomical features actually predictive of the target attribute.
The regions identified --- the eye, eyebrow, and hairline --- are
anatomically intuitive as gender-discriminative features. Yet from a
human perceptual standpoint, the magnitude of the edits is
strikingly small: changes that are nearly imperceptible to a human
observer are sufficient to produce a substantial shift in the
generated gender distribution. This disproportion between edit
magnitude and distributional effect suggests a \textbf{sensitivity
in the underlying generative model} that goes beyond what human
perception would predict, and raises broader questions about the
robustness of ControlNet-based conditioning to subtle input
perturbations.
The saliency analysis and latent correlation were computed on a single
NVIDIA A100 GPU, with the saliency maps averaged over $20$ random seeds.
We report this analysis for the balanced target only; we expect the same
mechanism in the skewed and continuum scenarios, but have not
verified it quantitatively.

\FloatBarrier 
\newpage
\subsection{Qualitative Examples}
\label{app:sd:qualitative}

Figures~\ref{fig:app_bimodal_qual}--\ref{fig:app_age_qual} show,
side by side, the scribbles produced by each method for all four
scenarios, along with representative portrait images generated
from each scribble.
\begin{figure}[h]
    \centering
    \begin{subfigure}[b]{0.23\textwidth}
        \centering
        \includegraphics[width=\textwidth]{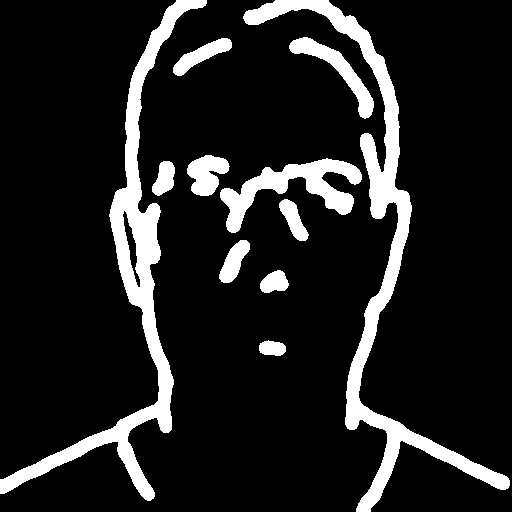}
    \end{subfigure}
    \hfill
    \begin{subfigure}[b]{0.23\textwidth}
        \centering
        \includegraphics[width=\textwidth]{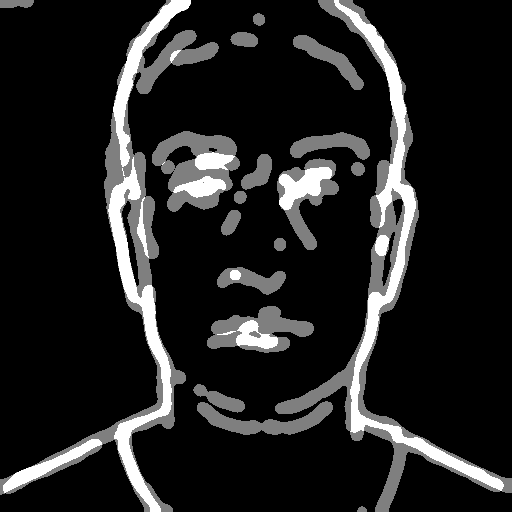}
    \end{subfigure}
    \hfill
    \begin{subfigure}[b]{0.23\textwidth}
        \centering
        \includegraphics[width=\textwidth]{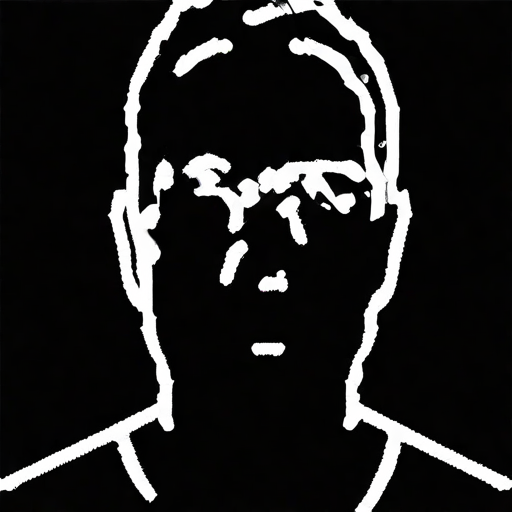}
    \end{subfigure}
    \hfill
    \begin{subfigure}[b]{0.23\textwidth}
        \centering
        \includegraphics[width=\textwidth]{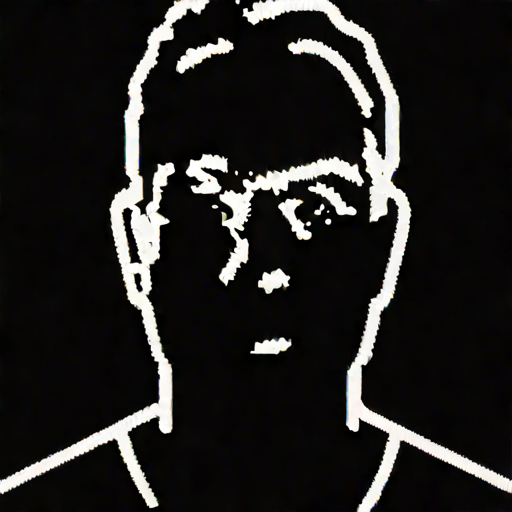}
    \end{subfigure}

    \vspace{0.5em}

    \begin{subfigure}[b]{\textwidth}
        \centering
        \includegraphics[width=\textwidth]{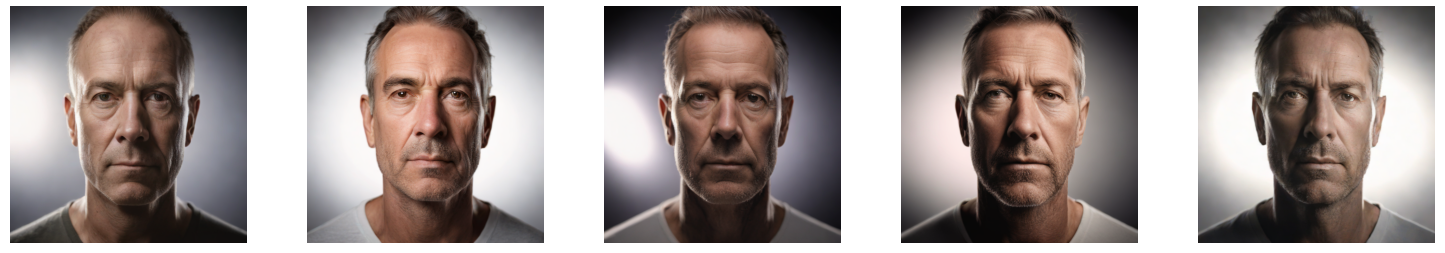}
    \end{subfigure}

    \vspace{0.3em}

    \begin{subfigure}[b]{\textwidth}
        \centering
        \includegraphics[width=\textwidth]{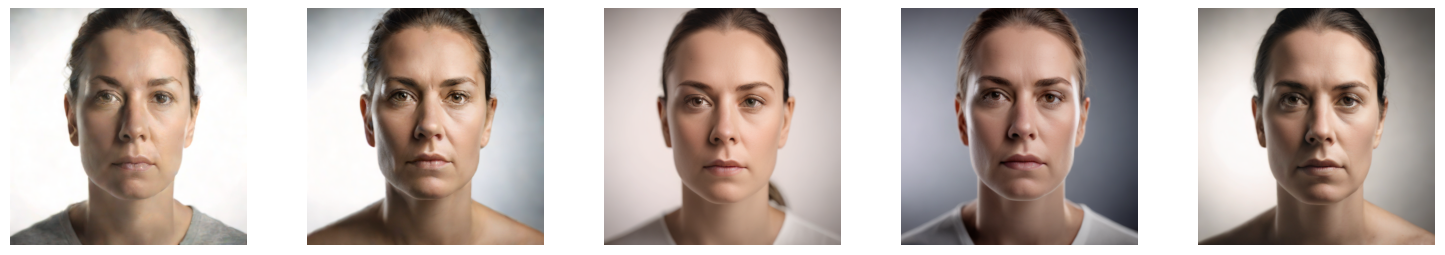}
    \end{subfigure}

    \caption{
        \textbf{Balanced target}
        $\mathcal{G}_{bal} = 0.5\,\delta_{\text{male}} + 0.5\,\delta_{\text{female}}$.
        \emph{Top row, left to right:} source scribble, average scribble (Avg), best SDEdit scribble (SDEdit Best), and our method's scribble (\methodname{}).
        \emph{Rows 2--3, top to bottom:} five sample target portraits used as guidance ---
        male targets and female targets, reflecting the balanced 50/50 distribution.
    }
    \label{fig:app_bimodal_qual}
\end{figure}

\begin{figure}[h]
    \centering
    \begin{subfigure}[b]{\textwidth}
        \centering
        \includegraphics[width=\textwidth]{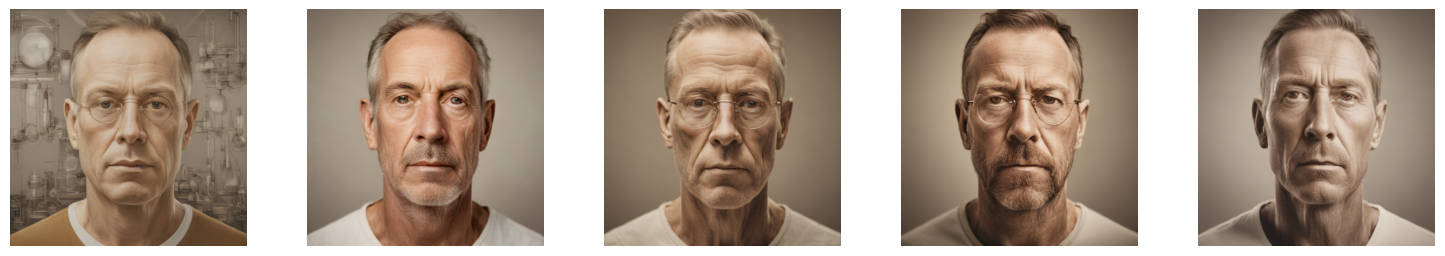}
        \caption*{Source}
    \end{subfigure}

    \vspace{0.3em}

    \begin{subfigure}[b]{\textwidth}
        \centering
        \includegraphics[width=\textwidth]{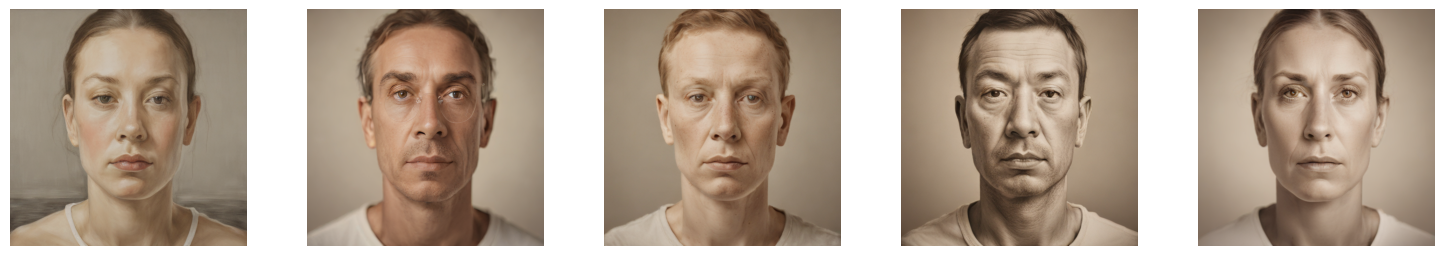}
        \caption*{Average scribble}
    \end{subfigure}

    \vspace{0.3em}

    \begin{subfigure}[b]{\textwidth}
        \centering
        \includegraphics[width=\textwidth]{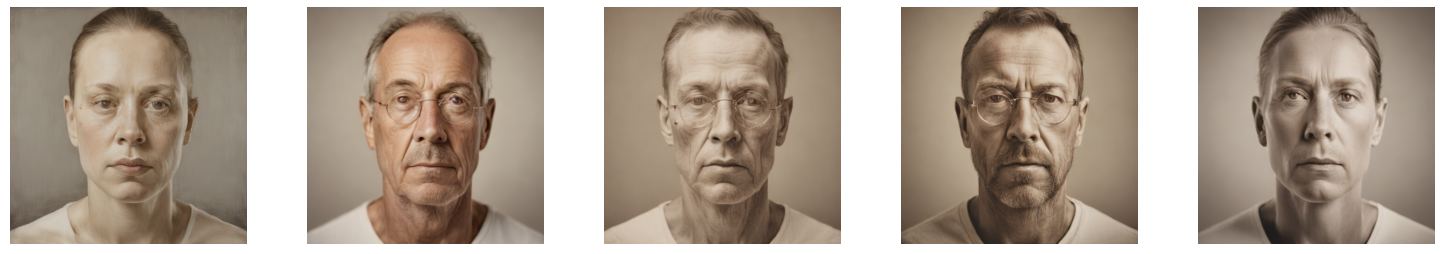}
        \caption*{SDEdit Best}
    \end{subfigure}

    \vspace{0.3em}

    \begin{subfigure}[b]{\textwidth}
        \centering
        \includegraphics[width=\textwidth]{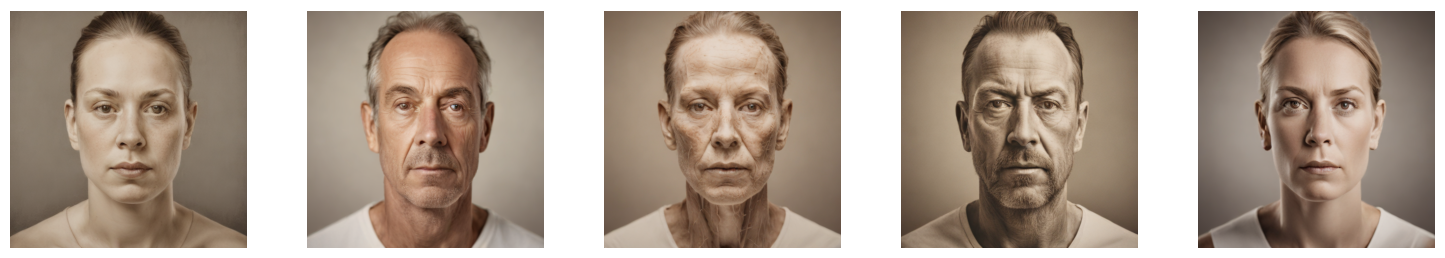}
        \caption*{\methodname{} (ours)}
    \end{subfigure}

    \caption{
        \textbf{Balanced target --- conditional samples.}
        Five representative portrait images generated from each method's
        output scribble via SDXL-Turbo + ControlNet-Scribble with the
        neutral prompt, for the balanced target
        $\mathcal{G}_{bal} = 0.5\,\delta_{\text{male}} + 0.5\,\delta_{\text{female}}$. Each column is generated with an identical seed; differences across rows reflect the scribble $x^*$ alone.
    \methodname{} produces a more balanced mix of male and female portraits
    compared to the baselines; notably, even when the source image is male,
    the optimized scribble generates female portraits to balance the output
    distribution (see columns 1, 3, and 5).
    }
    \label{fig:app_bimodal_cond}
\end{figure}

\begin{figure}[h]
    \centering
    \begin{subfigure}[b]{0.23\textwidth}
        \centering
        \includegraphics[width=\textwidth]{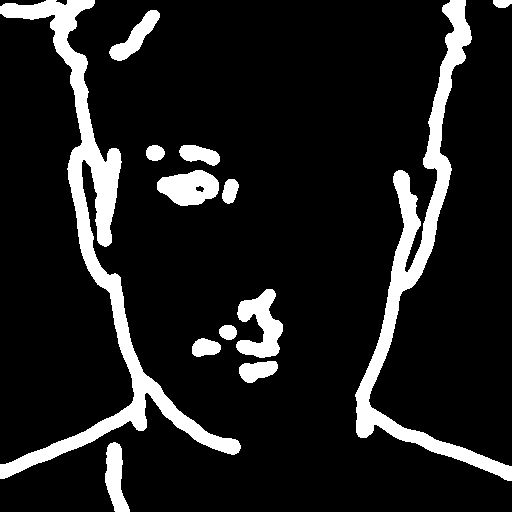}
    \end{subfigure}
    \hfill
    \begin{subfigure}[b]{0.23\textwidth}
        \centering
        \includegraphics[width=\textwidth]{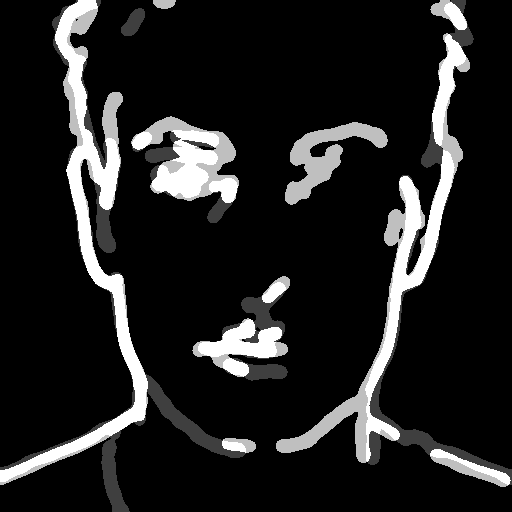}
    \end{subfigure}
    \hfill
    \begin{subfigure}[b]{0.23\textwidth}
        \centering
        \includegraphics[width=\textwidth]{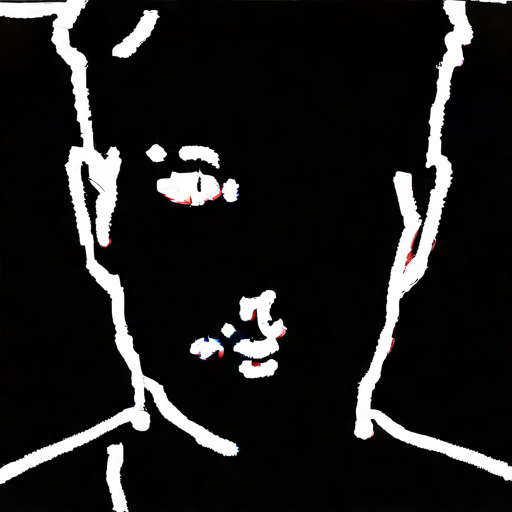}
    \end{subfigure}
    \hfill
    \begin{subfigure}[b]{0.23\textwidth}
        \centering
        \includegraphics[width=\textwidth]{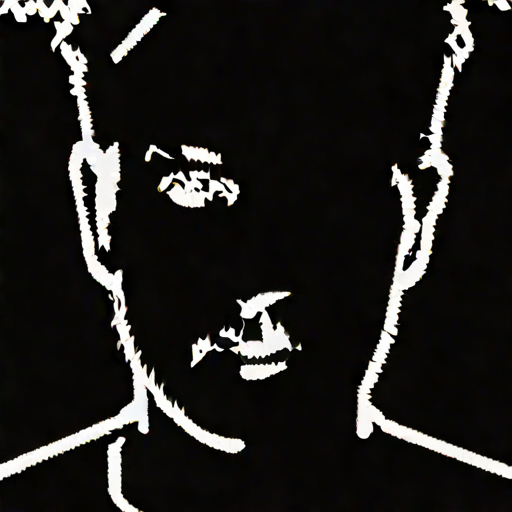}
    \end{subfigure}

    \caption{
        \textbf{Skewed target}
        $\mathcal{G}_{skew} = 0.25\,\delta_{\text{male}} + 0.75\,\delta_{\text{female}}$.
        \emph{left to right:} source scribble, average scribble (Avg),
        best SDEdit scribble (SDEdit Best), and our method's scribble (\methodname{}).
        The target portraits are the same as in Figure~\ref{fig:app_bimodal_qual} but with different proportions ($25\%-75\%$).}
    \label{fig:app_ratio_qual}
\end{figure}

\begin{figure}[h]
    \centering
    \begin{subfigure}[b]{\textwidth}
        \centering
        \includegraphics[width=\textwidth]{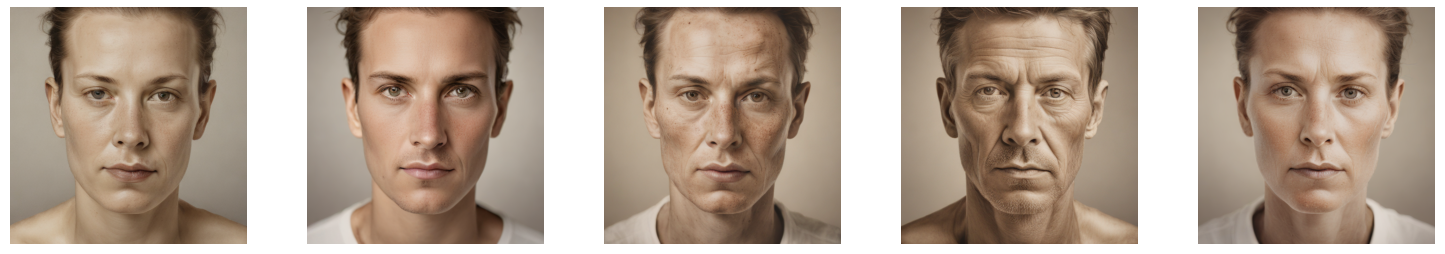}
        \caption*{Source}
    \end{subfigure}

    \vspace{0.3em}

    \begin{subfigure}[b]{\textwidth}
        \centering
        \includegraphics[width=\textwidth]{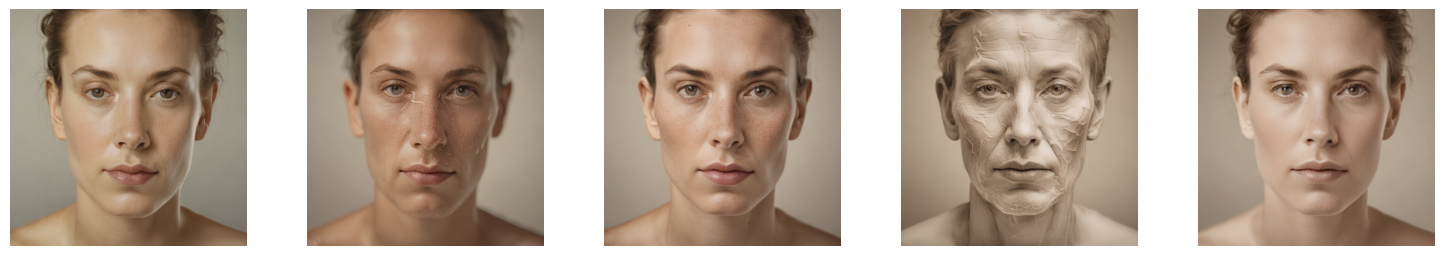}
        \caption*{Average scribble}
    \end{subfigure}

    \vspace{0.3em}

    \begin{subfigure}[b]{\textwidth}
        \centering
        \includegraphics[width=\textwidth]{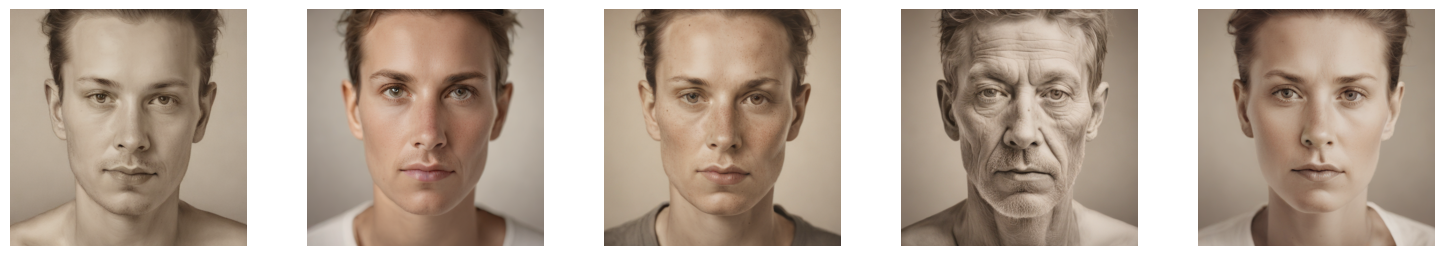}
        \caption*{SDEdit Best}
    \end{subfigure}

    \vspace{0.3em}

    \begin{subfigure}[b]{\textwidth}
        \centering
        \includegraphics[width=\textwidth]{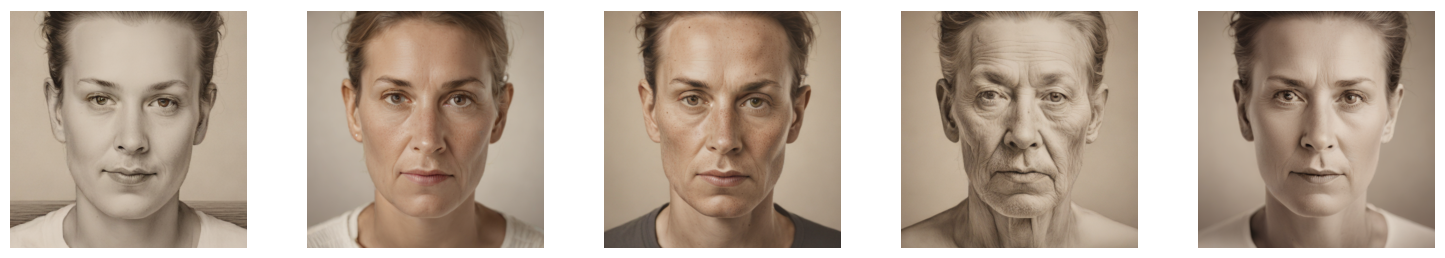}
        \caption*{\methodname{} (ours)}
    \end{subfigure}

    \caption{
        \textbf{Skewed target --- conditional samples.}
        Five representative portrait images generated from each method's
        output scribble, for the skewed target
        $\mathcal{G}_{skew} = 0.25\,\delta_{\text{male}} + 0.75\,\delta_{\text{female}}$.Each column is generated with an identical seed; differences across rows reflect the scribble $x^*$ alone.
        \methodname{} yields a predominantly female output distribution
        consistent with the $75\%$ female target proportion.
    }
    \label{fig:app_ratio_cond}
\end{figure}

\begin{figure}[h]
    \centering
    \begin{subfigure}[b]{0.23\textwidth}
        \centering
        \includegraphics[width=\textwidth]{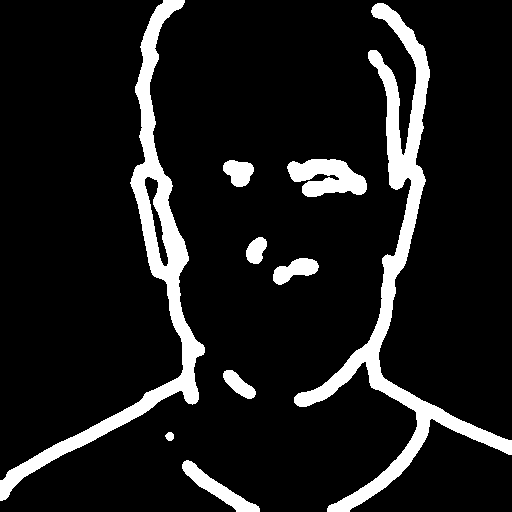}
    \end{subfigure}
    \hfill
    \begin{subfigure}[b]{0.23\textwidth}
        \centering
        \includegraphics[width=\textwidth]{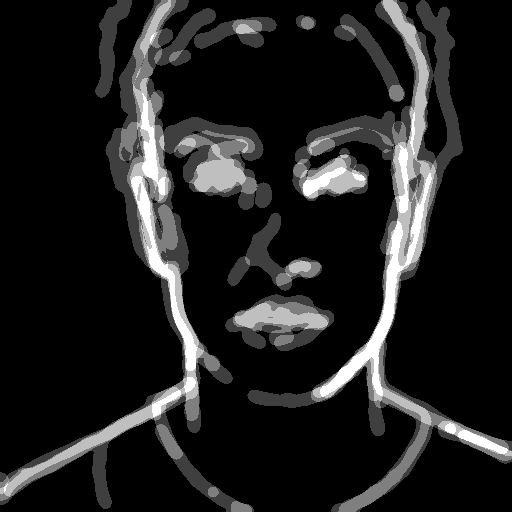}
    \end{subfigure}
    \hfill
    \begin{subfigure}[b]{0.23\textwidth}
        \centering
        \includegraphics[width=\textwidth]{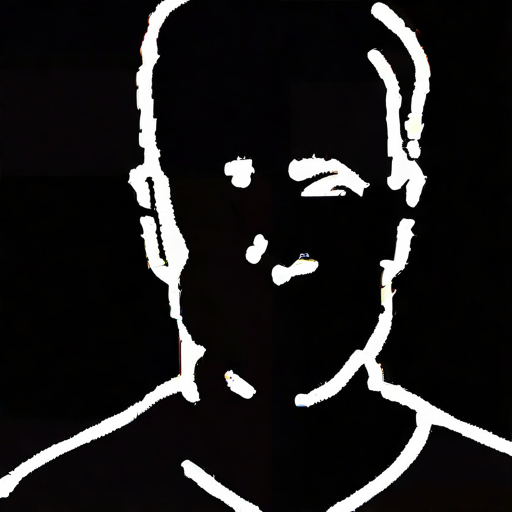}
    \end{subfigure}
    \hfill
    \begin{subfigure}[b]{0.23\textwidth}
        \centering
        \includegraphics[width=\textwidth]{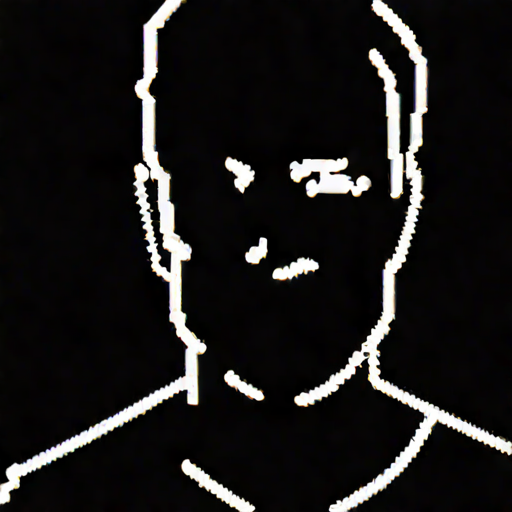}
    \end{subfigure}

    \vspace{0.5em}

    \begin{subfigure}[b]{\textwidth}
        \centering
        \includegraphics[width=\textwidth]{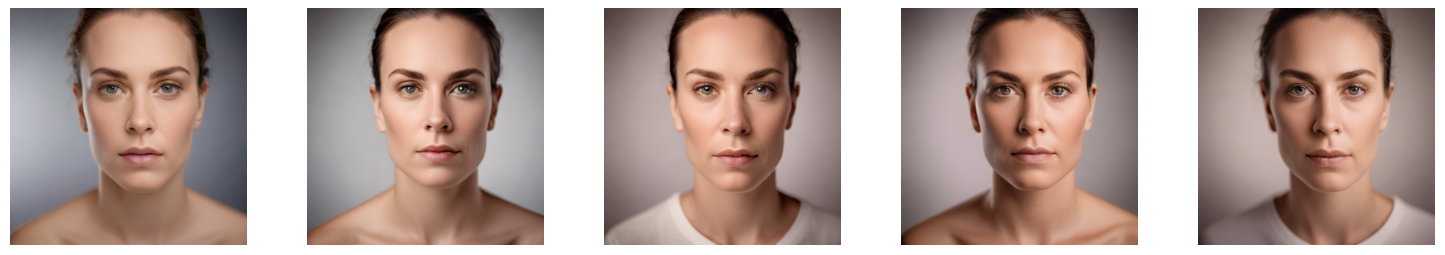}
    \end{subfigure}

    \vspace{0.3em}

    \begin{subfigure}[b]{\textwidth}
        \centering
        \includegraphics[width=\textwidth]{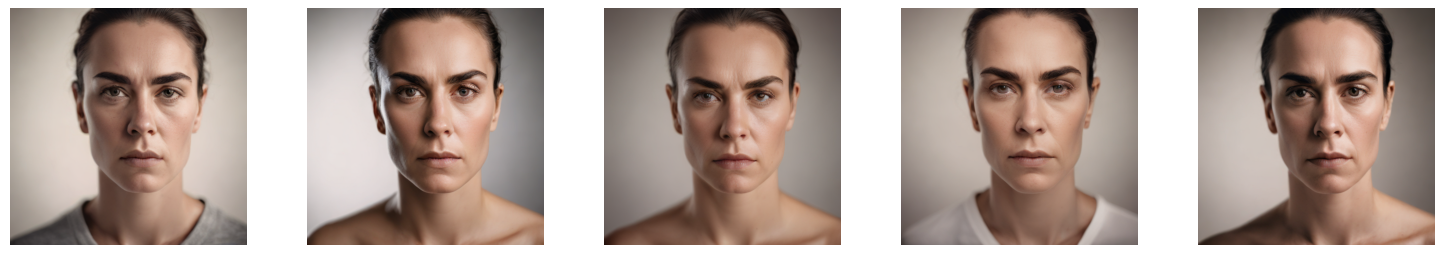}
    \end{subfigure}

    \vspace{0.3em}

    \begin{subfigure}[b]{\textwidth}
        \centering
        \includegraphics[width=\textwidth]{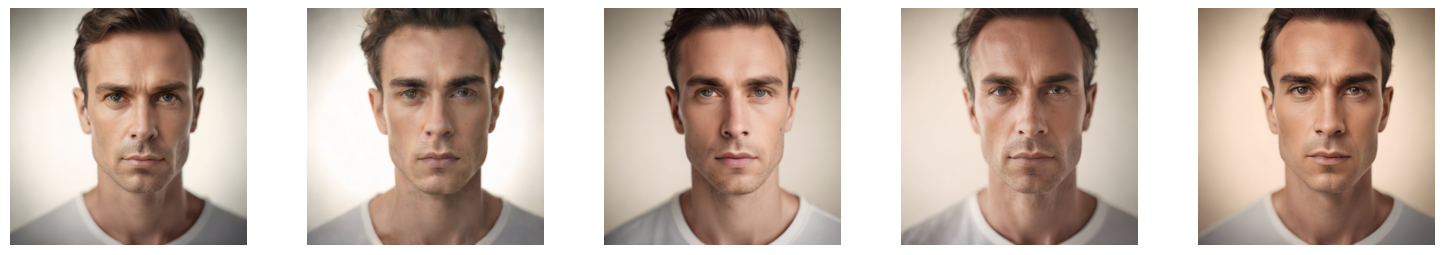}
    \end{subfigure}

    \vspace{0.3em}

    \begin{subfigure}[b]{\textwidth}
        \centering
        \includegraphics[width=\textwidth]{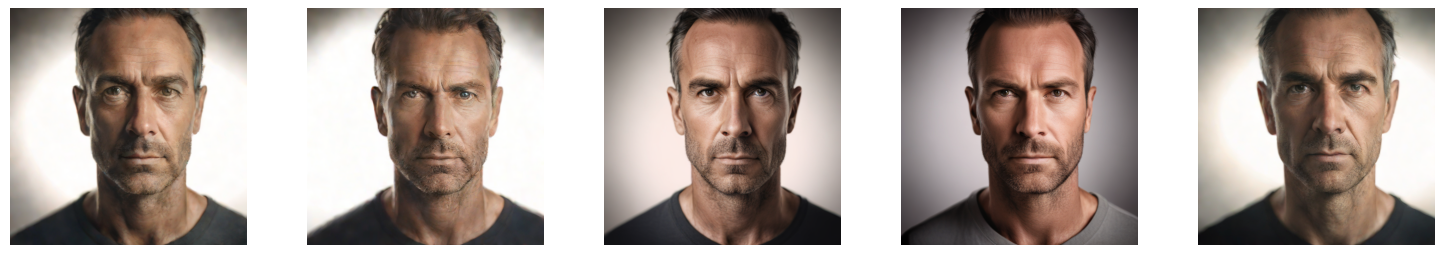}
    \end{subfigure}

    \caption{
        \textbf{Gender interpolation target}
        $\mathcal{G}_{interpGender} = \frac{1}{4}\sum_{k=1}^{4}\delta_{c_k}$,
        a four-anchor discretisation of the 1-D feminine-to-masculine
        continuum in CLIP space at equal weight ($25\%$ each).
        \emph{Top row, left to right:} source scribble, average scribble (Avg),
        best SDEdit scribble (SDEdit Best), and our method's scribble (\methodname{}).
        \emph{Rows 2--5, top to bottom:} five sample target portraits
        for each anchor spanning the gender axis ---
        woman; woman with masculine features; man with feminine features; man.
    }
    \label{fig:app_interp_qual}
\end{figure}
\begin{figure}[h]
    \centering
    \begin{subfigure}[b]{\textwidth}
        \centering
        \includegraphics[width=\textwidth]{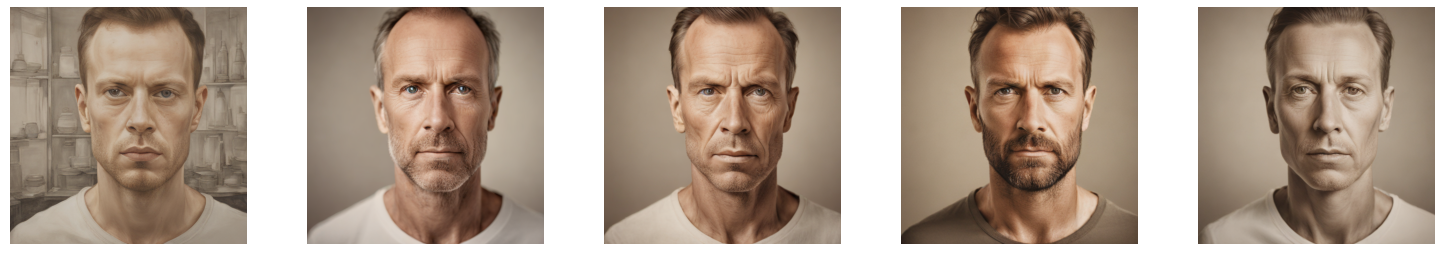}
        \caption*{Source}
    \end{subfigure}

    \vspace{0.3em}

    \begin{subfigure}[b]{\textwidth}
        \centering
        \includegraphics[width=\textwidth]{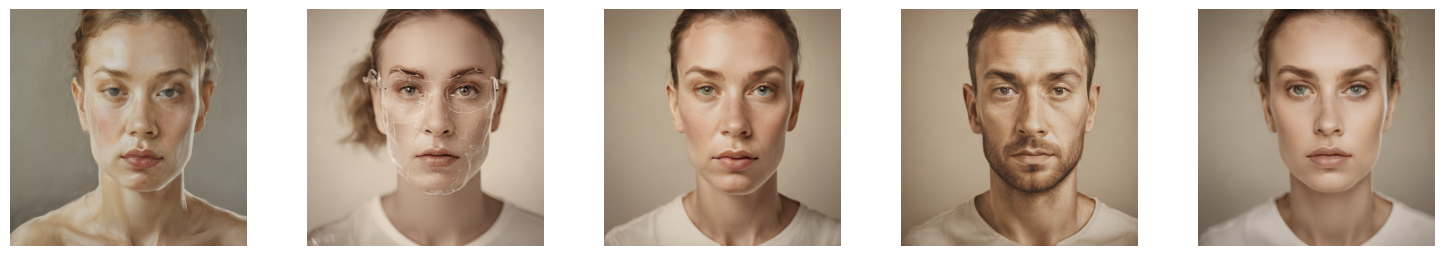}
        \caption*{Average scribble}
    \end{subfigure}

    \vspace{0.3em}

    \begin{subfigure}[b]{\textwidth}
        \centering
        \includegraphics[width=\textwidth]{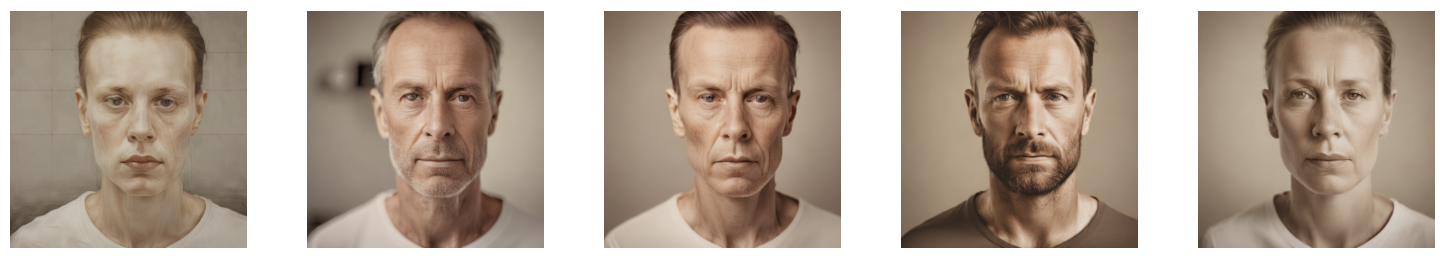}
        \caption*{SDEdit Best}
    \end{subfigure}

    \vspace{0.3em}

    \begin{subfigure}[b]{\textwidth}
        \centering
        \includegraphics[width=\textwidth]{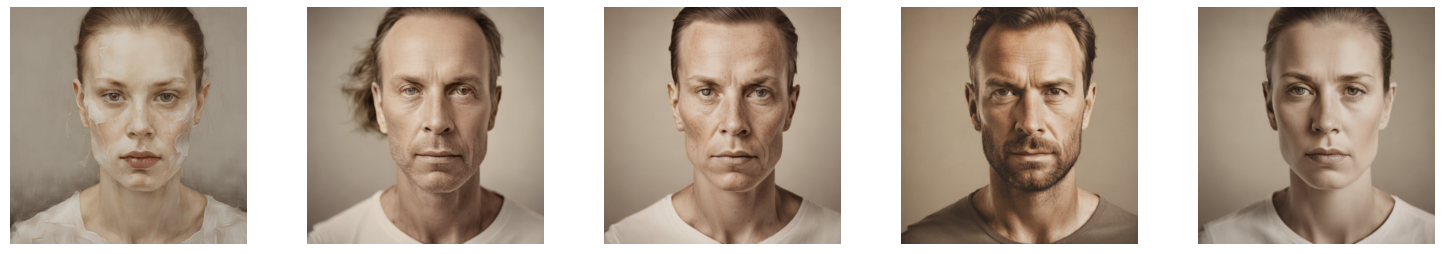}
        \caption*{\methodname{} (ours)}
    \end{subfigure}

\caption{
        \textbf{Gender interpolation target --- conditional samples.}
        Five representative portrait images generated from each method's
        output scribble for the gender interpolation target
        $\mathcal{G}_{interpGender} = \frac{1}{4}\sum_{k=1}^{4}\delta_{c_k}$,
        which spans the feminine-to-masculine continuum in CLIP space.
        Each column shares the same diffusion seed across all rows;
        thus, any visual difference between rows is attributable solely
        to the candidate scribble $x^*$.
        \methodname{} yields a notably more diverse set of portraits across the
        four gender-axis anchors than the baselines. In particular, the
        second portrait from the left in the \methodname{} row illustrates how
        the optimized scribble shifts the conditional distribution toward
        the feminine-to-masculine continuum: the same seed that produces
        a conventionally masculine face under the source scribble here
        yields a softer, less masculine appearance, with a ponytail
        hairstyle and no facial stubble, reflecting the interpolation induced by \methodname{}'s output scribble.
    }
    \label{fig:app_interp_cond}
\end{figure}

\begin{figure}[h]
    \centering
    \begin{subfigure}[b]{0.23\textwidth}
        \centering
        \includegraphics[width=\textwidth]{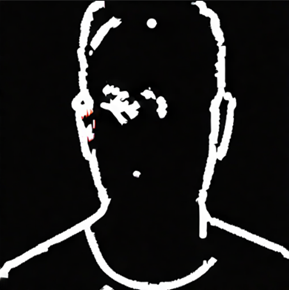}
    \end{subfigure}
    \hfill
    \begin{subfigure}[b]{0.23\textwidth}
        \centering
        \includegraphics[width=\textwidth]{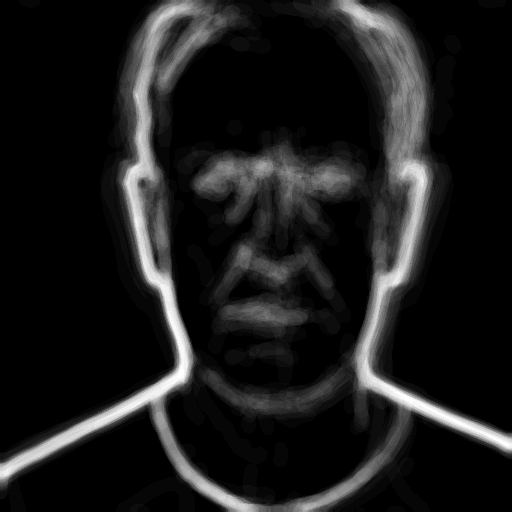}
    \end{subfigure}
    \hfill
    \begin{subfigure}[b]{0.23\textwidth}
        \centering
        \includegraphics[width=\textwidth]{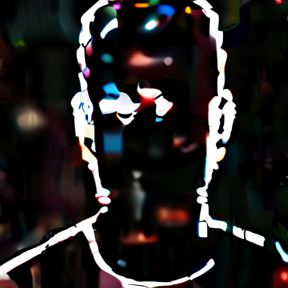}
    \end{subfigure}
    \hfill
    \begin{subfigure}[b]{0.23\textwidth}
        \centering
        \includegraphics[width=\textwidth]{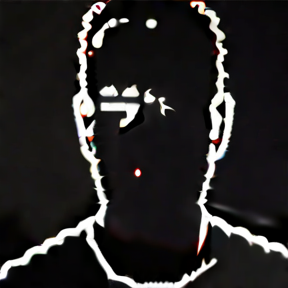}
    \end{subfigure}

    \vspace{0.5em}

    \begin{subfigure}[b]{\textwidth}
        \centering
        \includegraphics[width=\textwidth]{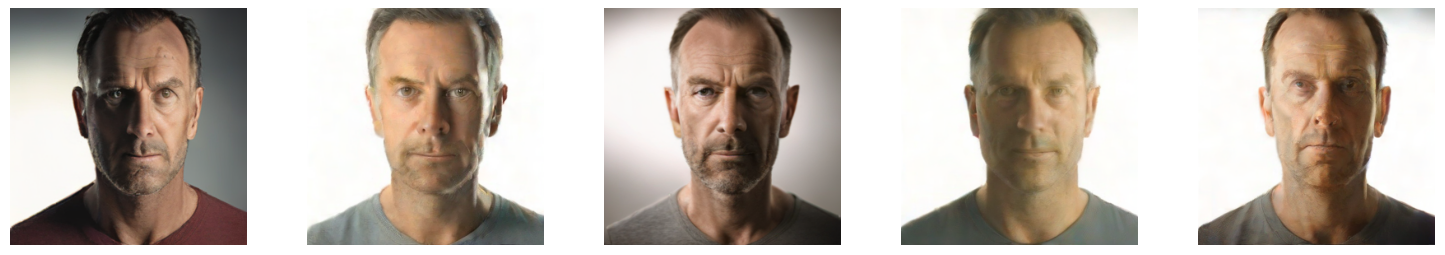}
    \end{subfigure}








    \begin{subfigure}[b]{\textwidth}
        \centering
        \includegraphics[width=\textwidth]{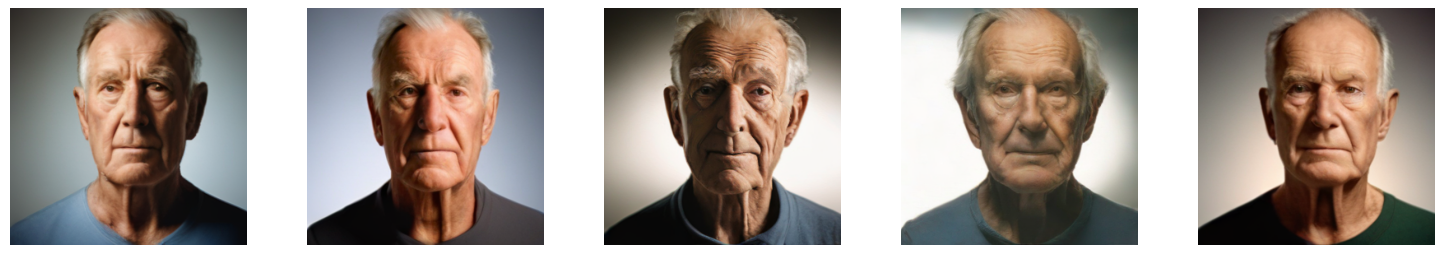}
    \end{subfigure}

    \caption{
    \textbf{Age interpolation target}
        $\mathcal{G}_{interpAge}$, supported on a one-dimensional age
        continuum over male portraits (ages 40--79, uniform distribution).
        \emph{Top row, left to right:} source scribble, average scribble (Avg),
        best SDEdit scribble (SDEdit Best), and our method's scribble (\methodname{}).
        \emph{Rows 2--3, top to bottom:} five sample target portraits
        at the two extremes of the age continuum ---
        40 years old (youngest) and 79 years old (oldest).
    }
    \label{fig:app_age_qual}
\end{figure}

\begin{figure}[h]
    \centering
    \begin{subfigure}[b]{\textwidth}
        \centering
        \includegraphics[width=\textwidth]{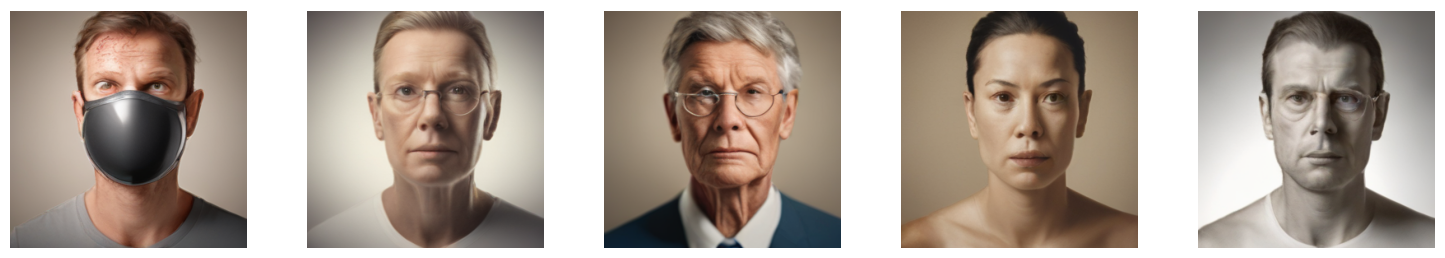}
        \caption*{Source}
    \end{subfigure}

    \vspace{0.3em}

    \begin{subfigure}[b]{\textwidth}
        \centering
        \includegraphics[width=\textwidth]{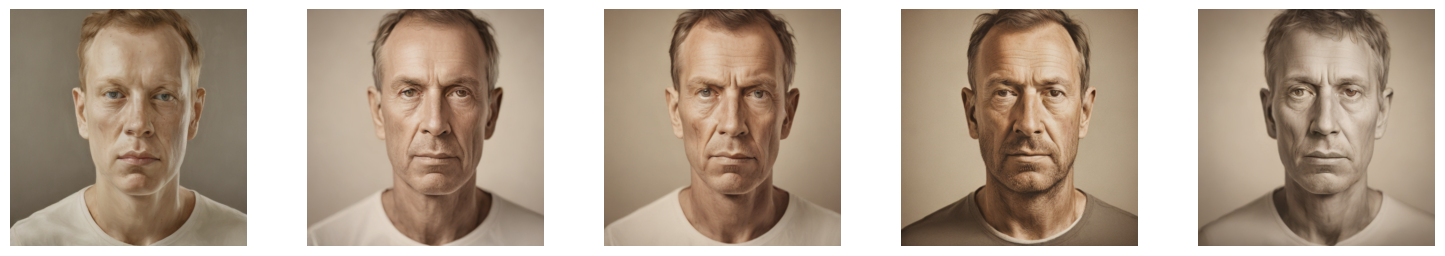}
        \caption*{Average scribble}
    \end{subfigure}

    \vspace{0.3em}

    \begin{subfigure}[b]{\textwidth}
        \centering
        \includegraphics[width=\textwidth]{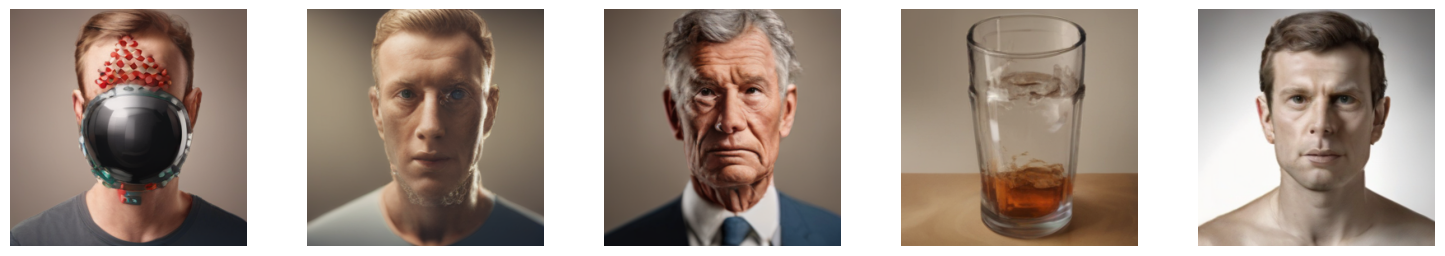}
        \caption*{SDEdit Best}
    \end{subfigure}

    \vspace{0.3em}

    \begin{subfigure}[b]{\textwidth}
        \centering
        \includegraphics[width=\textwidth]{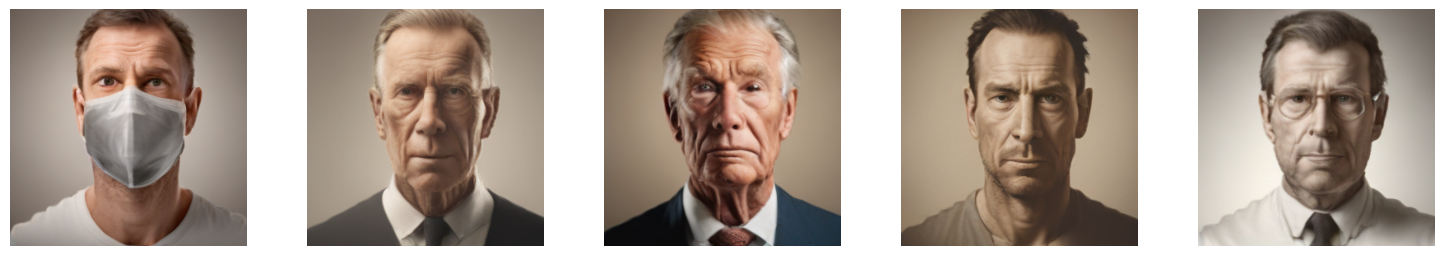}
        \caption*{\methodname{} (ours)}
    \end{subfigure}

\caption{
        \textbf{Age interpolation target --- conditional samples.}
        Five representative portrait images generated from each method's
        output scribble, for the age interpolation target
        $\mathcal{G}_{interpAge}$ (uniform over male ages 40--79).
        Each column is generated with an identical seed; differences
        across rows reflect the scribble $x^*$ alone.
        Since ControlNet conditioning is applied at scale $0.5$ and the
        neutral prompt contains no explicit subject constraint, the first
        column showing a masked face is a valid output.
        The model may also produce at low rates also off-manifold outputs --- such as a
        glass or a camera image (the latter triggered by the word
        \texttt{photograph} in the prompt) rather than a portrait, also for our scribble.
    }
    \label{fig:app_age_cond}
\end{figure}

\begin{figure}[h]
    \centering
    \includegraphics[width=\textwidth]{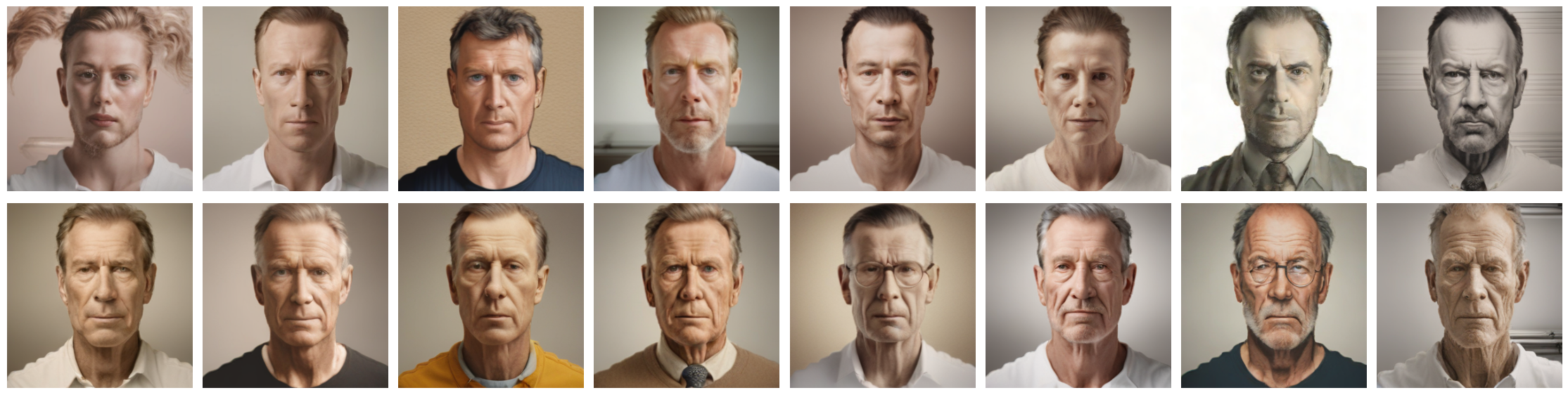}
    \caption{
        \textbf{Age interpolation }
        Sixteen representative portrait images generated by \methodname{} for a uniform 
        age target over $[40, 80]$, selected via projection onto the age axis 
        (every ${\sim}6$th image out of 100 total). All images are generated 
        with the same optimized scribble and a neutral prompt.
    }
    \label{fig:app_age_grid}
\end{figure}
\newpage
\FloatBarrier 
\section{Full \methodname{} Algorithm}
\label{sec:app:fullalg}

Full details of the complete algorithm used in all experiments are given
as Algorithm~\ref{alg:mlgd-d-full}. In particular, the full version
contains four implementation details that the simplified main-text
Algorithms~\ref{alg:mlgd-outer}--\ref{alg:inner-estimator} abstract away:
(i) $n_{\mathrm{MC}}$ Monte-Carlo perturbations of $\hat{x}_0$ at every
outer step, used as a variance-reduction device for the inner gradient
estimate; (ii) a log-sum-exp aggregation of the per-perturbation losses
(rather than a plain mean) when forming the guidance gradient, which
stabilises optimisation at small $n_{\mathrm{MC}}$; (iii) the
denoising step written in its general SDE form with drift $f(x,t)$ and
diffusion coefficient $g(t)$, which the main text collapses into an
abstract \textsc{DDIM\_step}; and (iv) explicit $\beta$-parameterisation
of the guidance, so that $\zeta_t = \beta \cdot h_t$ with a
noise-schedule factor $h_t$ and the scalar inverse temperature $\beta$
is the single hyperparameter controlling the target
$\mathcal{Q}_\beta(x) \propto \mathcal{P}(x)\, e^{-\beta \mathcal{L}(x)}$.
The target sample $\mathcal{S}_{\mathcal{G}}$ is also drawn once at the
start, outside the outer loop. All experimental runs in the paper use
Algorithm~\ref{alg:mlgd-d-full}; the main-text versions are pedagogical
restatements of the outer and inner parts respectively.

\begin{algorithm}[h]
\caption{\methodname{} Conditional Sampling (full detailed version)}
\label{alg:mlgd-d-full}
\footnotesize
\raggedright
\textbf{Input:}
Pretrained score network $s_\theta$ for $\mathcal{P}(X)$;
pretrained few-step model $f_\phi$ for $\mathcal{P}(Y|X)$;
target distribution $\mathcal{G}$;
drift function $f(x,t)$;
diffusion coefficient $g(t)$;
noise schedule $\{\bar{\alpha}_t\}_{t=1}^T$;
step-size schedule $\{\zeta_t\}$ (e.g.\ $\zeta_t = \zeta' / \|\nabla \mathcal{L}\|$
as in~\citep{DiffusionPosteriorSampling}, with $\zeta'$ proportional to $\beta$).\\
\textbf{Parameters:}
Total timesteps $T$;
step size $\Delta t = 1/T$;
inverse temperature $\beta \ge 0$ (target is
$\mathcal{Q}_\beta(x) \propto \mathcal{P}(x)\, e^{-\beta\mathcal{L}(x)}$;
$\beta=0$ samples from $\mathcal{P}$, $\beta\to\infty$ concentrates at
$\arg\min_x \mathcal{L}$);
MC samples $n_{\text{MC}}$ (number of perturbed $\hat{x}_0$ estimates used to stabilize the gradient);
conditional samples $n_{\text{cond}}$ (samples drawn from $\mathcal{P}(Y|X)$ per candidate);
target samples $n_{\text{target}}$ (samples drawn from $\mathcal{G}$);
distributional loss $\mathcal{L}$ (e.g.\ MMD, SWD).\\
\textbf{Output:} Sample $x_0$ approximately drawn from $\mathcal{Q}_\beta$\\[4pt]
\begin{algorithmic}[1]
\State $S_{\mathcal{G}} \leftarrow \{\tilde y_1, \ldots, \tilde y_{n_{\text{target}}}\} \sim \mathcal{G}$
  \Comment{Sample target distribution once}
\State $x_T \sim \mathcal{N}(0, I)$
\For{$i = 0, \ldots, T-1$}
  \State $t \leftarrow 1 - i \cdot \Delta t$
  \State $\hat{x}_0 \leftarrow \tfrac{1}{\sqrt{\bar{\alpha}_t}}
    \bigl(x_t + (1-\bar{\alpha}_t)\,s_\theta(x_t,t)\bigr)$
    \Comment{Tweedie estimate of $x_0$}
  \State $\{\hat{x}_0^{(i)}\}_{i=1}^{n_{\text{MC}}} \sim
    \mathcal{N}\!\left(\hat{x}_0,\; \tfrac{g(t)}{\sqrt{1+g(t)^2}}\,I\right)$
    \Comment{MC perturbations around $\hat{x}_0$}
  \For{$i = 1, \ldots, n_{\text{MC}}$}
    \State $S_{Y|X}^{(i)} \leftarrow \{y_1^{(i)}, \ldots, y_{n_{\text{cond}}}^{(i)}\}
      \sim \mathcal{P}(Y \mid X{=}\hat{x}_0^{(i)})$ \textbf{using} $f_\phi$
      \Comment{Fast conditional samples}
    \State $\mathcal{L}^{(i)} \leftarrow \mathcal{L}(S_{Y|X}^{(i)},\, S_{\mathcal{G}})$
      \Comment{Distributional loss against target}
  \EndFor
  \State $\nabla_{x_t}\mathcal{L} \leftarrow -\nabla_{x_t}
    \log\!\left(\tfrac{1}{n_{\text{MC}}}\sum_{i=1}^{n_{\text{MC}}}
    \exp\!\left(-\mathcal{L}^{(i)}\right)\right)$
    \State $\Delta x_t \leftarrow -\bigl[f(x_t,t) - g(t)^2\,s_\theta(x_t,t)\bigr]\Delta t
    + g(t)\sqrt{\Delta t}\,\epsilon$
    \Comment{Standard denoising step}
  \State $x_{t-1} \leftarrow x_t + \Delta x_t - \zeta_t\,\nabla_{x_t}\mathcal{L}$
    \Comment{Denoising + guidance}
\EndFor
\State \Return $x_0$
\end{algorithmic}
\end{algorithm}

\section{Analysis of the \methodname{} Estimator}
\label{app:theory}
 
This appendix analyses the plug-in loss estimator and the gradient estimator used inside
the \methodname{} inner estimator (Algorithm~\ref{alg:inner-estimator}; detailed form in
Algorithm~\ref{alg:mlgd-d-full}). The goal is to quantify \emph{how far} the practical estimator
is from the population quantity we would ideally optimize, and in particular to make
explicit the two error sources introduced by the distilled conditional sampler $f_\phi$:
(i) the \emph{sample-level} error $\varepsilon_s$ with which $f_\phi$ reproduces the
true conditional, and (ii) the \emph{gradient} error $\varepsilon_g$ with which the
Jacobian of $f_\phi$ in the conditioning input $x$ matches the true reparameterising
map (Proposition~\ref{prop:grad}); a direct student-vs-teacher analogue is the
subject of Proposition~\ref{prop:memory}, which ties the bounds to the memory
argument of Section~\ref{sec:ablation}: a $K_s$-step student replacing a
$K_\star$-step teacher cuts reverse-mode graph depth by the ratio $K_\star/K_s$
at the cost of an additional student-teacher gradient discrepancy controlled
by $\varepsilon_{g,\mathrm{dist}}, \varepsilon_{\mathrm{dist}}$.

\paragraph{Scope.} Throughout this appendix the loss is the
squared-MMD U-statistic with a \emph{fixed} kernel $k$
(Eq.~\eqref{eq:mmd-rbf} for synthetic experiments) at a fixed input
$x$. Three implementation details used in some experiments are not
covered by the formal statements below: (a) the SWD instantiation;
(b) the stabilised square-root MMD of Eq.~\eqref{eq:mmd-ustat} used
in the Stable Diffusion experiment, $\sqrt{|\widehat{\mathrm{MMD}}^2_U|+\epsilon}$,
whose loss-concentration bound transfers from
Proposition~\ref{prop:loss} via the global Lipschitzness of
$z\mapsto\sqrt{|z|+\epsilon}$, but whose gradient is not differentiable
at $z=0$ and is therefore \emph{not} covered by Proposition~\ref{prop:grad};
and (c) data-dependent
bandwidth selection (median heuristic), which makes $k$ itself a
function of the samples and breaks the bounded-difference structure
exploited below. We also analyse the inner estimator in isolation:
the propositions are not statements about the marginal of the full
outer reverse-diffusion trajectory (cf.\ Remark~\ref{rem:lgd-approx}).

\subsection{Setup and notation}

We reuse the paper's notation: $\mathcal{P}(X,Y)$ is the data joint,
$\mathcal{P}(Y \mid X=x)$ the true conditional, $\mathcal{G}(Y)$ the target
distribution, and $\mathcal{L}(x) = \|\mathcal{P}(Y \mid X=x) - \mathcal{G}(Y)\|$
the abstract distributional loss of Problem~\ref{prob:opt}. Throughout
this appendix we analyse the \emph{squared-MMD} instantiation of
$\mathcal{L}$ (i.e.\ $\|\cdot\|^2 = \operatorname{MMD}^2$) with a fixed
kernel; the unbiased U-statistic of this squared MMD is the object of
the propositions below. SWD and the stabilised square-root MMD of
Eq.~\eqref{eq:mmd-ustat} used in the Stable Diffusion experiment are
out of formal scope per the scope statement above.

The distilled conditional sampler is a measurable map
$f_\phi : \mathcal{X} \times \Omega \to \mathcal{Y}$ with internal randomness
$\eta \sim \pi$; let $\mathcal{P}_\phi(Y \mid X=x)$ denote the law of
$f_\phi(x, \eta)$. Throughout this appendix, $k$ denotes a fixed kernel
of the multi-bandwidth RBF form of Equation~\eqref{eq:mmd-rbf} with
bandwidths $\{\sigma_\ell\}$ treated as fixed (the data-dependent
bandwidth heuristics used in the experiments are out of formal scope,
cf.\ scope statement); every $\operatorname{MMD}$ and
$\operatorname{MMD}^2$ that follows is understood with respect to this
one $k$. Such a kernel satisfies Assumption~\ref{ass:kernel} with
$K_{\max} = L$ and $L_k$ a bandwidth-dependent Lipschitz constant on
its feature map. We write $\varphi_k$ for the feature map of $k$ into the
RKHS $\mathcal{H}_k$, and set
\[
\mathcal{L}(x) \;:=\; \operatorname{MMD}^2\!\bigl(\mathcal{P}(Y\mid X=x),\, \mathcal{G}\bigr),
\qquad
\mathcal{L}_\phi(x) \;:=\; \operatorname{MMD}^2\!\bigl(\mathcal{P}_\phi(Y\mid X=x),\, \mathcal{G}\bigr).
\]
The formal estimator analysed below is the unbiased squared-MMD
U-statistic (the experimental implementations differ: synthetic
experiments use the biased V-statistic of Appendix~\ref{sec:app:sim} and
the Stable Diffusion experiment uses the stabilised square-root form
of Eq.~\eqref{eq:mmd-ustat}, both outside the formal scope per the
scope statement above):
\[
\widehat{\mathcal{L}}_\phi(x) \;:=\; \widehat{\operatorname{MMD}}^2_U\!\bigl(\mathcal{S}_{\mathrm{cond}}(x),\, \mathcal{S}_{\mathcal{G}}\bigr),
\]
where
$\mathcal{S}_{\mathrm{cond}}(x) = \{f_\phi(x, \eta_i)\}_{i=1}^{n_{\mathrm{cond}}}$
are i.i.d.\ samples from $\mathcal{P}_\phi(Y \mid X=x)$ and
$\mathcal{S}_{\mathcal{G}} = \{\tilde{y}_j\}_{j=1}^{n_{\mathrm{target}}}$ are
i.i.d.\ from $\mathcal{G}$, drawn independently of $\mathcal{S}_{\mathrm{cond}}$.
The \methodname{} gradient estimator $\nabla_x \widehat{\mathcal{L}}_\phi(x)$, which
supplies the black-box guidance term in Algorithm~\ref{alg:mlgd-outer},
is computed by automatic differentiation through $f_\phi$.

\subsection{Assumptions}

\begin{assumption}[Kernel regularity]\label{ass:kernel}
There exist constants $K_{\max}, L_k, H_k < \infty$ such that:
\begin{enumerate}\itemsep1pt
\item[(i)] $0 \le k(a,b) \le K_{\max}$ for all $a,b \in \mathcal{Y}$;
\item[(ii)] the feature map $\varphi_k : \mathcal{Y} \to \mathcal{H}_k$ is
continuously differentiable, with operator-norm bound
$\|d\varphi_k(a)\|_{\mathrm{op}} \le L_k$ for every $a \in \mathcal{Y}$
(equivalently, $\varphi_k$ is $L_k$-Lipschitz:
$\|\varphi_k(a) - \varphi_k(b)\|_{\mathcal{H}_k} \le L_k\,\|a-b\|$);
\item[(iii)] $d\varphi_k$ is $H_k$-Lipschitz on $\mathcal{Y}$:
$\|d\varphi_k(a) - d\varphi_k(b)\|_{\mathrm{op}} \le H_k\,\|a-b\|$
for all $a,b\in\mathcal{Y}$ (equivalently, $k$ is $C^2$ with bounded
mixed partial derivatives).
\end{enumerate}
The kernel is characteristic on $\mathcal{Y}$.
(The multi-bandwidth RBF of Eq.~\eqref{eq:mmd-rbf} satisfies all three
clauses: (i) with $K_{\max}=5$ since each summand is in $[0,1]$;
(ii) via
$\|\varphi_k(a) - \varphi_k(b)\|_{\mathcal{H}_k}^2 = 2\sum_\ell(1-\exp(-\|a-b\|^2/2\sigma_\ell^2))
\le \|a-b\|^2 \sum_\ell \sigma_\ell^{-2}$, so $L_k^2 = \sum_\ell \sigma_\ell^{-2}$;
(iii) by smoothness of the Gaussian, with $H_k$ a similar
bandwidth-dependent constant. Conditional on fixed bandwidths
$\{\sigma_\ell\}$ the multi-bandwidth RBF satisfies all three
clauses; the data-dependent bandwidth heuristics used in the
experiments are out of formal scope (cf.\ scope statement at
the top of the appendix). Clauses (ii)--(iii) are needed only
for Proposition~\ref{prop:grad}; Proposition~\ref{prop:loss} uses
(i)--(ii).)
\end{assumption}

\begin{assumption}[Output fidelity of $f_\phi$ under shared noise]\label{ass:samples}
There exists $\varepsilon_s \ge 0$ such that, for every $x \in \mathcal{X}$,
\[
\mathbb{E}_\eta\bigl\|f_\phi(x,\eta) - f_{\mathrm{true}}(x,\eta)\bigr\|^2
\;\le\;\varepsilon_s^2,
\]
where $f_{\mathrm{true}}$ is the reparameterising map of
Assumption~\ref{ass:repar} below.
\end{assumption}

\begin{assumption}[Differentiability]\label{ass:diff}
$f_\phi(x,\eta)$ is almost surely differentiable in $x$ with
$\mathbb{E}_\eta\,\|\partial_x f_\phi(x,\eta)\|^2 \le C_f^2 < \infty$.
\end{assumption}

\begin{assumption}[Reparameterisable truth]\label{ass:repar}
There exists a measurable map
$f_{\mathrm{true}} : \mathcal{X} \times \Omega \to \mathcal{Y}$,
almost surely differentiable in $x$ with
$\mathbb{E}_\eta\|\partial_x f_{\mathrm{true}}(x,\eta)\|_F^2 \le C_f^2$,
such that $f_{\mathrm{true}}(x,\eta) \sim \mathcal{P}(Y\mid X=x)$
whenever $\eta \sim \pi$.
\end{assumption}

\begin{assumption}[Jacobian fidelity]\label{ass:grad}
There exists $\varepsilon_g \ge 0$ such that, for every $x \in \mathcal{X}$,
\[
  \mathbb{E}_\eta\,
   \bigl\|\partial_x f_\phi(x,\eta)
         - \partial_x f_{\mathrm{true}}(x,\eta)\bigr\|_F^2
  \;\le\; \varepsilon_g^2,
\]
where $\|\cdot\|_F$ is the Frobenius norm on Jacobians.
\end{assumption}

Assumptions~\ref{ass:kernel}, \ref{ass:diff}, and~\ref{ass:repar} are
standard regularity conditions; the multi-bandwidth RBF satisfies the
kernel clauses with $K_{\max}=5$, $L_k^2=\sum_\ell\sigma_\ell^{-2}$, and
$H_k$ a similar bandwidth-dependent constant, and bounded Jacobian
moments are routinely met by the smooth UNet-based samplers we use.
Assumption~\ref{ass:samples} (output fidelity, same-$\eta$ coupling)
and Assumption~\ref{ass:grad} (Jacobian fidelity) are \emph{idealised}
conditions that current distillation-training theory does not directly
imply; we discuss their training-time status in
Section~\ref{app:theory:status} and report empirical verification on
SDXL-Lightning vs.\ SDXL-Base in
Section~\ref{app:MLGD-ESTIMATOR:teacher-student-connection}. The
proofs below also use the gradient--expectation interchange (Leibniz
under the integral sign), which we adopt as an additional regularity
hypothesis on $f_\phi, f_{\mathrm{true}}$.

\subsection{Concentration of the plug-in loss estimator}

\begin{proposition}[Plug-in loss concentration]\label{prop:loss}
Under Assumptions~\ref{ass:kernel} and~\ref{ass:samples}, for every
$\delta \in (0,1)$, every $x \in \mathcal{X}$, and every
$n_{\mathrm{cond}}, n_{\mathrm{target}} \ge 2$ (so the unbiased
U-statistic denominators are well-defined), with probability at least
$1-\delta$ over the draws of $\mathcal{S}_{\mathrm{cond}}(x)$ and $\mathcal{S}_{\mathcal{G}}$,
\[
\bigl|\widehat{\mathcal{L}}_\phi(x) - \mathcal{L}(x)\bigr|
\;\le\;
K_{\max}\sqrt{8\,\log(2/\delta)\!\left(\tfrac{1}{n_{\mathrm{cond}}}+\tfrac{1}{n_{\mathrm{target}}}\right)}
\;+\;
C_2\,\varepsilon_s,
\]
where $C_2 = 4\sqrt{K_{\max}}\,L_k$.
Equivalently, using $\sqrt{a+b}\le\sqrt{a}+\sqrt{b}$,
$|\widehat{\mathcal{L}}_\phi - \mathcal{L}|
\le 2\sqrt{2}\,K_{\max}\bigl(\sqrt{\log(2/\delta)/n_{\mathrm{cond}}}
   + \sqrt{\log(2/\delta)/n_{\mathrm{target}}}\bigr) + C_2\,\varepsilon_s$.
\end{proposition}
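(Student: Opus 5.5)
The plan is to split the error into a statistical part and a model-mismatch part via the triangle inequality:
\[
\bigl|\widehat{\mathcal{L}}_\phi(x) - \mathcal{L}(x)\bigr|
\le \bigl|\widehat{\mathcal{L}}_\phi(x) - \mathcal{L}_\phi(x)\bigr| + \bigl|\mathcal{L}_\phi(x) - \mathcal{L}(x)\bigr|.
\]
The first term compares the U-statistic with its own mean; since $\widehat{\operatorname{MMD}}^2_U$ is unbiased, $\mathbb{E}\,\widehat{\mathcal{L}}_\phi(x) = \mathcal{L}_\phi(x)$. The second term is deterministic and depends only on the discrepancy between the laws $\mathcal{P}_\phi(Y\mid X=x)$ and $\mathcal{P}(Y\mid X=x)$, which I will control with the shared-noise coupling of Assumption~\ref{ass:samples}.

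\textbf{Statistical term.} I will view $\widehat{\mathcal{L}}_\phi$ as a function of the $n_{\mathrm{cond}}+n_{\mathrm{target}}$ independent inputs $\eta_1,\dots,\eta_{n_{\mathrm{cond}}},\tilde y_1,\dots,\tilde y_{n_{\mathrm{target}}}$ and apply McDiarmid's inequality.

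Replacing one $\eta_i$ changes the within-sample term $\tfrac{1}{n(n-1)}\sum_{i\ne i'}k$ in $2(n-1)$ summands, each by at most $K_{\max}$ (using $0\le k\le K_{\max}$ from Assumption~\ref{ass:kernel}(i)). That gives a change of at most $2K_{\max}/n$. It also changes the cross term $\tfrac{2}{nm}\sum k$ in $m$ summands, by at most $2K_{\max}/n$. The bounded difference is therefore $c_i\le 4K_{\max}/n_{\mathrm{cond}}$, and symmetrically $c_j\le 4K_{\max}/n_{\mathrm{target}}$.

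Then $\sum c^2 \le 16K_{\max}^2(1/n_{\mathrm{cond}}+1/n_{\mathrm{target}})$, and the two-sided McDiarmid bound
\[
t=\sqrt{\tfrac12\log(2/\delta)\sum c^2}
\]
yields exactly $K_{\max}\sqrt{8\log(2/\delta)(1/n_{\mathrm{cond}}+1/n_{\mathrm{target}})}$. The secondary form then follows from $\sqrt{a+b}\le\sqrt a+\sqrt b$.

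\textbf{Mismatch term.} Write mean embeddings $\mu_\phi=\mathbb{E}_\eta\varphi_k(f_\phi(x,\eta))$, $\mu=\mathbb{E}_\eta\varphi_k(f_{\mathrm{true}}(x,\eta))$ (valid by Assumption~\ref{ass:repar}) and $\mu_{\mathcal G}$. Then
\[
\mathcal{L}_\phi-\mathcal{L}=\|\mu_\phi-\mu_{\mathcal G}\|^2-\|\mu-\mu_{\mathcal G}\|^2=\langle \mu_\phi-\mu,\ \mu_\phi+\mu-2\mu_{\mathcal G}\rangle.
\]
By Cauchy--Schwarz this is at most $\|\mu_\phi-\mu\|\cdot\|\mu_\phi+\mu-2\mu_{\mathcal G}\|$.

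For the second factor, each embedding has norm at most $\sqrt{K_{\max}}$ because $\|\varphi_k(a)\|^2=k(a,a)\le K_{\max}$. Hence the second factor is at most $4\sqrt{K_{\max}}$.

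For the first factor, Jensen, the Lipschitz clause (ii), and Assumption~\ref{ass:samples} give
\[
\|\mu_\phi-\mu\|\le \mathbb{E}_\eta\|\varphi_k(f_\phi)-\varphi_k(f_{\mathrm{true}})\| \le L_k\,\mathbb{E}_\eta\|f_\phi-f_{\mathrm{true}}\| \le L_k\varepsilon_s.
\]
The mismatch term is thus bounded by $4\sqrt{K_{\max}}L_k\varepsilon_s=C_2\varepsilon_s$, and adding the two bounds finishes the proof.

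\textbf{Main obstacle.} The delicate step is the bounded-difference bookkeeping: it must land exactly on the constant $\sqrt8$. In particular, the within-sample U-statistic term must contribute $2K_{\max}/n$ (not $K_{\max}/n$) per swap, and McDiarmid must be applied jointly to both samples rather than to each separately. If the nonnegativity of $k$ only gives ranges in $[0,K_{\max}]$, the constants still fit, so I expect this to go through. A minor care point is measurability and independence of $\mathcal{S}_{\mathcal G}$ from the $\eta_i$, which is given.
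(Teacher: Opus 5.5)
Your proposal is correct and matches the paper's proof step for step: the same triangle split into a Monte Carlo term and a distillation term, the same McDiarmid bookkeeping with $c_\eta = 4K_{\max}/n_{\mathrm{cond}}$ and $c_{\tilde y} = 4K_{\max}/n_{\mathrm{target}}$, and a Lipschitz mean-embedding bound for the mismatch term. The differences are cosmetic: your polarization identity with Cauchy--Schwarz is the inner-product form of the paper's reverse triangle inequality plus $|a^2-b^2|\le(a+b)|a-b|$, and you use the shared-noise coupling directly where the paper passes through $W_1\le W_2\le\varepsilon_s$; both routes give $C_2 = 4\sqrt{K_{\max}}\,L_k$.
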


\begin{proof}
Write $n := n_{\mathrm{cond}}$ and $m := n_{\mathrm{target}}$. By the
triangle inequality,
\begin{equation}
  \bigl|\widehat{\mathcal{L}}_\phi(x) - \mathcal{L}(x)\bigr|
    \;\le\;
    \underbrace{\bigl|\widehat{\mathcal{L}}_\phi(x)
                     - \mathcal{L}_\phi(x)\bigr|}_{T_1\ \text{(Monte Carlo)}}
    \;+\;
    \underbrace{\bigl|\mathcal{L}_\phi(x) - \mathcal{L}(x)\bigr|}_{T_2\ \text{(distillation)}},
  \label{eq:loss-triangle}
\end{equation}
and we bound $T_1, T_2$ separately.

\emph{Step 1: Monte Carlo term.}
The unbiased two-sample U-statistic estimator
$\widehat{\operatorname{MMD}}^2_U$ satisfies
$\mathbb{E}[\widehat{\mathcal{L}}_\phi(x)] = \mathcal{L}_\phi(x)$ and
expands as
\[
\widehat{\mathcal{L}}_\phi(x)
 = \tfrac{1}{n(n-1)}\!\!\sum_{i\neq j}\!\! k(f_\phi(x,\eta_i), f_\phi(x,\eta_j))
 + \tfrac{1}{m(m-1)}\!\!\sum_{i\neq j}\!\! k(\tilde y_i, \tilde y_j)
 - \tfrac{2}{nm}\!\!\sum_{i,j}\!\! k(f_\phi(x,\eta_i), \tilde y_j).
\]
By Assumption~\ref{ass:kernel}(i) each kernel evaluation lies in
$[0, K_{\max}]$, so changing one of its arguments shifts that
evaluation by at most $K_{\max}$.  We compute the bounded-difference
constants for McDiarmid's inequality applied to
$\widehat{\mathcal{L}}_\phi$ as a function of the $n+m$ independent
variables $\eta_1,\ldots,\eta_n,\tilde y_1,\ldots,\tilde y_m$.
Replacing $\eta_i$ (which sets $Y_i = f_\phi(x,\eta_i)$) leaves the
$\tilde y$-only term unchanged and affects only:
\begin{itemize}
\item the within-$Y$ term, in which $Y_i$ appears in the $2(n-1)$
ordered pairs $(i,j)$ and $(j,i)$ for $j\neq i$, each with coefficient
$\tfrac{1}{n(n-1)}$ and per-pair shift at most $K_{\max}$, contributing
$\tfrac{2(n-1)\cdot K_{\max}}{n(n-1)}=\tfrac{2K_{\max}}{n}$;
\item the cross term, in which $Y_i$ appears in $m$ pairs $(i,j)$
each with coefficient $\tfrac{2}{nm}$ and per-pair shift at most
$K_{\max}$, contributing $\tfrac{m\cdot 2 K_{\max}}{nm}=\tfrac{2K_{\max}}{n}$.
\end{itemize}
Summing, the bounded-difference constant for an $\eta_i$-change is
$c_\eta = \tfrac{4K_{\max}}{n}$. By the symmetric argument,
$c_{\tilde y} = \tfrac{4K_{\max}}{m}$.  Hence
$\sum_{i=1}^{n} c_\eta^2 + \sum_{j=1}^{m} c_{\tilde y}^2
 = 16\,K_{\max}^2\,(1/n + 1/m)$,
and McDiarmid's inequality~\citep{mcdiarmid1989method} gives, for any $t \ge 0$,
\[
  \mathbb{P}\!\bigl(|T_1| \ge t\bigr)
    \;\le\;
    2\exp\!\left(-\tfrac{t^2}{8K_{\max}^2\,(1/n + 1/m)}\right);
\]
equivalently, with probability at least $1 - \delta$,
\begin{equation}
  |T_1| \;\le\;
    K_{\max}\sqrt{8\,\log(2/\delta)\,(1/n + 1/m)}
    \;\le\;
    2\sqrt{2}\,K_{\max}\!\left(\sqrt{\tfrac{\log(2/\delta)}{n}}
                              + \sqrt{\tfrac{\log(2/\delta)}{m}}\right),
  \label{eq:loss-mc}
\end{equation}
where the second inequality uses $\sqrt{a+b}\le\sqrt{a}+\sqrt{b}$.

\emph{Step 2: Distillation term.}
For a distribution $\mathcal{Q}$ on $\mathcal{Y}$ write
$\mu_\mathcal{Q} := \mathbb{E}_{Y\sim\mathcal{Q}}[\varphi_k(Y)]
   \in \mathcal{H}_k$ for its kernel mean embedding. For any
positive-definite kernel $k$ and any two
distributions $\mathcal{Q}_1, \mathcal{Q}_2$,
$\operatorname{MMD}(\mathcal{Q}_1, \mathcal{Q}_2)
  = \|\mu_{\mathcal{Q}_1} - \mu_{\mathcal{Q}_2}\|_{\mathcal{H}_k}$,\footnote{This RKHS-norm identity does not require characteristicness; the characteristic clause of Assumption~\ref{ass:kernel} is used only for the identifiability part of Corollary~\ref{cor:lowrank}.}
and $\|\mu_\mathcal{Q}\|_{\mathcal{H}_k} \le \sqrt{K_{\max}}$ for any
$\mathcal{Q}$. The reverse triangle inequality then gives
\begin{equation}
 \bigl|\operatorname{MMD}(\mathcal{P}_\phi, \mathcal{G})
    - \operatorname{MMD}(\mathcal{P}, \mathcal{G})\bigr|
   \;\le\;
   \|\mu_{\mathcal{P}_\phi} - \mu_{\mathcal{P}}\|_{\mathcal{H}_k}.
 \label{eq:loss-reverse-tri}
\end{equation}
Let $(Y, \tilde Y)$ be any coupling of $\mathcal{P}$ and
$\mathcal{P}_\phi$. Jensen's inequality in $\mathcal{H}_k$ and the
Lipschitz feature map (Assumption~\ref{ass:kernel}(ii)) yield
\[
  \|\mu_{\mathcal{P}_\phi} - \mu_{\mathcal{P}}\|_{\mathcal{H}_k}
    = \bigl\|\mathbb{E}\!\bigl[\varphi_k(\tilde Y) - \varphi_k(Y)\bigr]
                                  \bigr\|_{\mathcal{H}_k}
    \;\le\; \mathbb{E}\|\varphi_k(\tilde Y) - \varphi_k(Y)\|_{\mathcal{H}_k}
    \;\le\; L_k\,\mathbb{E}\|\tilde Y - Y\|.
\]
Taking the infimum over couplings gives
$\|\mu_{\mathcal{P}_\phi} - \mu_{\mathcal{P}}\|_{\mathcal{H}_k}
 \le L_k\, W_1(\mathcal{P}_\phi, \mathcal{P})
 \le L_k\, W_2(\mathcal{P}_\phi, \mathcal{P})
 \le L_k\,\varepsilon_s$
by Assumption~\ref{ass:samples}. Substituting
into~\eqref{eq:loss-reverse-tri} and using $|a^2 - b^2| \le (a+b)|a-b|$
with $a, b \le 2\sqrt{K_{\max}}$ (each MMD is at most twice the maximal
embedding norm),
\begin{equation}
  T_2 \;\le\; 4\sqrt{K_{\max}}\cdot L_k\,\varepsilon_s
        \;=:\; C_2\,\varepsilon_s.
  \label{eq:loss-dist}
\end{equation}

Combining~\eqref{eq:loss-mc}, \eqref{eq:loss-dist}
and~\eqref{eq:loss-triangle} gives the stated bound with probability
at least $1 - \delta$.
\end{proof}

\subsection{Bias and variance of the gradient estimator}

\begin{proposition}[Gradient estimator]\label{prop:grad}
Under Assumptions~\ref{ass:kernel}--\ref{ass:grad}, with
$n_{\mathrm{cond}}, n_{\mathrm{target}} \ge 2$, and assuming the
gradient--expectation interchange in Step~1 of the proof is valid
(e.g., via local dominated differentiability of $f_\phi$ and
$f_{\mathrm{true}}$ in $x$), there exist constants $C_g, C_s, C_v$
depending only on $K_{\max}, L_k, H_k, C_f$ such that
\[
\begin{aligned}
\bigl\|\mathbb{E}\bigl[\nabla_x \widehat{\mathcal{L}}_\phi(x)\bigr] - \nabla_x \mathcal{L}(x)\bigr\|
&\;\le\;
C_g\,\varepsilon_g \;+\; C_s\,\varepsilon_s, \\[6pt]
\mathbb{E}\,\bigl\|\nabla_x \widehat{\mathcal{L}}_\phi(x) - \mathbb{E}[\nabla_x \widehat{\mathcal{L}}_\phi(x)]\bigr\|^2
&\;\le\;
C_v\!\left(\frac{1}{n_{\mathrm{cond}}} + \frac{1}{n_{\mathrm{target}}}\right).
\end{aligned}
\]

Explicit values of $C_g, C_s, C_v$ are tracked through the proof; they
are stated as generic constants because their exact magnitudes are
not used downstream.
\end{proposition}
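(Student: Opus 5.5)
The plan is to split the bias through the distilled population loss $\mathcal{L}_\phi$, as in the proof of Proposition~\ref{prop:loss}. I would write $\mathbb{E}[\nabla_x\widehat{\mathcal{L}}_\phi]-\nabla_x\mathcal{L} = (\mathbb{E}[\nabla_x\widehat{\mathcal{L}}_\phi]-\nabla_x\mathcal{L}_\phi) + (\nabla_x\mathcal{L}_\phi-\nabla_x\mathcal{L})$.

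\emph{Step 1: the first bracket is zero.} The U-statistic is unbiased for $\mathcal{L}_\phi(x)$ at every $x$. The samples are written as $f_\phi(x,\eta_i)$ with $\eta_i$ independent of $x$, and the target sample does not depend on $x$. So the assumed gradient--expectation interchange gives $\mathbb{E}[\nabla_x\widehat{\mathcal{L}}_\phi]=\nabla_x\mathbb{E}[\widehat{\mathcal{L}}_\phi]=\nabla_x\mathcal{L}_\phi$.

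\emph{Step 2: a pathwise gradient formula.} In kernel-mean form,
\[
\mathcal{L}_\phi(x)=\|\mu_{\mathcal{P}_\phi(\cdot\mid x)}-\mu_\mathcal{G}\|_{\mathcal{H}_k}^2,
\qquad
\mu_{\mathcal{P}_\phi(\cdot\mid x)}=\mathbb{E}_\eta\,\varphi_k(f_\phi(x,\eta)).
\]
Differentiating under the expectation gives, for each direction $v$,
\[
\partial_v\mathcal{L}_\phi(x)=2\bigl\langle \mu_{\mathcal{P}_\phi}-\mu_\mathcal{G},\ \mathbb{E}_\eta[d\varphi_k(f_\phi)\,\partial_x f_\phi\, v]\bigr\rangle_{\mathcal{H}_k}.
\]
By Assumption~\ref{ass:repar}, the analogous formula holds for $\mathcal{L}$ with $f_{\mathrm{true}}$ and $\mu_{\mathcal{P}}$.

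\emph{Step 3: the second bracket, by add-and-subtract.} I would split the difference into three pieces.
\begin{enumerate}
\item[(a)] The piece $2\langle \mu_{\mathcal{P}_\phi}-\mu_{\mathcal{P}},\cdot\rangle$. It is bounded by $2L_k\varepsilon_s\cdot L_k C_f$, reusing the coupling bound $\|\mu_{\mathcal{P}_\phi}-\mu_\mathcal{P}\|\le L_k\varepsilon_s$ from Step~2 of Proposition~\ref{prop:loss}.
\item[(b)] The piece $2\langle\mu_\mathcal{P}-\mu_\mathcal{G},\ \mathbb{E}[d\varphi_k(f_\phi)(\partial_xf_\phi-\partial_xf_{\mathrm{true}})]\rangle$. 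It is bounded by $4\sqrt{K_{\max}}\,L_k\,\varepsilon_g$ using Cauchy--Schwarz and Assumption~\ref{ass:grad}.
\item[(c)] The piece $2\langle\mu_\mathcal{P}-\mu_\mathcal{G},\ \mathbb{E}[(d\varphi_k(f_\phi)-d\varphi_k(f_{\mathrm{true}}))\partial_xf_{\mathrm{true}}]\rangle$. Assumption~\ref{ass:kernel}(iii) and Cauchy--Schwarz in $\eta$ bound it by $4\sqrt{K_{\max}}\,H_k\,\varepsilon_s C_f$, using Assumptions~\ref{ass:samples} and~\ref{ass:repar}.
\end{enumerate}
Summing gives $C_g=4\sqrt{K_{\max}}L_k$ and $C_s=2L_k^2C_f+4\sqrt{K_{\max}}H_kC_f$. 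Throughout, Frobenius and operator norms are used interchangeably up to the convention in Assumption~\ref{ass:diff}.

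\emph{Step 4: variance via Efron--Stein.} Let $G=\nabla_x\widehat{\mathcal{L}}_\phi$, viewed as a function of the independent inputs $\eta_1,\dots,\eta_n,\tilde y_1,\dots,\tilde y_m$. By the chain rule, $G$ is a sum of terms $\partial_1k(Y_i,Y_j)\,\partial_xf_\phi(x,\eta_i)$ with weight $2/(n(n-1))$, plus cross terms $\partial_1 k(Y_i,\tilde y_j)\,\partial_xf_\phi(x,\eta_i)$ with weight $2/(nm)$. The key pointwise fact is $\|\partial_1k(a,b)\|=\|\langle d\varphi_k(a)\cdot,\varphi_k(b)\rangle\|\le L_k\sqrt{K_{\max}}$.

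Replacing $\eta_i$ by an independent copy $\eta_i'$ changes $G$ in two ways.
\begin{itemize}
\item[(i)] Terms carrying the Jacobian at $\eta_i$ change by $O\bigl(L_k\sqrt{K_{\max}}(\|\partial f(\eta_i)\|+\|\partial f(\eta_i')\|)/n\bigr)$.
\item[(ii)] Terms $j\ne i$ whose kernel argument contains $Y_i$ change by at most $\tfrac{1}{n(n-1)}\sum_j 2L_k\sqrt{K_{\max}}\|\partial f(\eta_j)\|$.
\end{itemize}
Squaring, using $(\tfrac1{n}\sum_j a_j)^2\le\tfrac1n\sum_j a_j^2$, and taking expectations with $\mathbb{E}\|\partial f\|^2\le C_f^2$ gives $\mathbb{E}\|\Delta_iG\|^2=O(K_{\max}L_k^2C_f^2/n^2)$.

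Replacing $\tilde y_j$ only affects cross terms. The change is at most $\tfrac{2}{nm}\sum_i 2L_k\sqrt{K_{\max}}\|\partial f(\eta_i)\|$, whose expected square is $O(K_{\max}L_k^2C_f^2/m^2)$.

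The vector-valued Efron--Stein inequality (apply it coordinatewise and sum) then yields $C_v=O(K_{\max}L_k^2C_f^2)(1/n+1/m)$.

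\emph{Main obstacle.} I expect Step~4 to be the main obstacle. The Jacobians $\partial_xf_\phi$ are not almost-surely bounded, so McDiarmid cannot be reused as in Proposition~\ref{prop:loss}. One has to work only with second moments, and Efron--Stein is exactly suited to that. The cross-dependence in item~(ii), where one $\eta_i$ perturbs many Jacobian-weighted terms, needs the Jensen averaging step to avoid losing a factor of $n$.
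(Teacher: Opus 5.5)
Your argument is correct, and in both halves it takes a somewhat different route from the paper.

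\emph{Bias.} The paper expands $\mathcal{L}_\phi=\|u+v\|^2$, with $u=\mu_{\mathcal{P}_\phi}-\mu_{\mathcal{P}}$ and $v=\mu_{\mathcal{P}}-\mu_{\mathcal{G}}$, into three pairings $2\langle u,\nabla_x u\rangle+2\langle u,\nabla_x v\rangle+2\langle v,\nabla_x u\rangle$. To keep the quadratic pairing first order, it must bound $\|u\|$ crudely by $2\sqrt{K_{\max}}$ there. You instead merge the first two pairings into $2\langle u,\nabla_x\mu_{\mathcal{P}_\phi}\rangle$ and bound $\|\nabla_x\mu_{\mathcal{P}_\phi}\|\le L_kC_f$ directly from Assumption~\ref{ass:diff}. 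The quadratic term therefore never appears. You obtain $C_g=4\sqrt{K_{\max}}L_k$ and $C_s=2L_k^2C_f+4\sqrt{K_{\max}}H_kC_f$, which halves the paper's factors of $8$. Your treatment of $\nabla_x u$ is the same as the paper's Step~3(b): add and subtract $d\varphi_k(f_\phi)\,\partial_x f_{\mathrm{true}}$, then use the $H_k$-Lipschitz derivative.

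\emph{Variance.} The paper splits $\widehat{\mathcal{L}}_\phi$ into the cond--cond and cond--target U-statistics. It then cites Hoeffding's variance bound for order-2 U-statistics, plus a two-sample analogue, using only a second-moment bound on the gradient kernel. You apply the vector Efron--Stein inequality with explicit $L^2$ bounded differences. Your route is self-contained, handles both statistics at once, and gives explicit absolute constants. It also shows clearly why unbounded Jacobians are harmless: only second moments enter, and the Jensen averaging in item~(ii) avoids losing a factor of $n$. The paper's route is shorter if one takes classical U-statistic theory for granted.

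One harmless slip: the weight in item~(ii) should be $2/(n(n-1))$ rather than $1/(n(n-1))$. This changes only the constant.
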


\begin{proof}
Throughout, expectations are over $\eta_1, \ldots, \eta_{n_{\mathrm{cond}}}\sim\pi$ i.i.d.\
and $\tilde y_1,\ldots,\tilde y_{n_{\mathrm{target}}}\sim\mathcal{G}$ i.i.d., the two
samples mutually independent. We bound the bias and the variance in turn,
in five steps:
\textbf{Step~1} establishes a dominating envelope and the
gradient--expectation interchange used throughout.
\textbf{Step~2} fixes notation $(A,B,C,u,v)$ and decomposes the
population gradient gap $\nabla_x\mathcal{L}_\phi-\nabla_x\mathcal{L}$
into three RKHS pairings.
\textbf{Step~3} bounds each of the four norms $\|u\|, \|\nabla_x u\|,
\|v\|, \|\nabla_x v\|$ that appear in those pairings.
\textbf{Step~4} assembles the bias bound by Cauchy--Schwarz.
\textbf{Step~5} bounds the variance separately.

\paragraph{Notation for Steps~2--4.}
Write
\[
A(x) := \mu_{\mathcal{P}(\cdot|x)},\quad
B(x) := \mu_{\mathcal{P}_\phi(\cdot|x)},\quad
C := \mu_{\mathcal{G}},\quad
u(x) := B(x) - A(x),\quad
v(x) := A(x) - C
\]
for the kernel mean embeddings of the true and student conditionals
and of the target. For an $\mathcal{H}_k$-valued function $\psi(x)$
depending on $x\in\mathbb{R}^{\dim\mathcal{X}}$, $\nabla_x \psi(x)$
denotes the linear map $\mathbb{R}^{\dim\mathcal{X}}\to\mathcal{H}_k$
given by $\xi\mapsto\sum_i \xi_i\,\partial_{x_i} \psi(x)$, and we
write $\|\nabla_x \psi\|_{\mathcal{H}_k}$ for its Hilbert--Schmidt
norm $(\sum_i \|\partial_{x_i} \psi\|_{\mathcal{H}_k}^2)^{1/2}$.
With this convention each pairing $\langle a,\nabla_x b\rangle_{\mathcal{H}_k}$
is a vector in $\mathbb{R}^{\dim\mathcal{X}}$, and Cauchy--Schwarz in
$\mathcal{H}_k$ gives $\|\langle a,\nabla_x b\rangle_{\mathcal{H}_k}\|_2
\le \|a\|_{\mathcal{H}_k}\,\|\nabla_x b\|_{\mathcal{H}_k}$ coordinate
by coordinate.

\paragraph{Step 1a: Dominating envelope.}
Each $x$-dependent summand of $\widehat{\mathcal{L}}_\phi(x)$ has the
form $h(x; \eta, \eta') := k(f_\phi(x,\eta), f_\phi(x,\eta'))$
(``cond--cond'' terms) or
$h'(x; \eta, \tilde y) := k(f_\phi(x,\eta), \tilde y)$
(``cond--target'' terms); the term not involving $f_\phi$ is constant
in $x$. For any fixed $b \in \mathcal{Y}$ the map
$a \mapsto k(a, b) = \langle \varphi_k(a), \varphi_k(b)\rangle_{\mathcal{H}_k}$
has Fréchet derivative $d_a k(\cdot, b): \mathbb{R}^{\dim\mathcal{Y}} \to \mathbb{R}$
given by $\xi \mapsto \langle d\varphi_k(a)\xi, \varphi_k(b)\rangle_{\mathcal{H}_k}$.
Assumption~\ref{ass:kernel}(ii) gives $\|d\varphi_k(a)\|_{\mathrm{op}} \le L_k$
and Assumption~\ref{ass:kernel}(i) gives $\|\varphi_k(b)\|_{\mathcal{H}_k}
= \sqrt{k(b,b)} \le \sqrt{K_{\max}}$, so by Cauchy--Schwarz in $\mathcal{H}_k$,
$\|\nabla_a k(a, b)\|_2 \le L_k \sqrt{K_{\max}}$ for all $a, b \in \mathcal{Y}$.
The chain rule then yields
\begin{equation}
  \bigl\|\nabla_x h(x; \eta, \eta')\bigr\|_2
    \;\le\;
    L_k \sqrt{K_{\max}}\,
        \bigl(\|\partial_x f_\phi(x,\eta)\|_F
            + \|\partial_x f_\phi(x,\eta')\|_F\bigr),
  \label{eq:grad-dominating}
\end{equation}
and an analogous one-Jacobian bound for $h'$. By Cauchy--Schwarz and
Assumption~\ref{ass:diff},
$\mathbb{E}_\eta\|\partial_x f_\phi(x,\eta)\|_F
  \le (\mathbb{E}_\eta \|\partial_x f_\phi\|_F^2)^{1/2} \le C_f$;
hence~\eqref{eq:grad-dominating} admits the
$\pi\otimes\pi$-integrable envelope $2 L_k \sqrt{K_{\max}}\,C_f$
(the cond--target analogue admits the same envelope, treating the
second argument as $\mathcal{G}$-distributed).

\paragraph{Step 1b: Gradient--expectation interchange.}
We assume that the gradient--expectation interchange used below holds
via the dominated-convergence form of Leibniz's rule. The envelope of
Step~1a, together with uniform local second-moment control of
$\partial_x f_\phi$ (routinely satisfied for the smooth UNet-based
samplers we use), supplies the standard local-dominating random
variable required; we adopt the conclusion as a regularity hypothesis
on $f_\phi, f_{\mathrm{true}}$ rather than spelling out the
construction:
\begin{equation}
  \mathbb{E}\bigl[\nabla_x \widehat{\mathcal{L}}_\phi(x)\bigr]
    \;=\; \nabla_x\,\mathbb{E}\bigl[\widehat{\mathcal{L}}_\phi(x)\bigr]
    \;=\; \nabla_x \mathcal{L}_\phi(x).
  \label{eq:grad-interchange}
\end{equation}
The same envelope applies to
$x\mapsto\mu_{\mathcal{P}_\phi(\cdot|x)} = \mathbb{E}_\eta[\varphi_k(f_\phi(x,\eta))]$
and to $x\mapsto\mu_{\mathcal{P}(\cdot|x)} = \mathbb{E}_\eta[\varphi_k(f_{\mathrm{true}}(x,\eta))]$
(using Assumption~\ref{ass:repar} for the latter), so both kernel mean
embeddings are $\mathcal{H}_k$-Fr\'echet-differentiable in $x$.

\paragraph{Step 2: Bias decomposition.}
By~\eqref{eq:grad-interchange}, the bias equals a deterministic
population gap,
$\|\mathbb{E}[\nabla_x \widehat{\mathcal{L}}_\phi] - \nabla_x \mathcal{L}\|_2
  = \|\nabla_x \mathcal{L}_\phi - \nabla_x \mathcal{L}\|_2$.
By the kernel-mean-embedding identity for MMD$^2$ (using the notation
$A,B,C,u,v$ defined above),
$\mathcal{L}(x) = \|v(x)\|_{\mathcal{H}_k}^2$ and
$\mathcal{L}_\phi(x) = \|u(x)+v(x)\|_{\mathcal{H}_k}^2$.
Differentiating $\psi\mapsto\|\psi\|_{\mathcal{H}_k}^2$ in $\mathcal{H}_k$ gives
$\nabla\|\psi\|^2 = 2\langle \psi, \nabla \psi\rangle_{\mathcal{H}_k}$, hence
\begin{align}
\nabla_x \mathcal{L}_\phi - \nabla_x \mathcal{L}
&= 2\langle u + v, \nabla_x(u+v)\rangle_{\mathcal{H}_k}
   - 2\langle v, \nabla_x v\rangle_{\mathcal{H}_k}\notag\\
&= 2\langle u, \nabla_x u\rangle_{\mathcal{H}_k}
   + 2\langle u, \nabla_x v\rangle_{\mathcal{H}_k}
   + 2\langle v, \nabla_x u\rangle_{\mathcal{H}_k}.
\label{eq:grad-decomp}
\end{align}
Steps~3--4 bound each of the four norms on the right-hand side and
combine them via Cauchy--Schwarz.

\paragraph{Step 3: Bound the four norms.}
We bound each of $\|u\|, \|\nabla_x u\|, \|v\|, \|\nabla_x v\|$ in turn.

\paragraph{(a) $\|u(x)\|_{\mathcal{H}_k} \le L_k\,\varepsilon_s$.}
By Assumption~\ref{ass:repar}, $A(x) = \mathbb{E}_\eta[\varphi_k(f_{\mathrm{true}}(x,\eta))]$
and $B(x) = \mathbb{E}_\eta[\varphi_k(f_\phi(x,\eta))]$ for the same
$\pi$-distributed $\eta$. Hence
\[
u(x) = B(x) - A(x) = \mathbb{E}_\eta\!\bigl[\varphi_k(f_\phi(x,\eta))-\varphi_k(f_{\mathrm{true}}(x,\eta))\bigr].
\]
Jensen's inequality in $\mathcal{H}_k$ gives
$\|u\|_{\mathcal{H}_k} \le \mathbb{E}_\eta\|\varphi_k(f_\phi)-\varphi_k(f_{\mathrm{true}})\|_{\mathcal{H}_k}$,
and $\varphi_k$ being $L_k$-Lipschitz (Assumption~\ref{ass:kernel}(ii))
gives $\|\varphi_k(f_\phi)-\varphi_k(f_{\mathrm{true}})\|_{\mathcal{H}_k} \le L_k\|f_\phi-f_{\mathrm{true}}\|$,
so $\|u\|_{\mathcal{H}_k} \le L_k\,\mathbb{E}_\eta\|f_\phi-f_{\mathrm{true}}\|$.
By Cauchy--Schwarz, $\mathbb{E}_\eta\|f_\phi-f_{\mathrm{true}}\|
\le (\mathbb{E}_\eta\|f_\phi-f_{\mathrm{true}}\|^2)^{1/2}$, and by
Assumption~\ref{ass:samples} the latter is at most $\varepsilon_s$.
Combining,
\[
\|u\|_{\mathcal{H}_k} \;\le\; L_k\,\varepsilon_s.
\]

\paragraph{(b) $\|\nabla_x u(x)\|_{\mathcal{H}_k} \le L_k\,\varepsilon_g + H_k C_f\,\varepsilon_s$.}
We bound $\|\nabla_x u\|$ by adding and subtracting
$d\varphi_k(f_\phi)\,\partial_x f_{\mathrm{true}}$ and applying the
Lipschitz-derivative constant $H_k$ from Assumption~\ref{ass:kernel}(iii).
By Step~1b's Leibniz justification applied to $u$,
\[
\nabla_x u(x)
 = \mathbb{E}_\eta\!\bigl[\,
   \underbrace{d\varphi_k(f_\phi(x,\eta))\,\partial_x f_\phi(x,\eta)}_{\text{linear map }\mathbb{R}^{\dim\mathcal{X}}\to\mathcal{H}_k}
   - d\varphi_k(f_{\mathrm{true}}(x,\eta))\,\partial_x f_{\mathrm{true}}(x,\eta)\bigr].
\]
Add and subtract $d\varphi_k(f_\phi)\,\partial_x f_{\mathrm{true}}$
inside the expectation:
\begin{equation}
\nabla_x u
 = \mathbb{E}_\eta\Bigl[
    \underbrace{d\varphi_k(f_\phi)\bigl(\partial_x f_\phi - \partial_x f_{\mathrm{true}}\bigr)}_{\Delta_1(\eta)}
   + \underbrace{\bigl(d\varphi_k(f_\phi) - d\varphi_k(f_{\mathrm{true}})\bigr)\,\partial_x f_{\mathrm{true}}}_{\Delta_2(\eta)}
  \Bigr].
\label{eq:nabla-u-decomp}
\end{equation}
By Jensen's inequality and the triangle inequality,
$\|\nabla_x u\|_{\mathcal{H}_k}
\le \mathbb{E}_\eta\|\Delta_1(\eta)\|_{\mathcal{H}_k}
   + \mathbb{E}_\eta\|\Delta_2(\eta)\|_{\mathcal{H}_k}$.

\paragraph{Bound on $\Delta_1$.}
For any linear map $M: \mathbb{R}^{\dim\mathcal{Y}}\to\mathcal{H}_k$
and matrix $J\in\mathbb{R}^{\dim\mathcal{Y}\times\dim\mathcal{X}}$,
the Hilbert--Schmidt operator norm satisfies
$\|M\,J\|_{\mathcal{H}_k} \le \|M\|_{\mathrm{op}}\,\|J\|_F$.
Applying this with $M = d\varphi_k(f_\phi)$ (operator norm $\le L_k$ by
Assumption~\ref{ass:kernel}(ii)) and
$J = \partial_x f_\phi - \partial_x f_{\mathrm{true}}$,
\[
\|\Delta_1(\eta)\|_{\mathcal{H}_k}
 \;\le\; L_k\,\|\partial_x f_\phi(x,\eta) - \partial_x f_{\mathrm{true}}(x,\eta)\|_F.
\]
Taking $\mathbb{E}_\eta$ and applying Cauchy--Schwarz with
Assumption~\ref{ass:grad},
\begin{equation}
\mathbb{E}_\eta\|\Delta_1(\eta)\|_{\mathcal{H}_k}
 \;\le\; L_k\,\bigl(\mathbb{E}_\eta\|\partial_x f_\phi - \partial_x f_{\mathrm{true}}\|_F^2\bigr)^{1/2}
 \;\le\; L_k\,\varepsilon_g.
\label{eq:T1-bound}
\end{equation}

\paragraph{Bound on $\Delta_2$.}
Apply the same Hilbert--Schmidt inequality with
$M = d\varphi_k(f_\phi) - d\varphi_k(f_{\mathrm{true}})$ and
$J = \partial_x f_{\mathrm{true}}$:
\[
\|\Delta_2(\eta)\|_{\mathcal{H}_k}
 \;\le\; \|d\varphi_k(f_\phi(x,\eta)) - d\varphi_k(f_{\mathrm{true}}(x,\eta))\|_{\mathrm{op}}
   \cdot \|\partial_x f_{\mathrm{true}}(x,\eta)\|_F.
\]
By Assumption~\ref{ass:kernel}(iii), $d\varphi_k$ is $H_k$-Lipschitz, so
$\|d\varphi_k(f_\phi) - d\varphi_k(f_{\mathrm{true}})\|_{\mathrm{op}}
 \le H_k\,\|f_\phi(x,\eta) - f_{\mathrm{true}}(x,\eta)\|$, giving
\[
\|\Delta_2(\eta)\|_{\mathcal{H}_k}
 \;\le\; H_k\,\|f_\phi-f_{\mathrm{true}}\|\cdot\|\partial_x f_{\mathrm{true}}\|_F.
\]
Take $\mathbb{E}_\eta$ and apply Cauchy--Schwarz in $L^2(\pi)$ to the
product, then use Assumption~\ref{ass:samples} (coupled output
fidelity) and Assumption~\ref{ass:repar}
($\mathbb{E}_\eta\|\partial_x f_{\mathrm{true}}\|_F^2 \le C_f^2$):
\begin{equation}
\mathbb{E}_\eta\|\Delta_2(\eta)\|_{\mathcal{H}_k}
 \;\le\; H_k\,\bigl(\mathbb{E}_\eta\|f_\phi-f_{\mathrm{true}}\|^2\bigr)^{1/2}
   \cdot \bigl(\mathbb{E}_\eta\|\partial_x f_{\mathrm{true}}\|_F^2\bigr)^{1/2}
 \;\le\; H_k\,C_f\,\varepsilon_s.
\label{eq:T2-bound}
\end{equation}
Combining \eqref{eq:T1-bound}--\eqref{eq:T2-bound},
\begin{equation}
\|\nabla_x u(x)\|_{\mathcal{H}_k}
 \;\le\; L_k\,\varepsilon_g \;+\; H_k\,C_f\,\varepsilon_s.
\label{eq:nabla-u-final}
\end{equation}

\paragraph{(c) $\|v(x)\|_{\mathcal{H}_k} \le 2\sqrt{K_{\max}}$.}
For any probability measure $\mathcal{Q}$ on $\mathcal{Y}$,
$\|\mu_\mathcal{Q}\|_{\mathcal{H}_k}^2
 = \mathbb{E}_{Y,Y'\sim\mathcal{Q}}[k(Y,Y')] \le K_{\max}$
by Assumption~\ref{ass:kernel}(i), so
$\|\mu_\mathcal{Q}\|_{\mathcal{H}_k} \le \sqrt{K_{\max}}$.
Triangle inequality gives
$\|v\|_{\mathcal{H}_k} = \|A - C\|_{\mathcal{H}_k}
 \le \|A\|_{\mathcal{H}_k}+\|C\|_{\mathcal{H}_k} \le 2\sqrt{K_{\max}}$.

\paragraph{(d) $\|\nabla_x v(x)\|_{\mathcal{H}_k} \le L_k\,C_f$.}
Repeat the chain-rule expansion of (b) for $A$ alone, with only the
$f_{\mathrm{true}}$ term. By the same Hilbert--Schmidt inequality
($\|d\varphi_k(f_{\mathrm{true}})\,\partial_x f_{\mathrm{true}}\|_{\mathcal{H}_k}
 \le L_k\,\|\partial_x f_{\mathrm{true}}\|_F$) and Cauchy--Schwarz with
Assumption~\ref{ass:repar}'s second-moment bound,
\[
\|\nabla_x v\|_{\mathcal{H}_k}
 = \|\nabla_x A\|_{\mathcal{H}_k}
 \le L_k\,(\mathbb{E}_\eta\|\partial_x f_{\mathrm{true}}\|_F^2)^{1/2}
 \le L_k\,C_f.
\]
($\nabla_x C = 0$ since $\mathcal{G}$ does not depend on $x$.)

\paragraph{Step 4: Assemble the bias bound.}
Apply Cauchy--Schwarz to each of the three pairings in
\eqref{eq:grad-decomp}. Step~3 supplied two valid upper bounds on
$\|u\|_{\mathcal{H}_k}$: the assumption-driven bound
$\|u\|\le L_k\varepsilon_s$ from Step~3(a), and the universal bound
$\|u\|\le 2\sqrt{K_{\max}}$ (same triangle-inequality argument as
Step~3(c) applied to $u$).\footnote{It would be incorrect to combine
these into an upper bound on $\varepsilon_s$ itself: two upper bounds
on $\|u\|$ do not in general bound $\varepsilon_s$ from above.}
We use whichever of the two $\|u\|$ bounds gives the tightest
term-by-term Cauchy--Schwarz product.

For the first pairing $2\|u\|\|\nabla_x u\|$ both factors are
distillation quantities; using $\|u\|\le 2\sqrt{K_{\max}}$ and
\eqref{eq:nabla-u-final} keeps the bound first-order in
$\varepsilon_g, \varepsilon_s$:
\[
2\,\|u\|\|\nabla_x u\|
 \;\le\; 2\,(2\sqrt{K_{\max}})\bigl(L_k\,\varepsilon_g + H_k C_f\,\varepsilon_s\bigr)
 \;=\; 4 L_k\sqrt{K_{\max}}\,\varepsilon_g + 4 H_k C_f\sqrt{K_{\max}}\,\varepsilon_s.
\]
For the second pairing $2\|u\|\|\nabla_x v\|$ we use the tighter
bound $\|u\|\le L_k\varepsilon_s$ (Step~3(a)) combined with
$\|\nabla_x v\|\le L_k C_f$ (Step~3(d)):
\[
2\,\|u\|\|\nabla_x v\|
 \;\le\; 2\,(L_k\varepsilon_s)(L_k C_f)
 \;=\; 2 L_k^2 C_f\,\varepsilon_s.
\]
For the third pairing $2\|v\|\|\nabla_x u\|$ we use the universal
bound $\|v\|\le 2\sqrt{K_{\max}}$ (Step~3(c)) and \eqref{eq:nabla-u-final}:
\[
2\,\|v\|\|\nabla_x u\|
 \;\le\; 2\,(2\sqrt{K_{\max}})\bigl(L_k\,\varepsilon_g + H_k C_f\,\varepsilon_s\bigr)
 \;=\; 4 L_k\sqrt{K_{\max}}\,\varepsilon_g + 4 H_k C_f\sqrt{K_{\max}}\,\varepsilon_s.
\]
Summing the three pairings,
\[
\|\nabla_x \mathcal{L}_\phi - \nabla_x \mathcal{L}\|_2
 \;\le\; C_g\,\varepsilon_g + C_s\,\varepsilon_s,
\]
with
\[
C_g \;=\; 8\,L_k\sqrt{K_{\max}},\qquad
C_s \;=\; 2\,L_k^2 C_f + 8\,H_k C_f\sqrt{K_{\max}}.
\]
No higher-order $\varepsilon_s\varepsilon_g$ or $\varepsilon_s^2$
remainder appears; the bound is first-order by construction.
Both depend only on $K_{\max}, L_k, H_k, C_f$.

\paragraph{Step 5: Variance.}
For an $\mathbb{R}^{\dim\mathcal{X}}$-valued random vector $Z$, write
$\mathrm{Var}(Z) := \mathbb{E}\|Z - \mathbb{E}Z\|_2^2$ throughout this
step (this is what the variance bound in the proposition statement
controls).
Decompose $\widehat{\mathcal{L}}_\phi(x) = S_{\mathrm{cc}}(x) - 2 S_{\mathrm{ct}}(x) + S_{\mathrm{tt}}$,
where $S_{\mathrm{cc}}, S_{\mathrm{ct}}$ are the cond--cond and
cond--target U-statistics (only these depend on $x$) and $S_{\mathrm{tt}}$
is the target--target U-statistic (constant in $x$). Since
$\mathcal{S}_{\mathrm{cond}}\perp\mathcal{S}_\mathcal{G}$ and $(a-b)^2\le 2a^2+2b^2$,
\begin{equation}
\mathrm{Var}\bigl(\nabla_x \widehat{\mathcal{L}}_\phi\bigr)
 \;\le\;
 2\,\mathrm{Var}\!\bigl(\nabla_x S_{\mathrm{cc}}\bigr)
  + 8\,\mathrm{Var}\!\bigl(\nabla_x S_{\mathrm{ct}}\bigr).
\label{eq:var-split}
\end{equation}
By \eqref{eq:grad-dominating} the kernel of $\nabla_x S_{\mathrm{cc}}$
has uniformly bounded second moment $\le 8 L_k^2 K_{\max} C_f^2$
(use $(a+b)^2\le 2a^2+2b^2$ and $\mathbb{E}\|\partial_x f_\phi\|_F^2 \le C_f^2$);
the same bound holds for the cond--target kernel of $\nabla_x S_{\mathrm{ct}}$.
Standard variance bounds for finite-second-moment $U$-statistics
(Hoeffding's projection / Hoeffding-decomposition bound for one-sample
order-2 U-stats~\citep{Hoeffding1948}; two-sample analogue for cross
U-stats) imply
$\mathrm{Var}(\nabla_x S_{\mathrm{cc}}) \le c_1\,L_k^2 K_{\max} C_f^2 / n_{\mathrm{cond}}$
and
$\mathrm{Var}(\nabla_x S_{\mathrm{ct}}) \le c_2\,L_k^2 K_{\max} C_f^2 (1/n_{\mathrm{cond}} + 1/n_{\mathrm{target}})$
for absolute constants $c_1, c_2$. Substituting into \eqref{eq:var-split}
and combining,
\[
\mathrm{Var}(\nabla_x \widehat{\mathcal{L}}_\phi)
 \;\le\;
 C_v\!\left(\frac{1}{n_{\mathrm{cond}}} + \frac{1}{n_{\mathrm{target}}}\right),
\]
with $C_v = O\!\bigl(L_k^2 K_{\max} C_f^2\bigr)$. We do not pin down the
absolute constant since only the rate is used.
\end{proof}

\begin{remark}[Generic distillation-through-autograd structure]\label{rem:generic-distillation}
The argument used to bound $\|\nabla_x u\|_{\mathcal{H}_k}$ in Step~3(b)
of the proof above generalises to any pair of reparameterisable
samplers $f_\star, f_\phi : \mathcal{X}\times\Omega\to\mathcal{Y}$
sharing noise space $(\Omega,\pi)$ and satisfying the same-$\eta$
output and Jacobian couplings
$\mathbb{E}_\eta\|f_\phi-f_\star\|^2 \le \varepsilon_{\mathrm{dist}}^2$
and
$\mathbb{E}_\eta\|\partial_x f_\phi-\partial_x f_\star\|_F^2 \le \varepsilon_{g,\mathrm{dist}}^2$,
together with a bounded second-moment condition on the teacher
Jacobian. The student-vs-teacher gradient discrepancy of any
squared-MMD loss against a fixed target then satisfies
$\|\mathbb{E}[\nabla_x \widehat{\mathcal{L}}_\phi] - \mathbb{E}[\nabla_x \widehat{\mathcal{L}}_\star]\|
 \le C_g\,\varepsilon_{g,\mathrm{dist}} + C_s\,\varepsilon_{\mathrm{dist}}$,
with the same $C_g, C_s$ as Proposition~\ref{prop:grad}; this is what
Proposition~\ref{prop:memory}(part~3) below specialises to.

The output coupling alone is not sufficient: the one-dimensional
example $f_\star(x)=x$, $f_\phi(x)=x+\varepsilon\,\sin(x/\varepsilon^2)$
has $\|f_\star-f_\phi\|_\infty=\varepsilon\to 0$ while
$|f_\star'(x)-f_\phi'(x)|=(1/\varepsilon)|\cos(x/\varepsilon^2)|$ has
amplitude $1/\varepsilon\to\infty$. The output condition controls
$L^2$ closeness; the Jacobian condition controls Sobolev $W^{1,2}$
closeness; the former does not imply the latter without additional
smoothness on higher derivatives. No widely-deployed
diffusion-distillation recipe (consistency models, progressive
distillation, LCM, DMD, ADD) penalises Jacobian divergence during
training, so $\varepsilon_{g,\mathrm{dist}}$ is not directly bounded by
the training loss; Sobolev-style distillation
\citep{Czarnecki2017Sobolev} would supply it but is not part of any
current pipeline. We address this empirically in
Section~\ref{app:MLGD-ESTIMATOR:teacher-student-connection}.
\end{remark}

\subsection{Memory--fidelity trade-off}

For comparison, consider the same estimator using a $K_\star$-step
\emph{teacher} sampler $f_\star$ obtained by unrolling the reverse
diffusion ($K_\star \ge 2$; e.g.\ SDXL-Base, $K_\star=20$), and let
$f_\phi$ be a $K_s$-step few-step \emph{student} ($K_s\ll K_\star$;
e.g.\ SDXL-Lightning, $K_s=4$, or a single-step CM, $K_s=1$).
Denote the teacher's induced conditional by $\mathcal{P}_\star$, its
plug-in loss estimator by $\widehat{\mathcal{L}}_\star$, and the
corresponding gradient estimator by $\nabla_x \widehat{\mathcal{L}}_\star(x)$.

\begin{proposition}[Memory--fidelity trade-off]\label{prop:memory}
\textbf{Memory.} Under Assumption~\ref{ass:diff} for both $f_\phi$ and
$f_\star$:
\begin{enumerate}\itemsep2pt
\item \emph{Teacher (unrolled) gradient:} reverse-mode evaluation of
      $\nabla_x \widehat{\mathcal{L}}_\star(x)$ has per-sample chain
      depth $\Theta(K_\star)$ and total stored activations
      $\Theta(K_\star \cdot n_{\mathrm{cond}})$ (absent checkpointing).
\item \emph{Few-step student:} reverse-mode evaluation of
      $\nabla_x \widehat{\mathcal{L}}_\phi(x)$ has per-sample chain
      depth $\Theta(K_s)$ and total stored activations
      $\Theta(K_s \cdot n_{\mathrm{cond}})$. The memory ratio is
      therefore $\Theta(K_\star/K_s)$.
\end{enumerate}
\textbf{Gradient discrepancy.} Fix $x\in\mathcal{X}$ and assume
$n_{\mathrm{cond}}, n_{\mathrm{target}}\ge 2$, so that the unbiased
MMD U-statistics defining $\widehat{\mathcal{L}}_\phi$ and
$\widehat{\mathcal{L}}_\star$ are well-defined. In addition to
Assumption~\ref{ass:kernel}, suppose the following
\emph{student-vs-teacher} fidelity conditions hold under a shared-noise
coupling (direct analogues of Assumptions~\ref{ass:samples}
and~\ref{ass:grad} with $f_\star$ in place of
$f_{\mathrm{true}}$):
\begin{enumerate}\itemsep2pt
\item[\rm(S1)] $\mathbb{E}_\eta\|f_\phi(x,\eta) - f_\star(x,\eta)\|^2 \le \varepsilon_{\mathrm{dist}}^2$ for every $x$;
\item[\rm(S2)] $\mathbb{E}_\eta\|\partial_x f_\phi(x,\eta) - \partial_x f_\star(x,\eta)\|_F^2 \le \varepsilon_{g,\mathrm{dist}}^2$ for every $x$;
\item[\rm(S3)] $\mathbb{E}_\eta\|\partial_x f_\star(x,\eta)\|_F^2 \le C_f^2$ for every $x$ (teacher analogue of Assumption~\ref{ass:repar}).
\end{enumerate}
Assume moreover that the gradient--expectation interchange used in
Proposition~\ref{prop:grad} is valid for both $f_\phi$ and $f_\star$
(e.g., by local dominated differentiability in $x$). Then
\[
\bigl\|\mathbb{E}[\nabla_x \widehat{\mathcal{L}}_\phi] - \mathbb{E}[\nabla_x \widehat{\mathcal{L}}_\star]\bigr\|
 \;\le\; C_g\,\varepsilon_{g,\mathrm{dist}} + C_s\,\varepsilon_{\mathrm{dist}},
\]
with constants $C_g, C_s$ as in Proposition~\ref{prop:grad}. The
teacher-to-truth errors $\varepsilon_d, \varepsilon_{g,d}$ do not
enter this bound \emph{because the comparison is between the student
and the teacher, not between either and the truth}: closeness to
truth would require those errors and is not claimed here.
\end{proposition}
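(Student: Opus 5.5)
The proof splits into the memory claim and the gradient-discrepancy claim, and the two are handled by different arguments.

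\textbf{Memory.} This is a computation-graph counting argument. The teacher sampler $f_\star(x,\eta)$ is the composition of $K_\star$ denoiser evaluations, $y_{k-1}=\Phi_k(y_k,x)$, each of which depends on $x$ through the conditioning. Reverse-mode differentiation of $\widehat{\mathcal{L}}_\star$ with respect to $x$ must therefore traverse all $K_\star$ nodes for each of the $n_{\mathrm{cond}}$ samples. Absent checkpointing, it must also store the intermediate activations of every $\Phi_k$.

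If one network pass stores $\Theta(M)$ activations, with $M$ the same for teacher and student because the backbone is shared (cf.\ the architectural-parity remark of Section~\ref{sec:ablation}), the total is $\Theta(K_\star n_{\mathrm{cond}} M)$. The MMD head adds $O(n_{\mathrm{cond}} n_{\mathrm{target}})$, which does not depend on $K$ and is absorbed.

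The same count for the $K_s$-step student gives $\Theta(K_s n_{\mathrm{cond}} M)$. The ratio is $\Theta(K_\star/K_s)$.

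For the lower bounds I will note that each of the $K$ stored activations is needed in the backward pass. Assumption~\ref{ass:diff} only guarantees that the Jacobians exist, so the chain rule actually applies.

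\textbf{Gradient discrepancy.} I will rerun Steps~1--4 of Proposition~\ref{prop:grad} with $f_{\mathrm{true}}$ replaced by $f_\star$, as Remark~\ref{rem:generic-distillation} anticipates.

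First, by the assumed interchange and unbiasedness of the U-statistics,
\[
\mathbb{E}[\nabla_x\widehat{\mathcal{L}}_\phi]=\nabla_x\mathcal{L}_\phi
\quad\text{and}\quad
\mathbb{E}[\nabla_x\widehat{\mathcal{L}}_\star]=\nabla_x\mathcal{L}_\star,
\]
where $\mathcal{L}_\star=\operatorname{MMD}^2(\mathcal{P}_\star,\mathcal{G})$. The target term is constant in $x$ and cancels.

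Next, set $A=\mu_{\mathcal{P}_\star(\cdot|x)}$, $B=\mu_{\mathcal{P}_\phi(\cdot|x)}$, $u=B-A$ and $v=A-\mu_{\mathcal{G}}$. The decomposition \eqref{eq:grad-decomp} then holds verbatim. Under the shared-noise coupling, the bounds of Step~3 transfer directly:
\begin{itemize}
\item $\|u\|\le L_k\varepsilon_{\mathrm{dist}}$, using (S1);
\item $\|\nabla_x u\|\le L_k\varepsilon_{g,\mathrm{dist}}+H_kC_f\varepsilon_{\mathrm{dist}}$, by the $\Delta_1/\Delta_2$ split with (S2) and (S3);
\item $\|v\|\le 2\sqrt{K_{\max}}$;
\item $\|\nabla_x v\|\le L_kC_f$, using (S3).
\end{itemize}
The same Cauchy--Schwarz assembly as in Step~4 then gives $C_g\varepsilon_{g,\mathrm{dist}}+C_s\varepsilon_{\mathrm{dist}}$ with identical constants.

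\textbf{Main obstacle.} The step needing the most care is justifying the interchange and the Fréchet differentiability of $A(x)$ for the teacher. Here $f_\star$ is a deep composition, and Step~1a's envelope needs the second-moment bound (S3) on the composed Jacobian, not on each factor separately. I will take (S3) directly as the needed envelope. This keeps the assumption at the level of the composed map, so no product of per-step Jacobian bounds, which would grow with $K_\star$, enters.

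The memory part is the less rigorous piece. I will state it with respect to a fixed per-pass activation footprint and without checkpointing, matching the hypotheses of the proposition.
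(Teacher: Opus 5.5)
Your proposal is correct and follows the paper's proof essentially step for step. The memory part is the same unrolled-chain count: per-sample depth $\Theta(K)$, storage $\Theta(K\,n_{\mathrm{cond}})$, ratio $\Theta(K_\star/K_s)$. The discrepancy part is the same verbatim replay of Steps~3--4 of Proposition~\ref{prop:grad}, with $f_\star$ in place of $f_{\mathrm{true}}$, $(\varepsilon_s,\varepsilon_g)\to(\varepsilon_{\mathrm{dist}},\varepsilon_{g,\mathrm{dist}})$, and (S3) supplying $C_f$.

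Three small bookkeeping points:
\begin{enumerate}
\item The $\Delta_2$ bound uses (S1) as well as (S3).
\item To get the same constants, the first Step~4 pairing needs the universal bound $\|u\|\le 2\sqrt{K_{\max}}$, not $L_k\varepsilon_{\mathrm{dist}}$.
\item The MMD head stores $\Theta(n_{\mathrm{cond}}^2+n_{\mathrm{cond}}n_{\mathrm{target}})$ pairwise terms, not $O(n_{\mathrm{cond}}n_{\mathrm{target}})$. So ``absorbed'' tacitly assumes the per-pass footprint $M$ dominates. The paper's ``$\Theta(1)$ per sample'' glosses over the same point.
\end{enumerate}
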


\begin{proof}
\emph{Parts (1) and (2): reverse-mode memory.}
Reverse-mode automatic differentiation of
$\widehat{\mathcal{L}}_\bullet(x)$ (for $\bullet \in \{\star, \phi\}$)
propagates a gradient through each of the $n_{\mathrm{cond}}$
evaluations $f_\bullet(x, \eta_i)$ and combines them at a final
U-statistic reduction node.

Each teacher forward $f_\star(x, \eta_i)$ unrolls $K_\star \ge 2$ DDIM
steps, each of which invokes the diffusion UNet once on a latent of
fixed dimension. Writing $u_\ell^{(i)}$ for the post-activation tensor
at step $\ell$ of sample $i$, the dependency chain is
$u_1^{(i)} \to u_2^{(i)} \to \cdots \to u_{K_\star}^{(i)} = f_\star(x, \eta_i)$,
each edge contributing $\Theta(1)$ to the per-sample critical path and
$\Theta(1)$ stored activations per step. Hence the per-sample chain
depth of $\nabla_x \widehat{\mathcal{L}}_\star(x)$ is $\Theta(K_\star)$,
and the total stored activations are
$\Theta(K_\star \cdot n_{\mathrm{cond}})$ (the final reduction adds only
$\Theta(1)$ per sample and is independent of $K_\star$). Absent
gradient checkpointing this is unavoidable.

For the few-step student, $f_\phi(x, \eta_i)$ is a $K_s$-step forward
call (typically $K_s\in\{1,2,4\}$), so the per-sample chain depth is
$\Theta(K_s)$ and the total stored activations are
$\Theta(K_s\cdot n_{\mathrm{cond}})$. The ratio is $\Theta(K_\star/K_s)$,
as claimed.

\emph{Gradient discrepancy:}
Replay the proof of Proposition~\ref{prop:grad} verbatim, but with the
population triple
$(\mu_{\mathcal{G}}, \mu_{\mathcal{P}_\star(\cdot|x)}, \mu_{\mathcal{P}_\phi(\cdot|x)})$
in place of
$(\mu_{\mathcal{G}}, \mu_{\mathcal{P}(\cdot|x)}, \mu_{\mathcal{P}_\phi(\cdot|x)})$
and with $f_\star$ playing the role $f_{\mathrm{true}}$ played there.
Concretely, set
\[
A'(x) := \mu_{\mathcal{P}_\star(\cdot|x)},\quad
B'(x) := \mu_{\mathcal{P}_\phi(\cdot|x)},\quad
u'(x) := B'(x) - A'(x),\quad
v'(x) := A'(x) - C,
\]
so that $\mathcal{L}_\star(x) = \|v'(x)\|_{\mathcal{H}_k}^2$,
$\mathcal{L}_\phi(x) = \|u'(x) + v'(x)\|_{\mathcal{H}_k}^2$ and the
same algebraic decomposition \eqref{eq:grad-decomp} holds with primes.
Steps~3(a)--(d) of Proposition~\ref{prop:grad}'s proof go through
unchanged with the substitutions $f_{\mathrm{true}} \!\to\! f_\star$,
$\varepsilon_s \!\to\! \varepsilon_{\mathrm{dist}}$,
$\varepsilon_g \!\to\! \varepsilon_{g,\mathrm{dist}}$. The substitutions
require: Assumption~\ref{ass:kernel} (kernel regularity);
Assumptions~\ref{ass:diff} on $f_\phi$ and (S3) above on $f_\star$
(bounded Jacobians); the same-$\eta$ output coupling (S1) and Jacobian
coupling (S2) of the proposition statement, which together replace
Assumptions~\ref{ass:samples} and~\ref{ass:grad} for this comparison.
Step~4 then yields
\[
 \bigl\|\mathbb{E}[\nabla_x \widehat{\mathcal{L}}_\phi]
      - \mathbb{E}[\nabla_x \widehat{\mathcal{L}}_\star]\bigr\|
   \;\le\;
   C_g\,\varepsilon_{g,\mathrm{dist}}
    + C_s\,\varepsilon_{\mathrm{dist}},
\]
with the same $C_g, C_s$ as in Proposition~\ref{prop:grad}.

The teacher-to-truth errors $\varepsilon_d$ and $\varepsilon_{g,d}$
do not appear in this bound \emph{because the comparison is between
the student and the teacher, not between either and the truth}. The
proof above never invokes any closeness of $f_\star$ to $f_{\mathrm{true}}$,
so no truth-related quantity can enter. (The map
$\mathcal{Q}\mapsto\nabla_x\|\mu_\mathcal{Q}-\mu_\mathcal{G}\|^2$ is
nonlinear in $\mathcal{Q}$, so $\varepsilon_d, \varepsilon_{g,d}$ are
not common offsets that ``subtract out'' between the two gradients;
they are simply absent from a direct student-vs-teacher comparison.
A bound on closeness-to-truth would require Proposition~\ref{prop:grad}
applied to $f_\star$ separately, with its own teacher-to-truth
$\varepsilon_d, \varepsilon_{g,d}$.)
\end{proof}

Proposition~\ref{prop:memory} formalises the empirical observation of
Section~\ref{sec:ablation}: replacing the teacher with the distilled sampler reduces
activation memory by the step-count ratio $K_\star/K_s$ (measured as
$\approx 15\times$ for SDXL-Base $\to$ SDXL-Turbo, $K_\star=30, K_s=2$;
$\approx 5\times$ for SDXL-Base $\to$ SDXL-Lightning, $K_\star=20,
K_s=4$), while introducing the additional student-teacher
gradient discrepancy $O(\varepsilon_{g,\mathrm{dist}} + \varepsilon_{\mathrm{dist}})$.

\subsection{Consequence for low-rank targets}

\begin{corollary}[Identifiability under low-rank targets]\label{cor:lowrank}
Suppose $k$ is characteristic on $\mathcal{Y}$. Then
\[
\mathcal{L}(x) = 0 \;\iff\; \mathcal{P}(Y\mid X=x) = \mathcal{G}.
\]
In particular, if $\mathcal{G}$ is supported on a (low-rank) submanifold
$\mathcal{M} \subseteq \mathcal{Y}$, any global minimiser $x^\star$ of
$\mathcal{L}$ with $\mathcal{L}(x^\star) = 0$ satisfies
\[
\operatorname{supp}(\mathcal{P}(Y\mid X=x^\star))
 = \operatorname{supp}(\mathcal{G}) \subseteq \mathcal{M};
\]
the inclusion is an equality if additionally $\operatorname{supp}(\mathcal{G})=\mathcal{M}$.
If, in addition, $\widehat{\mathcal{L}}_\phi \to \mathcal{L}$
\emph{uniformly} on a compact $\mathcal{X}$ in the joint limit
$n_{\mathrm{cond}}, n_{\mathrm{target}} \to \infty$ and $\varepsilon_s, \varepsilon_g \to 0$,
then any sequence of empirical minimisers
$\widehat x_n \in \arg\min_x \widehat{\mathcal{L}}_\phi(x)$ satisfies
$\mathcal{L}(\widehat x_n) \to \min_x \mathcal{L}(x)$.
Proposition~\ref{prop:loss} is pointwise; uniform convergence over a
compact $\mathcal{X}$ requires an additional covering / equicontinuity
argument under stronger assumptions (e.g.\ almost-sure bounded
Jacobians or sub-Gaussian derivative tails), which we do not prove here.
\end{corollary}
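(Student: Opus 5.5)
The corollary has three parts, and I will prove them in order: identifiability, the support statement, and consistency of empirical minimisers.

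\emph{Identifiability.} I start from the RKHS identity used in Step~2 of the proof of Proposition~\ref{prop:loss}:
\[
\mathcal{L}(x)=\operatorname{MMD}^2(\mathcal{P}(Y\mid X=x),\mathcal{G})=\|\mu_{\mathcal{P}(\cdot\mid x)}-\mu_{\mathcal{G}}\|_{\mathcal{H}_k}^2 .
\]
Hence $\mathcal{L}(x)=0$ iff the two kernel mean embeddings coincide. By definition of a characteristic kernel, the map $\mathcal{Q}\mapsto\mu_{\mathcal{Q}}$ is injective on probability measures on $\mathcal{Y}$. So equal embeddings force $\mathcal{P}(Y\mid X=x)=\mathcal{G}$. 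The converse is immediate.

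\emph{Support statement.} If $\mathcal{L}(x^\star)=0$, the first part gives equality of the two measures, and therefore equality of their supports. Since $\mathcal{G}$ is supported in $\mathcal{M}$, we have $\operatorname{supp}(\mathcal{G})\subseteq\mathcal{M}$ (taking $\mathcal{M}$ closed, or its closure). Equality holds trivially when $\operatorname{supp}(\mathcal{G})=\mathcal{M}$. No low-rank structure is actually used; the word ``low-rank'' only describes the application.

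\emph{Consistency.} Let $\Delta_n:=\sup_{x\in\mathcal{X}}|\widehat{\mathcal{L}}_\phi(x)-\mathcal{L}(x)|$, which tends to $0$ by the uniform-convergence hypothesis. On compact $\mathcal{X}$, $\mathcal{L}$ attains its minimum at some $x^\circ$; this needs continuity of $\mathcal{L}$, which follows from Step~1b's differentiability of $x\mapsto\mu_{\mathcal{P}(\cdot\mid x)}$. If instead the minimum is not attained, I work with the infimum and $\epsilon$-minimisers. The argument is the standard sandwich:
\[
\mathcal{L}(\widehat x_n)\le \widehat{\mathcal{L}}_\phi(\widehat x_n)+\Delta_n\le \widehat{\mathcal{L}}_\phi(x^\circ)+\Delta_n\le \mathcal{L}(x^\circ)+2\Delta_n .
\]
Combined with $\mathcal{L}(\widehat x_n)\ge\min\mathcal{L}$, this gives $\mathcal{L}(\widehat x_n)\to\min_x\mathcal{L}(x)$. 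Existence of $\widehat x_n$ is part of the hypothesis. If an almost-sure statement is wanted, I apply the sandwich pathwise on the event where uniform convergence holds; if only in-probability uniform convergence is given, the conclusion holds in probability.

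The main obstacle is not in this argument but in the uniform-convergence hypothesis itself. As the statement notes, Proposition~\ref{prop:loss} is only pointwise, so that hypothesis is assumed rather than derived here. Everything else reduces to injectivity of the mean embedding, which is exactly where characteristicness is needed (cf.\ the footnote in Proposition~\ref{prop:loss}).
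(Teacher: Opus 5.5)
Your proposal is correct and takes essentially the paper's route. The paper justifies the corollary only through the identity $\mathcal{L}(x)=\|\mu_{\mathcal{P}(\cdot\mid x)}-\mu_{\mathcal{G}}\|_{\mathcal{H}_k}^2$ and injectivity of the mean embedding for a characteristic kernel, and it leaves the consistency clause unproved; your uniform-convergence sandwich is the standard argument for that step. One small simplification: taking the infimum over $x$ directly in your sandwich gives $\mathcal{L}(\widehat x_n)\le\inf_x\mathcal{L}(x)+2\Delta_n$, so you need neither continuity of $\mathcal{L}$ (which Step~1b supplies only under an extra regularity hypothesis) nor $\epsilon$-minimisers.
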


Corollary~\ref{cor:lowrank} complements Propositions~\ref{prop:loss}--\ref{prop:grad}:
characteristic-kernel MMD is identifying, so even when $\mathcal{G}$ has low-rank
(submanifold) support any $x^\star$ achieving zero loss recovers $\mathcal{G}$ exactly.
We state consistency for global minimizers only; non-convexity of $\mathcal{L}$ in $x$
means stationary points of $\widehat{\mathcal{L}}_\phi$ need not converge to minimizers
of $\mathcal{L}$, which is a generic obstacle shared with all non-convex
gradient-based inverse design.

\subsection{$\mathcal{Q}_\beta$ as ideal target; \methodname{} as an LGD-style approximate sampler}
\label{app:theory:lgd-approx}

\begin{remark}[$\mathcal{Q}_\beta$ as ideal target; \methodname{} as an LGD-style approximate sampler]\label{rem:lgd-approx}
Problem~\ref{prob:sampling} defines the \emph{ideal} target
$\mathcal{Q}_\beta(x_0)\propto\mathcal{P}(x_0)\,e^{-\beta\mathcal{L}(x_0)}$.
For a reverse diffusion to sample from $\mathcal{Q}_\beta$ exactly, one
would replace the unconditional score $\nabla_{x_t}\log\mathcal{P}_t(x_t)$
by the score of the noised tilted distribution
$\mathcal{Q}_\beta^{(t)}(x_t)
=\int q_t(x_t\mid x_0)\,\mathcal{Q}_\beta(x_0)\,dx_0
=\mathcal{P}_t(x_t)\cdot\mathbb{E}\bigl[e^{-\beta\mathcal{L}(x_0)}\mid x_t\bigr]$,
giving
\begin{equation}\label{eq:exact-tilt}
\nabla_{x_t}\log\mathcal{Q}_\beta^{(t)}(x_t)
=\nabla_{x_t}\log\mathcal{P}_t(x_t)
+\nabla_{x_t}\log\mathbb{E}_{x_0\sim p(\cdot\mid x_t)}\!\bigl[e^{-\beta\mathcal{L}(x_0)}\bigr].
\end{equation}
The second term is the exact loss-guidance score; it is intractable
because it requires an integral over the posterior $p(x_0\mid x_t)$ at
every $t$. Following LGD/DPS~\citep{DiffusionPosteriorSampling,
lossGuidedDiffusion}, \methodname{} substitutes the Tweedie--Jensen surrogate
\[
\nabla_{x_t}\log\mathbb{E}\bigl[e^{-\beta\mathcal{L}(x_0)}\mid x_t\bigr]
\;\approx\;
-\beta\,\nabla_{x_t}\mathcal{L}\bigl(\hat{x}_0(x_t)\bigr),
\]
where $\hat{x}_0(x_t)$ is the Tweedie estimate. This substitution is
exact only in special cases (small $\beta$, $\mathcal{L}$ affine in $x_0$,
or $p(x_0\mid x_t)$ concentrated). \methodname{} additionally Monte-Carlo
averages $\mathcal{L}$ over $n_{\mathrm{MC}}$ Gaussian perturbations of
$\hat{x}_0$ (Algorithm~\ref{alg:mlgd-d-full}), which tracks the posterior
expectation more faithfully than a single Tweedie point but still treats
$p(x_0\mid x_t)$ as Gaussian about the Tweedie mean. Consequently the
trajectory of Algorithm~\ref{alg:mlgd-outer} samples from a distribution
\emph{close to but not equal to} $\mathcal{Q}_\beta$; the gap is affected
by $\beta$, the timestep discretization, $n_{\mathrm{MC}}$, and the
inner-sampler errors $\varepsilon_s,\varepsilon_g$.
Propositions~\ref{prop:loss}--\ref{prop:grad} analyse only the inner
loss/gradient estimator (one reverse-diffusion step), not the marginal
of the full guided trajectory. Asymptotically exact
alternatives --- twisted sequential Monte Carlo
\citep{wu2023twisted, cardoso2024monte}, MCMC-corrected guidance
\citep{dou2024diffusion}, and Feynman--Kac steering --- recover exact
$\mathcal{Q}_\beta$ samples at the cost of additional particles or
accept/reject steps; we adopt the LGD heuristic for its plug-and-play
compatibility with frozen pretrained samplers, and report empirical
agreement with the analytical $\mathcal{Q}_\beta$ in the 1D MoG setting
(Figure~\ref{fig:sim_2d}, bottom; Appendix~\ref{sec:app:cdms_beta}) as
evidence the heuristic is reasonable in practice. The CDMO variant
(Problem~\ref{prob:opt}) makes a weaker claim that does not require exact
$\mathcal{Q}_\beta$-sampling: only that the trajectory finds inputs of
low $\mathcal{L}$.
\end{remark}

\subsection{Status of the fidelity assumptions}
\label{app:theory:status}

\paragraph{Output gap $\varepsilon_s$.}\label{par:eps-s-status}\phantomsection\label{rem:eps-s-training}
The output gap $\varepsilon_s$ of Assumption~\ref{ass:samples} is an
\emph{idealised} fidelity assumption.
\emph{Directly-trained samplers}: when $f_\phi$ is trained directly on
data (e.g.\ improved Consistency Training on MNIST
\citep{ConsistencyModelts2}), score-based diffusion convergence
results \citep{chen2023sampling, benton2024nearly,
debortoli2022convergence} give related output-distribution guarantees
but do not directly imply the uniform conditional pathwise bound
required here, because they bound marginal sampling error, not the
same-$\eta$ coupling between two specific samplers.
\emph{Distilled samplers (teacher refinement)}: when $f_\phi$ is
obtained by distilling a pretrained multi-step teacher $f_\star$
(e.g.\ SDXL-Turbo or SDXL-Lightning distilled from SDXL-Base),
triangle inequality in the same-$\eta$ pathwise norm gives
$\varepsilon_s \le \varepsilon_d + \varepsilon_{\mathrm{dist}}$, with
$\varepsilon_d := \sup_x (\mathbb{E}_\eta\|f_\star(x,\eta) - f_{\mathrm{true}}(x,\eta)\|^2)^{1/2}$
the teacher-to-truth error and
$\varepsilon_{\mathrm{dist}} := \sup_x (\mathbb{E}_\eta\|f_\phi(x,\eta) - f_\star(x,\eta)\|^2)^{1/2}$
the student-to-teacher distillation error. The latter is the same-$\eta$
quantity the distillation training loss aims to minimise; the former
remains an idealisation, since the teacher does not exactly match
$f_{\mathrm{true}}$.

\paragraph{Jacobian gap $\varepsilon_g$.}\label{par:eps-g-status}\phantomsection\label{rem:eps-g-training}
Unlike $\varepsilon_s$, the Jacobian gap $\varepsilon_g$ of
Assumption~\ref{ass:grad} is \emph{not} bounded by any current
training convergence result. Distillation objectives minimise output
$L^2$ loss; consistency-training and score-matching minimise $L^2$
losses on sample outputs or on the conditional score, not on the
Jacobian $\partial_x f_\phi$ of the generator in the conditioning
input. Sobolev-style objectives \citep{Czarnecki2017Sobolev} would
supply Jacobian supervision directly but are absent from all
widely-deployed recipes. Bounding $\varepsilon_g$ in terms of training
quantities is an open theoretical problem, independent of this paper.

We can still measure a \emph{component} of $\varepsilon_g$ against a
differentiable multi-step reference sampler $f_{\mathrm{ref}}$ trained
on the same data. Triangle inequality gives
\[
 \varepsilon_g \;\le\;
  \underbrace{\bigl(\mathbb{E}_\eta\|\partial_x f_{\mathrm{ref}}
                    - \partial_x f_{\mathrm{true}}\|_F^2\bigr)^{1/2}}_{\varepsilon_{g,d}\ :\ \text{open}}
  \;+\;
  \underbrace{\bigl(\mathbb{E}_\eta\|\partial_x f_\phi
                    - \partial_x f_{\mathrm{ref}}\|_F^2\bigr)^{1/2}}_{\varepsilon_{g,\mathrm{dist}}\ :\ \text{measurable}},
\]
where $\varepsilon_{g,\mathrm{dist}}$ is directly measurable by autograd
under shared-noise coupling, while $\varepsilon_{g,d}$ remains
unmeasurable for the same reason the output-level $\varepsilon_d$
(above) is not empirically verifiable: the truth's Jacobian is unknown.

The empirical check is \emph{one-sided}. By triangle inequality,
$\widehat\varepsilon_{g,\mathrm{dist}}$ can \emph{falsify}
Assumption~\ref{ass:grad}: a large measurement forces either
$\varepsilon_g$ or $\varepsilon_{g,d}$ to be large. But a \emph{small}
$\widehat\varepsilon_{g,\mathrm{dist}}$ does not prove $\varepsilon_g$
is small, because $f_\phi$ and $f_{\mathrm{ref}}$ may share systematic
biases and be jointly far from truth in the Jacobian without being far
from each other. We therefore report
$\widehat\varepsilon_{g,\mathrm{dist}}$ as a diagnostic, not as a proof
that $\varepsilon_g$ is small. Despite this caveat, the bounds in
Propositions~\ref{prop:loss}--\ref{prop:grad} degrade gracefully in
$\varepsilon_g$: if only $\varepsilon_s$ is controlled and
$\varepsilon_g$ is unknown, Proposition~\ref{prop:loss} is unaffected
and Proposition~\ref{prop:grad} becomes a bias bound in terms of
$\varepsilon_g$ alone.

\subsection{Empirical diagnostic of $\varepsilon_{g,\mathrm{dist}}$ on SDXL-Lightning vs.\ SDXL-Base}
\label{app:MLGD-ESTIMATOR:teacher-student-connection}

We measure $\hat\varepsilon_{g,\mathrm{dist}}$ using
SDXL-Lightning~\citep{lin2024sdxllightningprogressiveadversarialdiffusion}
(4-step student $f_\phi$) and SDXL-Base (20-step DDIM teacher $f_\star$)
across $N=100$ prompts spanning four visual style groups
(\texttt{photo}, \texttt{cinematic}, \texttt{dark~fantasy}, \texttt{cartoon}),
with noise $\eta_i$ shared between student and teacher.
Both models share the same UNet architecture; SDXL-Lightning is obtained
by distilling SDXL-Base via a hybrid adversarial score-matching objective
at native $1024\times1024$ resolution~\citep{lin2024sdxllightningprogressiveadversarialdiffusion}.
The student uses \texttt{EulerDiscreteScheduler} (trailing spacing, $w=0$,
CFG baked in), the teacher \texttt{DDIMScheduler} ($w=7.5$).

Because the latent observable $y\in\mathbb{R}^{65536}$ makes the exact
Jacobian intractable, we estimate the VJP-norm gap via $D=10$ random unit
vectors $v_d\sim\mathcal{N}(0,I)$, $\|v_d\|=1$, one reverse-mode pass each.
Writing $J_\bullet^{(i,d)}:=v_d^\top\partial_x f_\bullet(x_i,\eta_i)$
for the directional Jacobian (a row vector), the group-level gaps are
\[
 \widehat\varepsilon_{\mathrm{dist}} \;=\;
   \sqrt{\tfrac{1}{N}\textstyle\sum_i
         \bigl\|f_\phi(x_i,\eta_i) - f_\star(x_i,\eta_i)\bigr\|^2},
 \qquad
 \widehat\varepsilon_{g,\mathrm{dist}} \;=\;
   \sqrt{\tfrac{1}{ND}\textstyle\sum_{i,d}
         \bigl\|J_\phi^{(i,d)} - J_\star^{(i,d)}\bigr\|^2}.
\]
Since these quantities have different units, we normalise by the
student's own magnitude to obtain dimensionless relative errors
\[
 \widehat r_s \;:=\; \frac{\widehat\varepsilon_{\mathrm{dist}}}{\bar y_\phi},
 \qquad
 \widehat r_g \;:=\; \frac{\widehat\varepsilon_{g,\mathrm{dist}}}{\bar J_\phi},
\]
with
\[
\bar y_\phi \;:=\; \sqrt{\tfrac{1}{N}\textstyle\sum_i
  \|f_\phi(x_i,\eta_i)\|^2},
\qquad
\bar J_\phi \;:=\; \sqrt{\tfrac{1}{N}\textstyle\sum_i
  \Bigl(\tfrac{1}{D}\textstyle\sum_d
  \|v_d^\top\partial_x f_\phi(x_i,\eta_i)\|\Bigr)^2}.
\]

$\widehat r_g\approx\widehat r_s$ indicates that distillation preserved
Jacobian structure to the same relative precision as outputs;
$\widehat r_g\gg\widehat r_s$ signals Jacobian degradation beyond output
error --- the failure mode flagged by Remark~\ref{rem:eps-g-training}.

\paragraph{Outlier identification.}
Within each group we flag prompt $i$ as an outlier if its
per-prompt ratio $\hat{r}_{g,i}/\hat{r}_{s,i}$ falls outside the
Tukey fence $[Q_1 - 1.5\,\mathrm{IQR},\; Q_3 + 1.5\,\mathrm{IQR}]$.
Across all 100 prompts, 16 are flagged (3 \texttt{photo},
5 \texttt{cinematic}, 4 \texttt{dark~fantasy}, 4 \texttt{cartoon}).
These alone are responsible for the large $\widehat r_g$ and wide
confidence intervals in Panel~A of Tables~\ref{tab:eps-g-combined}.
Removing them reduces $\widehat r_g$ from $25.09$ to $3.27$
while leaving $\widehat r_s$ essentially unchanged ($0.753$ in both cases),
confirming that the outliers reflect Jacobian instability rather than
global output degradation.

\paragraph{Running-RMS diagnostic.}
Figure~\ref{fig:jacobian_main} (bottom-left) shows the \emph{running RMS
aggregate} $\hat{r}_g / \hat{r}_s$ computed on the first $k$ prompts,
sorted by $\hat{r}_g$ ascending, so each curve is non-decreasing by
construction and its final value equals the full-group RMS aggregate.
The left panel pools all 100 prompts together; the right panel splits
by group.
Across all prompts the aggregate remains close to
$\hat{r}_g/\hat{r}_s \approx 1$--$5$ for the first 85 prompts,
providing encouraging empirical support for Remark~\ref{rem:eps-g-training}.
The aggregate spikes only at the tail, confirming that a small number
of outlier prompts drive the overall RMS rather than a systematic failure.
When splitting by visual style group (bottom-right), \texttt{photo}
and \texttt{cinematic} remain well-aligned for the bulk of their
prompts; the tail jump in \texttt{cinematic} is driven by 5
mode-mismatch outliers.
\texttt{cartoon} degrades more consistently.

\paragraph{Open questions.}
Several questions remain open.
First, what prompt-level properties predict Jacobian instability:
16 outliers across 100 prompts show no obvious syntactic or semantic
pattern, though one plausible explanation is mode mismatch --- prompts
that elicit visually different outputs from teacher and student may
induce larger Jacobian discrepancies, since the two models effectively
compute gradients around different local modes of the image distribution.
Second, whether the strong alignment observed for
\texttt{photo}/\texttt{cinematic} reflects an inductive bias of
SDXL-Lightning towards photorealistic outputs, which would imply that
the choice of distillation target distribution matters for downstream
gradient-based use of the student.
We leave these to future work.

\paragraph{Compute.}
Experiments were conducted on a single NVIDIA RTX PRO 6000 Blackwell GPU (96\,GB VRAM).
Each run evaluated $N=100$ prompts using $D=10$ VJP directions,
with two reverse-mode passes per direction (one for the teacher, one for the student),
amounting to roughly 3 hours of wall-clock time.
The reproduction notebook is provided at~\url{\repoeps}.

\begin{table}[t]
\centering
\setlength{\tabcolsep}{6pt}
\caption{%
  Empirical Jacobian and output gaps: SDXL-Lightning ($f_\phi$, 4-step,
  $1024\!\times\!1024$) vs.\ SDXL-Base ($f_\star$, 20-step DDIM),
  $N=100$ prompts, $D=10$ VJP directions.
  $\widehat r_s$ is the relative output error and $\widehat r_g$ the
  relative Jacobian error; both are dimensionless.
  Outliers flagged per group by Tukey IQR fence on
  $\hat{r}_g/\hat{r}_s$; count removed shown in parentheses.
  95\% bootstrap confidence intervals in brackets.
}
\label{tab:eps-g-combined}
\begin{tabular}{l ccc}
\toprule
\multicolumn{4}{c}{\textit{Panel A: with outliers}} \\
\midrule
Group
  & $\widehat r_s\ [95\%\,\mathrm{CI}]$
  & $\widehat r_g\ [95\%\,\mathrm{CI}]$
  & $\widehat r_g/\widehat r_s\ [95\%\,\mathrm{CI}]$ \\
\midrule

\texttt{photo}
  & $0.66\ [0.62,\,0.69]$
  & $1.75\ [1.24,\,2.38]$
  & $2.66\ [1.90,\,3.62]$ \\[2pt]

\texttt{cinematic}
  & $0.70\ [0.65,\,0.75]$
  & $31.34\ [4.92,\,45.25]$
  & $44.62\ [7.12,\,65.16]$ \\[2pt]

\texttt{dark fantasy}
  & $0.61\ [0.57,\,0.66]$
  & $6.42\ [1.68,\,12.61]$
  & $10.51\ [2.76,\,20.43]$ \\[2pt]

\texttt{cartoon}
  & $0.91\ [0.87,\,0.96]$
  & $28.35\ [2.92,\,54.59]$
  & $31.00\ [3.19,\,60.12]$ \\
\midrule

\textbf{All}
  & $0.75\ [0.72,\,0.79]$
  & $25.09\ [4.58,\,43.40]$
  & $33.31\ [6.16,\,58.38]$ \\

\midrule\midrule
\multicolumn{4}{c}{\textit{Panel B: without outliers (Tukey IQR fence on $\hat{r}_g/\hat{r}_s$)}} \\
\midrule
Group
  & $\widehat r_s\ [95\%\,\mathrm{CI}]$
  & $\widehat r_g\ [95\%\,\mathrm{CI}]$
  & $\widehat r_g/\widehat r_s\ [95\%\,\mathrm{CI}]$ \\
\midrule

\texttt{photo} ($-3$)
  & $0.67\ [0.63,\,0.71]$
  & $1.60\ [1.15,\,2.19]$
  & $2.39\ [1.74,\,3.22]$ \\[2pt]

\texttt{cinematic} ($-5$)
  & $0.69\ [0.63,\,0.74]$
  & $2.45\ [1.59,\,2.94]$
  & $3.58\ [2.41,\,4.26]$ \\[2pt]

\texttt{dark fantasy} ($-4$)
  & $0.62\ [0.57,\,0.67]$
  & $1.81\ [1.24,\,2.93]$
  & $2.93\ [2.12,\,4.63]$ \\[2pt]

\texttt{cartoon} ($-4$)
  & $0.92\ [0.88,\,0.97]$
  & $3.75\ [2.24,\,5.06]$
  & $4.05\ [2.42,\,5.53]$ \\
\midrule

\textbf{All} ($-16$)
  & $0.75\ [0.71,\,0.79]$
  & $3.27\ [2.12,\,4.42]$
  & $4.34\ [2.87,\,5.84]$ \\
\bottomrule
\end{tabular}
\end{table}

\begin{figure}[t]
\centering
\begin{minipage}[t]{0.48\textwidth}
    \centering
    \includegraphics[width=\textwidth]{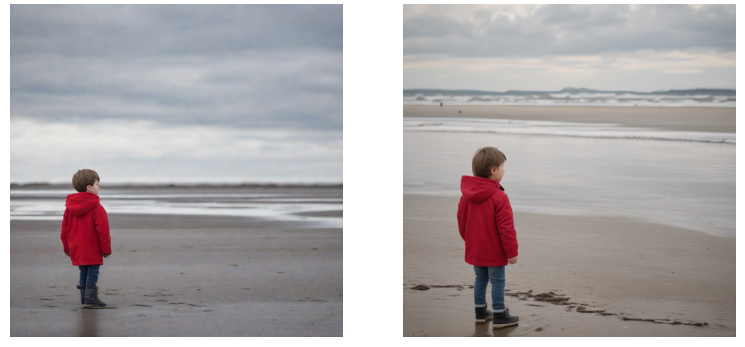}
    \small\textit{Well-aligned prompt} \\[2pt]
    \footnotesize
    $\hat{r}_s = 0.668$\quad $\hat{r}_g = 1.034$\quad $\hat{r}_g/\hat{r}_s = 1.55$
\end{minipage}
\hfill
\begin{minipage}[t]{0.48\textwidth}
    \centering
    \includegraphics[width=\textwidth]{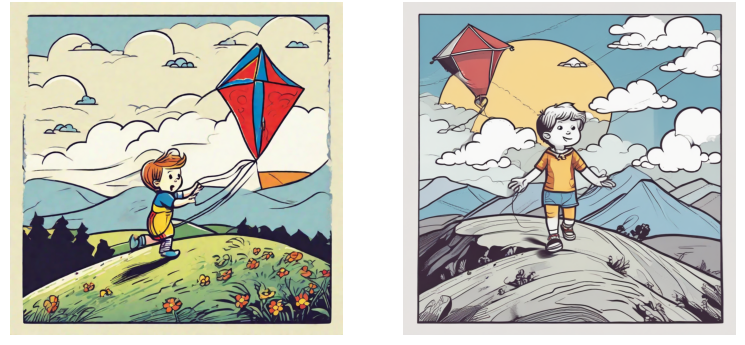}
    \small\textit{Outlier prompt} \\[2pt]
    \footnotesize
    $\hat{r}_s = 0.920$\quad $\hat{r}_g = 182.5$\quad $\hat{r}_g/\hat{r}_s = 198.4$
\end{minipage}
\vspace{0.8em}
\includegraphics[width=\textwidth]{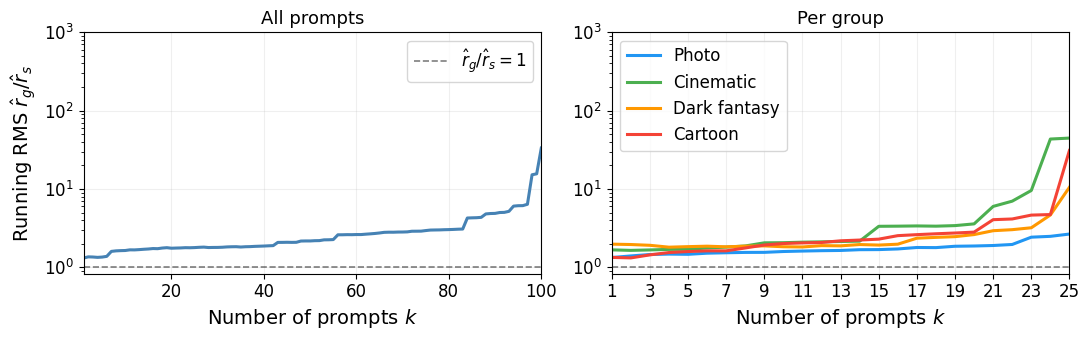}
\caption{%
    \textbf{Empirical Jacobian fidelity of SDXL-Lightning relative to SDXL-Base.}
    \textit{Top row}: a well-aligned \texttt{photo} prompt (\textit{left},
    $\hat{r}_g/\hat{r}_s = 1.55$; \textit{``a young boy in a red jacket
    on an empty beach\ldots''}) and an outlier \texttt{cartoon} prompt
    (\textit{right}, $\hat{r}_g/\hat{r}_s = 198.4$; \textit{``a cartoon
    child flying a kite on a hill, bold outlines...''}),
    each showing SDXL-Base (teacher-left) and SDXL-Lightning (student-right) outputs
    from the same noise $\eta$. The normalised output gap $\hat{r}_s$ is
    similar for both ($0.67$ vs.\ $0.92$), while the normalised Jacobian
    gap $\hat{r}_g$ differs by two orders of magnitude ($1.03$ vs.\
    $182.5$), showing that Jacobian misalignment is not predicted by
    output misalignment alone.
    \textit{Bottom left}: running RMS $\hat{r}_g/\hat{r}_s$ over all
    100 prompts (sorted ascending); the aggregate stays moderate for
    most prompts and spikes only at the tail, confirming that a small
    number of outliers dominate.
    \textit{Bottom right}: same plot by visual style group ($N=25$ each).
    \texttt{photo} and \texttt{cinematic} remain well-aligned throughout;
    \texttt{cartoon} degrades more consistently, as stylised rendering
    targets lie out-of-distribution for the 4-step Lightning student.
}
\label{fig:jacobian_main}
\end{figure}

\subsection{Discussion}

The propositions above clarify three conceptual points that are specific to the CDM
setting and do not follow from generic stochastic-gradient theory:
\begin{itemize}\itemsep2pt
\item The gradient oracle used inside \methodname{} is a \emph{compound} estimator; its
  error decomposes cleanly into a finite-sample variance term, a
  sampler-output bias ($\varepsilon_s$), and a sampler-Jacobian bias
  ($\varepsilon_g$). The U-statistic estimator is unbiased in the
  finite-sample sense, so finite sampling enters Proposition~\ref{prop:grad}
  only through variance, not through bias; the corresponding loss-side
  Monte Carlo term lives in Proposition~\ref{prop:loss}.
\item The distillation error enters both the loss and its gradient \emph{linearly}
  in $\varepsilon_s$, via the Lipschitz relation
  $\operatorname{MMD}(\mathcal{P}_\phi,\mathcal{P}) \le L_k\,W_1(\mathcal{P}_\phi,\mathcal{P})$
  and the $L^2$ coupling of Assumption~\ref{ass:samples}. This is sharper than
  a TV-based bound and is specific to characteristic-kernel distributional losses:
  point-target inverse design ($\|f(x) - y^\star\|$) has no analogous distributional
  stability property.
\item The memory advantage of the distilled sampler comes at an
  analytically quantifiable student-teacher gradient discrepancy
  $O(\varepsilon_{g,\mathrm{dist}} + \varepsilon_{\mathrm{dist}})$,
  supporting the interpretation of the $43\,$GB vs $375\,$GB comparison
  in Section~\ref{sec:ablation} as quantitatively analysed (rather than
  merely empirical) by Proposition~\ref{prop:memory}.
\end{itemize}

Assumption~\ref{ass:grad} is the only assumption not routinely available from the
consistency-model literature. A formal analysis of Jacobian-level distillation
guarantees is a natural direction for future work.

\paragraph{The bound is qualitative, not numerical.} Empirically, the
Jacobian gap $\hat r_g$ exceeds the output gap $\hat r_s$ across all
style groups (Table~\ref{tab:eps-g-combined}). Without outliers (Panel~B),
$\hat{r}_g/\hat{r}_s$ ranges from $2.39\times$ (\texttt{photo}) to
$4.05\times$ (\texttt{cartoon}), with an overall ratio of $4.34\times$;
outlier prompts can push this to $33.31\times$ (Panel~A).
Proposition~\ref{prop:grad}'s bound is therefore qualitatively
informative --- it identifies the right two error sources and shows the
gradient bias decomposes cleanly into them --- but \emph{not}
quantitatively tight at current distillation quality. We view the
propositions as a framework for measurement and regime identification:
they tell us \emph{which} quantities to estimate and how they propagate,
rather than supplying numerical guarantees on the algorithm.

More broadly, this points to a gap in the theory of model distillation itself.
Standard distillation analyses, and the training objectives that flow from them,
control \emph{output fidelity}: $L^2$ or Wasserstein closeness between student
and teacher. This is the right bound when the distilled model is used as a
\emph{function evaluator}. \methodname{}, by contrast, uses $f_\phi$ as an \emph{autograd
oracle}: its Jacobian is consumed, not just its values. Whenever a distilled
or compressed model is deployed in a downstream \emph{optimisation} rather than a
downstream \emph{evaluation} (\methodname{}, bi-level optimisation with learned inner
solvers, differentiable simulators, gradient-guided inverse design in general),
output-fidelity distillation bounds are insufficient, and an $\varepsilon_g$-type
Jacobian-fidelity guarantee is required. Current consistency-model training
provides the former but not the latter.

\section{Licenses}
\label{app:licenses}

\begin{table}[H]
\centering
\caption{Licenses for all assets used in this paper.}
\label{tab:licenses}
\begin{threeparttable}
\setlength{\tabcolsep}{4pt} 

\begin{tabular}{@{}l p{5.5cm} l@{}} 
\toprule
\textbf{Asset} & \textbf{License} & \textbf{Version} \\
\midrule
SDXL-Base\tnote{1}                & CreativeML Open RAIL++-M & 1.0 \\
SDXL-Lightning\tnote{2}           & CreativeML Open RAIL++-M & --  \\
SDXL-Turbo\tnote{3}               & SAI-NC Community License & -- \\
ControlNet-Scribble\tnote{4}      & Openrail                 & --  \\
HuggingFace Annotators\tnote{5}   & other (unspecified)      & --  \\
CLIP ViT-L/14\tnote{6}            & MIT$^*$                  & --  \\
SDEdit\tnote{7}                   & MIT                      & --  \\
HuggingFace Diffusers\tnote{8}    & Apache 2.0               & --  \\
PyTorch\tnote{9}                  & BSD-3                    & --  \\
WandB\tnote{10}                   & MIT                      & --  \\
MNIST\tnote{11}                   & CC BY-SA 3.0             & --  \\
Our models\tnote{12}              & MIT                      & --  \\
Our code\tnote{13}                & MIT                      & --  \\
\bottomrule
\end{tabular}

\begin{tablenotes}
\item[1] \url{https://huggingface.co/stabilityai/stable-diffusion-xl-base-1.0}
\item[2] \url{https://huggingface.co/ByteDance/SDXL-Lightning}
\item[3] \url{https://huggingface.co/stabilityai/sdxl-turbo}
\item[4] \url{https://huggingface.co/lllyasviel/sd-controlnet-scribble}
\item[5] \url{https://huggingface.co/lllyasviel/Annotators}. 
Auxiliary annotator weights repository by the same author 
as ControlNet\tnote{4}; license not explicitly stated.
\item[6] \url{https://huggingface.co/openai/clip-vit-large-patch14}
\item[7] \url{https://github.com/ermongroup/SDEdit}
\item[8] \url{https://github.com/huggingface/diffusers}
\item[9] \url{https://github.com/pytorch/pytorch}
\item[10] \url{https://github.com/wandb/wandb}
\item[11] \url{https://huggingface.co/datasets/ylecun/mnist}
\item[12] \url{\hfmnist}
\item[13] \url{\repomain}
\end{tablenotes}
\end{threeparttable}
\end{table}

\ifpreprintmode
\else
  \clearpage
  \input{checklist.tex}
\fi

\end{document}

